\renewcommand{\algocf@makecaption@ruled}[2]{%
\global\sbox\algocf@capbox{\hskip\AlCapHSkip %
\addtolength{\algocf@lcaptionbox}{-2\AlCapHSkip}%
\parbox[t]{\algocf@lcaptionbox}{\algocf@captiontext{#1}{#2}}}%
}%
\DeclareMathOperator{\E}{E}
\DeclareMathOperator{\Var}{Var}
\DeclareMathOperator{\Eest}{\widehat{E}}
\DeclareMathOperator{\Varest}{\widehat{Var}}
\DeclareMathOperator*{\argmin}{arg\,min}
\DeclareMathAlphabet{\ts}{OT1}{cmss}{bx}{n}
\newcommand{\N}{\mathbb{N}}
\newcommand{\R}{\mathbb{R}}
\newcommand{\set}[1]{\left\{#1\right\}}
\newcommand{\br}[1]{\left(#1\right)}
\newcommand{\cl}[1]{\left[#1\right]}
\renewcommand{\epsilon}{\varepsilon}
\renewcommand{\mid}{\,\middle\vert\,}
\newcommand{\Sdiv}{\ensuremath{\mathrm{D}}}
\newcommand{\F}{\ensuremath{\mathrm{F}}\xspace}
\newcommand{\Fest}{\ensuremath{\widehat{\mathrm{F}}}\xspace}
\newcommand{\dL}{\ensuremath{d_{\mathbb{L}}}}
\newcommand{\lap}{\ensuremath{\textsc{LAP}}}
\let\oldnl\nl
\newcommand{\nlnonumber}{\renewcommand{\nl}{\let\nl\oldnl}}
\newcommand{\BF}[1]{
	\relax
	\ifmmode
	\ifcat\noexpand#1\relax %
		\boldsymbol{#1}     %
	\else
		\mathbf{#1}
	\fi
	\else
		\textbf{#1}
	\fi
}
\renewcommand{\rm}[1]{\ensuremath{\mathrm{#1}}}
\newcommand{\uar}[1]{\ensuremath{\textsc{u.a.r.}\!\br{#1}}}
\theoremstyle{thmstyleone}
\newtheorem{theorem}{Theorem}
\newtheorem{proposition}[theorem]{Proposition}
\newtheorem{assumption}{Assumption}
\newtheorem{requirements}{Requirements}
\newtheorem{formulation}{Problem Formulation}
\newtheorem*{formulation*}{Problem Formulation}
\theoremstyle{thmstyletwo}
\theoremstyle{thmstylethree}
\newtheorem{definition}{Definition}
\newcommand{\upd}[1]{#1}
\newcommand{\utwo}[1]{{#1}}
\newcommand{\uthree}[1]{{#1}}
\begin{document}


\title{Selection of Filters for Photonic Crystal Spectrometer Using Domain-Aware Evolutionary Algorithms}
\author{\begin{tabular}{c c c}
    Kirill Antonov$^{1}$ & Marijn Seimons$^{2}$ & Niki van Stein$^{1}$ \\
    Thomas H.~W. Bäck$^{1}$ & Ralf Kohlhaas$^{2,3}$ & Anna V. Kononova$^{1}$
\end{tabular}\\
$^{1}$Leiden University, The Netherlands\\
$^{2}$SRON Netherlands Institute for Space Research, The Netherlands\\
$^{3}$Delft University of Technology, The Netherlands
}

\maketitle


%



\begin{abstract}
This work addresses the critical challenge of optimal filter selection for a novel trace gas measurement device. This device uses photonic crystal filters to retrieve trace gas concentrations prone to photon and read noise. The filter selection directly influences the accuracy and precision of the gas retrieval and therefore is a crucial performance driver.
We formulate the problem as a stochastic combinatorial optimization problem and develop a simulator \utwo{modeling} gas retrieval with noise. 
Metaheuristics representing various families of optimizers are used to minimize the retrieval error objective function.
\utwo{We improve the found top-performing algorithms using our novel distance-driven extensions, which employ metrics on the space of filter selections}.
This leads to a novel adaptation of the Univariate Marginal Distribution Algorithm (UMDA), which we call the Univariate Marginal Distribution Algorithm Unified by Probabilistic Logic Sampling driven by Distance (UMDA-U-PLS-Dist), equipped with one of the proposed distance metrics as the most efficient and robust solver among the considered ones. 
\utwo{We apply this algorithm to obtain a diverse set of high-performing solutions and analyze them to make general conclusions of the better combinations of transmission profiles.}
\utwo{Analysis reveals} that filters with large local differences in transmission improve the device performance.
Moreover, \utwo{the obtained top-performing solutions show} significant improvement compared to the baseline. 
\end{abstract}

\keywords{Photonic crystal \and Trace Gas Measurements \and Spectrometer \and Metaheuristic \and Evolutionary Computation \and Domain-Aware Optimization \and Stochastic Combinatorial Optimization \and Simulation Optimization}




\section{Introduction}
\label{sec:introduction}

As global climate change severely impacts our world, there is an increasing demand to monitor trace gases with a high spatial resolution and accuracy.  
At the same time, these instruments and satellites need to be compact in order to have constellations for short revisit times~\cite{Pastena2020}. %
In this study we explore a new spectrometer instrument concept to detect trace gases where photonic crystal filters directly placed on a detector replace traditional diffraction-based optical elements~\cite{siemons2023compressive}. 


\utwo{Trace gasses in the atmosphere slightly change the spectrum of the transmitted light due to absorption at specific wavelengths.}
The instrument is able to perform trace gas retrieval by measuring the total amount of reflected light each filter transmits for a certain ground pixel. 
By analyzing the measured intensities, it is possible determine the concentration of trace gases.
The performance of the instrument, measured as accuracy and precision in gas retrieval, depends on the signal-to-noise ratio and significantly on the selected filters.
Therefore, a key challenge is to select the filters that will most accurately measure the trace gas concentration.
To achieve this, we simulated how 2D photonic crystals respond to light, \utwo{creating a library of about 5000 different filters in various shapes and sizes.} 
Given that the considered instrument can have at maximum 640 filters, we reduce the problem to the selection of an optimal combination of filters from the created library. 
In this study, we are specifically focusing on detecting methane as a test case.
Therefore, we can measure the instrument's performance by comparing the detected methane concentration with the actual concentration.
While we concentrate in this work on the optimization of a trace gas imager for earth observation, the optimization algorithm introduced in this paper could be transferred to other instruments with nanostructured filter arrays, such as in biological imaging~\cite{xiong2022dynamic}, mid-infrared gas detection~\cite{meng2024smart} and flow cytometry~\cite{hong2024metasurface}.

\upd{
This paper addresses the described problem of selecting from a large library of approximately $L \approx 5 \cdot 10^3$ filters a combination that optimizes the instrument's performance in methane retrieval.
Unlike a set, where each element is unique, filters in this selection can be chosen multiple times, i.e., a solution candidate is a multiset with $N$ elements chosen from $L$.
For the remainder of this paper, we refer to this core problem as Optimal Filter Selection (OFS).
The problem is addressed by an optimization algorithm that tackles both challenges: \utwo{the prohibitively large number of possible solutions and the presence of noise introduced by a realistic atmospheric model.}
The latter is a special challenge, as many evaluations of different filter sets can result in outliers, eroding the convergence of the optimization algorithm.
Therefore, it is most natural to formulate OFS as a stochastic combinatorial optimization problem. %
}

\subsection{Combinatorial Optimization}
\label{sec:comb-optim}

The field of combinatorial optimization~\cite{korte2011combinatorial, du2022introduction} focuses on designing algorithms to find one or more objects from a finite set that minimize a specified cost criterion. 

\utwo{
According to a high-level classification of combinatorial optimization algorithms~\cite{du2022introduction}, these methods can be broadly divided into the following three categories:}

\begin{enumerate}
    \item \utwo{\textbf{Exact algorithms}, which aim to compute the optimal solution with mathematical guarantees. These include classical methods such as branch-and-bound, branch-and-cut, and dynamic programming, and are extensively studied in~\cite{korte2011combinatorial, du2022introduction}. While they provide optimality, their computational cost often limits their applicability to small or moderately sized instances.}
    
    \item \utwo{\textbf{Approximation algorithms}, which provide provably near-optimal solutions within a guaranteed error bound. These algorithms are especially useful for NP-hard problems where exact computation is infeasible, and are covered in depth in~\cite{du2012design}.}

    \item \utwo{\textbf{Heuristic and metaheuristic algorithms}, including evolutionary algorithms, simulated annealing, and ant colony optimization. These are state-of-the-art numerical solvers that are widely applied in practice due to their ability to find high-quality solutions for large-scale and complex problems, albeit without formal optimality guarantees. Comprehensive reviews can be found in~\cite{blum2003metaheuristics, peres2021combinatorial}.}
\end{enumerate}

Algorithms in the former two categories are conventional combinatorial optimization methods (CCOMs).
They search via mathematical or dynamic programming and employ heuristic ideas to disregard intuitively sub-optimal solutions.
Particular instances of CCOMs are implemented to solve a specific known COP and therefore heavily depend on the known structure of the addressed problem.

The third category of algorithms includes methods that apply nature-inspired meta-heuristics~\cite{rahman2021nature} and machine learning~\cite{zhang2023survey} approaches to find solution candidates with sufficiently small costs.
While these optimizers do not guarantee to locate the optimal solution, they are less dependent on the constraint nature of the problem and so can be applied to different classes of COPs without major modifications.
Successful applications of these methods are demonstrated in works such as~\cite{slowik2020evolutionary, herrero2008non, zhou2021novel}.
Those and other relevant research directions are systematically studied in this recent survey~\cite{weinand2022research}.

\subsection{Stochastic Combinatorial Optimization}
\label{sec:stochastic-comb-optim}

Stochastic combinatorial optimization problems (SCOPs) introduce an additional layer of complexity: the cost associated with a specific solution candidate exhibits randomness, meaning it may vary upon reevaluation.
This property is especially relevant for engineering applications as the cost is estimated by a simulation of a real-world process, for example, see Part IV in~\cite{andradottir1998simulation}.
Application of an algorithm to a SCOP that relies on a simulation is called simulation optimization (SO) in the literature~\cite{andradottir1998simulation, fu2015handbook, amaran2016simulation}, since a closed-form mathematical expression is not available to describe the optimized function.
SO can be classified as a particular case of a more general Black-Box Optimization task~\cite{alarie2021two}.
CCOMs can be applied as simulation optimizers, for example, see Chapter 2 of~\cite{fu2015handbook}. 
However, rapidly developing areas of relevant meta-heuristic and machine learning methods~\cite{bianchi2009survey} contain alternative promising algorithms.
As shown in Table 1 of~\cite{bianchi2009survey}, \utwo{those algorithms were} advantageous in a number of practical use cases.
See the survey~\cite{do2022metamodel} for more examples and a fresh systematic review of the related publications.

\subsection{Addressing OFS problem}
\label{sec:addressing-ofs}

In this paper, we address the problem of filter selection as an instance of SCOP. %
Due to the highly complex relation between the selected filters and measured trace gas concentration, we do not study this relation directly.
Instead, we approach our goal using SO and in this regard, we implemented a simulator of the Trace Gas Measurement Device (TGMD).
Our simulator constitutes an approximate model of gas tracing.
More details on TGMD and the simulator are provided in Section~\ref{sec:description-trace-gas-measurement-device}.

While CCOMs for OFS might exist, the complexity of deriving them may not be practical. 
Therefore, we leverage the efficiency of numerical methods, such as meta-heuristics and machine learning, to obtain a \utwo{high-quality} approximate solution to OFS.
\uthree{Meta-heuristics require neither explicit analytical models nor gradients, and have a long track record of delivering high-quality solutions on complex, noisy, and nonconvex problems across many application domains as discussed in Sec.~\ref{sec:comb-optim},~\ref{sec:stochastic-comb-optim}.
They are also flexible and easily extensible because they are robust to changes in the simulator or constraint set.
Therefore, addressing the OFS problem with meta-heuristics is the best available approach, which we employ in this work.}

Given the vast number of existing general-purpose methods, we use the latest (known to us) comprehensive survey~\cite{amaran2016simulation}.
This survey provides a clear description of various methods applicable to SO, helping us in selecting optimizers for OFS task.
Independent of our choice, it is guaranteed by the No Free Lunch (NFL) theorem that none of the selected optimizers is the best on average for a big enough class of problems, meaning, a 
class of problems closed under the permutations of the search space~\cite{schumacher2001no}. %
Therefore, knowing the best optimizer for OFS in advance is intractable.
Hence, we follow the approach suggested in Section 2.7 of~\cite{peres2021combinatorial}.
We conduct experiments where we repeatedly approximate the solution of the OFS problem using every chosen algorithm.
This allows us to statistically rank the performance of each algorithm and identify the most suitable ones for our specific problem.
We call the top-performing solvers the \emph{leading} algorithms.

Since the optimal solution of OFS is
unknown, developing algorithms that outperform the best existing solvers is relevant for deeper exploration of OFS.
Moreover, in the future, we plan to improve our simulator of TGMD to be even closer to the real-world, 
which will result in increased computational complexity of the simulations.
Therefore, developing an optimizer capable of finding solutions to OFS within a small computational budget is relevant for finding low-cost approximate solutions to the enhanced OFS in a feasible time frame.
For those two reasons, we aim to improve leading solvers and explore a possible approach for doing so.
We propose extensions of the leading algorithms to take advantage of domain-specific information. 
Such information will be formulated as metrics on the set of filters.

The rest of the paper is organized as follows:
In Section~\ref{sec:description-trace-gas-measurement-device} we provide an overview of the developed Trace Gas Measurement Device and its simulation.
Then, in Section~\ref{sec:statement} we formalize the OFS problem and define the cost criterion.
An overview of existing approaches in numerical optimization which we selected to solve OFS is given in Section~\ref{sec:optimization}.
The leading algorithms are outlined in Section~\ref{sec:optimization:comparison}.
We define the information specific to OFS, propose methods to embed this information in the leading algorithms and analyse the obtained solvers in Section~\ref{sec:dd}.
Finally, we study the obtained high-performing OFS solutions in Section~\ref{sec:solutions}.
        \textbf{Our contribution:}
        \begin{enumerate}
            \item \utwo{Compare  meta-heuristics that represent various classes of SCOP solvers, and select the solvers with the best performance on OFS};
            \item We express specific knowledge regarding the space of filters as a metric  (in the mathematical sense), propose extensions of algorithms selected in step 1, and apply domain-aware meta-heuristics to OFS;
            \item \utwo{To implement a key step required for obtaining an effective mutation operator in this distance-driven evolutionary framework, we design a heuristic method\utwo{, called \emph{DDA-EA},} to approximately solve the inverse of the Linear Assignment Problem;}
            \item \utwo{We introduce a novel method, \emph{UMDA-U-PLS-Dist}, which incorporates a user-defined distance metric to the Estimation of Distribution Algorithm to guide the search process more effectively, enhancing the exploitation of problem-specific structure.}
            \item We \utwo{obtain a diverse set of high-performing solutions to the OFS problem and analyze them to make general conclusions of the better combinations of transmission profiles.} 
        \end{enumerate}

\section{Description of the Trace Gas Measurement Device}
\label{sec:description-trace-gas-measurement-device}

The Trace Gas Measurement Device (TGMD), \utwo{shown in  Figure~\ref{fig:instrumentconcept}} consists of an imaging telescope, where photonic crystal filters\upd{, acting as transmission filters,} are placed directly on the detector.
\upd{
As the instrument flies over the earth, the spectrum of light of a single ground pixel is projected on to pixels with different optical filters. Each pixel then measures the total transmitted intensity for this filter. 
Effectively, the spectrum of light is mapped to the (compressed) bases of the transmission profiles of the filters. 
The trace gas concentration can then be retrieved directly from the different measured integrated intensities. 
In this study we focus on methane retrieval as an initial test case. 
The detector in question has 512$\times$640 pixels and is orientated such that the 512 pixels are perpendicular to the flight direction and the 640 pixels are along the flight direction. 
A single ground resolution element is then observed by 640 pixels as the instrument flies over the earth. 
Therefore, there can be at maximum 640 filters placed on the detector, one for each pixel. 
However, current practical limitations such as manufacturing constraints on the distance between the filters and the detector and electromagnetic interactions between neighboring filters limits the number to 16 different filters. 
This number of filters still contains redundancy in case of in-flight variations of the filter response and unexpected biases due to mismatches in the atmosphere model and imperfect supporting (weather) data, such as wind speeds.
}

\begin{figure}
\centering
\includegraphics[width=0.5\textwidth]{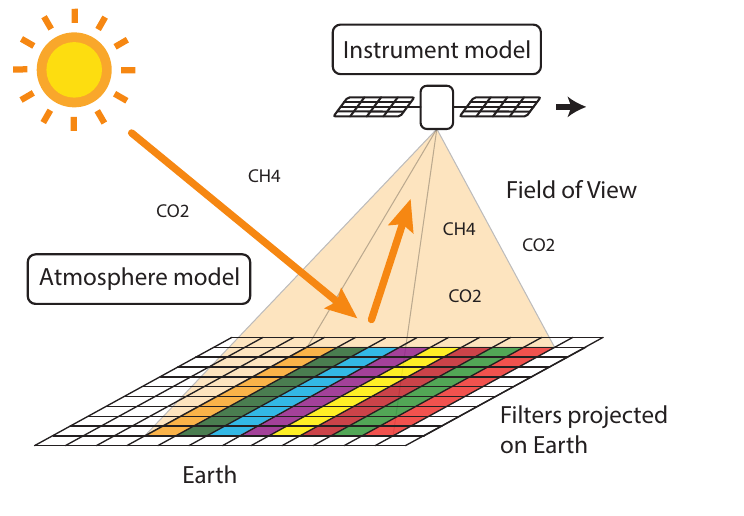}
\caption{Illustration of the trace gas measurement device concept. Sunlight travels through the Earth's atmosphere, is reflected and is measured by the trace gas measurement device by a camera with different photonic crystal filters. As the instrument flies over the earth, each ground pixel is measured with each filter.}
\label{fig:instrumentconcept}
\end{figure}

\begin{figure}
\centering
\includegraphics[width=0.5\textwidth]{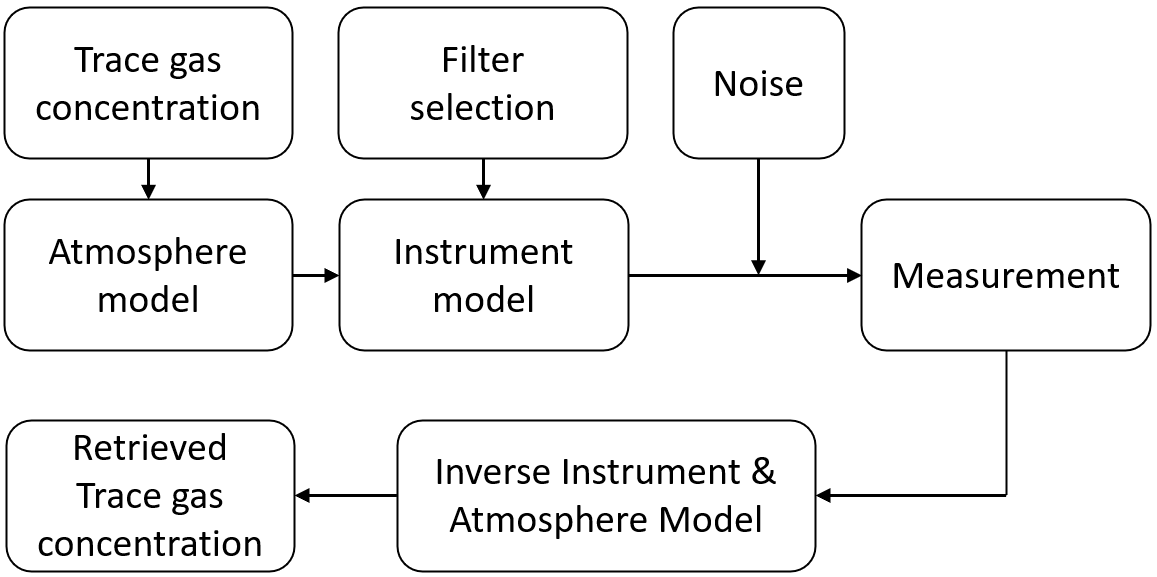}
\caption{Diagram of the trace gas measurement device description.}
\label{fig:flowdiagram}
\end{figure}

\upd{As the light is incident on the filters}, different integrated intensities for each filter are measured for a single ground resolution element. From this detector data, the gas concentrations can be retrieved. The retrieval model is described in previous work~\cite{Kohlhaas2024}. %
In short, the trace concentration is used as input for the atmosphere model, see Figures~\ref{fig:instrumentconcept} and~\ref{fig:flowdiagram}. The atmosphere model produces the spectrum reflected by the Earth, which is used as input for the instrument model, taking into account the specific filters, which compute the measured values to which the photon and read noise are added. 
From these noisy measurement values, an inverse model of the instrument and atmosphere are used to retrieve the trace gas concentration using a 
fitting routine. By performing this fit for different noisy realizations, the fit precision of the trace gas concentration is assessed. This precision is the performance indicator of the instrument concept and should be as low as possible.

In general, the achieved gas concentration retrieval precision depends on many factors such as optical aperture and ground resolution. The specifications and parameters defining the final performance of the instrument are chosen to be reasonable and achievable. Therefore, the achieved precision after filter selection optimization is a representative and likely achievable retrieval precision. Currently, the expected retrieval error of TANGO (Twin Anthropogenic Greenhouse gas Observers),   %
an ESA Scout mission to measure methane, which consists of a traditional state-of-the-art methane retrieval instrument, is below 1.0\%~\cite{TANGO}. However, the optical design of the proposed instrument using the photonic crystals is expected to be \textit{5-10 times smaller}. 

Prior to the development of optimization algorithms, we constructed a reasonable filter selection, which we will refer to as \emph{SRON baseline}. As this initial solution, we choose filters based on the sharpness of the spectral transmission features. We used the second moment of the Fourier transform of the transmission profile to rank all filters. Filters with sharp spectral features would result in large amplitudes for the high-frequency components. From this ranked list, 160 filters are chosen with the largest second moment, and this filter set is used as a reference solution. 

\section{Formulation of OFS problem}
\label{sec:statement}

The goal of OFS is to find filters from the library that together result in the smallest value of the given cost criteria.
In this section, we make this general formulation concrete.
At first, a specific version of the \emph{search space} of all possible solutions of OFS, 
is proposed in  Section~\ref{sec:searchSpace}.
Then the concrete \emph{objective function}, which defines the cost of every solution, is proposed in Section~\ref{sec:objf}.
We interchangeably refer to the value returned by the objective function as to the cost or quality.
Notably, we always consider the minimization of this function, therefore we refer to the solutions with small values of the objective function as high-performing or high-quality.

\subsection{Search Space}
\label{sec:searchSpace}


The search space is defined as sequences of size $N = 640$ of the filters from the library $\mathbb{L}, L \coloneqq |\mathbb{L}| = 4374.$
One filter can be \utwo{included} multiple times in the sequence.
Let us denote the space of all such sequences as $\mathbb{L}^N$.
We assume that the cost criterion is given as a function $F_N : \mathbb{L}^N \to \R.$
Our goal is to minimize this criterion, meaning find such a sequence $\bm{x}^* \in \mathbb{L}^N$ that $F_N\!\br{\bm{x}^*} = \min\!\set{F_N(\bm{x}) \mid \bm{x} \in \mathbb{L}^N}.$

\utwo{Although heuristic algorithms are specifically designed to navigate large and complex search spaces without exhaustive enumeration, their effectiveness is still fundamentally limited by the size and structure of the search space. In extremely large combinatorial spaces such as $\mathbb{L}^N$, even heuristics face difficulties due to several factors: the high likelihood of getting trapped in local optima, the sparsity of high-quality solutions relative to the total number of candidates, and the limited number of evaluations typically allowed due to computational constraints. As a result, the probability of encountering a solution of reasonably good quality becomes increasingly small as $N$ grows, especially in the absence of strong guidance mechanisms. This motivates our restriction to a smaller subspace $\mathbb{L}^M$, with $M \ll N$, where heuristic search becomes more tractable while still retaining sufficient diversity for effective optimization.}
Limiting the number of filter types provides a practical benefit: reduced manufacturing costs due to a less complex production process for a smaller variety of different filters, as discussed in Section~\ref{sec:description-trace-gas-measurement-device}.

We encode filters in the set $\mathbb{L}$ by integers $1, 2, \ldots, L.$
Given a vector of $M$ filters, we create a vector of $N$ filters by repeating each filter multiple times.
More precisely, if $k \coloneqq \lfloor N / M \rfloor$ and $r \coloneqq N - k \cdot M, $ then the first $r$ filters are repeated $k + 1$ time and the rest $M - r$ filters are repeated $k$ times.
We can see that the size of the final selection is exactly $N$, indeed $r \cdot (k + 1) + (M - r) \cdot k = r + k \cdot M = N - k \cdot M + k \cdot M = N.$
For example, if $N = 5, M = 3$, then $k = 1, r = 2.$
If the selection of size $M$ is $(2, 4, 7)$, then the restored selection of size $N$ is $(2, 2, 4, 4, 7).$
Note that $r = 2$ first filters, meaning filters $2$ and $4$, are repeated $k + 1 = 2$ times and the last filter $7$ is repeated $k = 1$ time.

Let us denote the mapping that does this transformation as $\Phi : \mathbb{L}^M \to \mathbb{L}^N.$
Given a vector of filters $\bm{x} \in \mathbb{L}^M$ we can assess its cost by $\F : \mathbb{L}^M \to \R, \F \coloneqq \F_N \circ \Phi.$ 
Notation $f \circ g(x)$ stands for $f(g(x))$.
Now, we are ready to define the problem in the form, which is addressed in this paper.

\begin{formulation}[OFS problem in the form addressed in this work]
For the given constant $M$, let $\mathscr{L} = \set{\bm{x} = (x_1, x_2, \dots, x_M) \mid x_i \in \mathbb{L}}$ be the space of candidate solutions.
For function $\F : \mathscr{L} \to \R$ such that $\F(\bm{x}) = \F_N \circ \Phi (\bm{x})$, OFS problem is to find $\bm{x}^* \in \mathscr{L}$ which satisfies the Eq.~\eqref{eq:ofs-optimization-main}:

\begin{subequations}
    \begin{numcases}{}
        \F(\bm{x}^*) = \min_{\bm{x} \in \mathscr{L}}{\F(\bm{x})} \label{eq:ofs-optimization:goal} \\
        M = |\set{x_1, x_2, \dots, x_M}| \label{eq:ofs-optimization:different}
    \end{numcases}
    \label{eq:ofs-optimization-main}
\end{subequations}
\label{formulation:ofs-reduced}
\end{formulation}

The cardinality of the space $\mathscr{L}$ equals $L^M$, which is large in our particular instance of OFS: $L \approx 5000, M = 16, L^M \approx 10^{59}$. 
However, all permutations of components in a candidate solution $\bm{x} \in \mathscr{L}$ result in the same vector, so they all have the same associated cost.
We show how to exploit this property for numerical optimization in Section~\ref{sec:dd}.

The OFS in the form defined in Formulation~\ref{formulation:ofs-reduced} is convenient for \utwo{heuristic optimization}, since every component in a candidate solution can be considered independently from others.
\utwo{From a practical perspective, the constraint in Eq.~\eqref{eq:ofs-optimization:different} is soft, meaning that minor violations are acceptable. This is because solutions with an arbitrary number of filters can still be manufactured in practice. The restriction to exactly $M$ filters is introduced primarily to reduce the combinatorial complexity of the search space, rather than to enforce a strict physical limitation.}

\subsection{Design of the Objective Function}
\label{sec:objf}

This section \utwo{proposes} a criterion to estimate the quality of the candidate solution.
Let us denote the set of concentrations of different gases as $\mathscr{C}$ and the set of all other possible configurations of TGMD as $\mathscr{R}$.
\upd{Example of configurations in $\mathscr{R}$ are parameters of the exact gas concentration retrieval algorithm}.
\utwo{Atmospheric noise is modeled by a random variable $\xi$ drawn from a noise space $\Omega$. 
A new instance of $\xi$ is sampled from $\Omega$ each time a gas measurement simulation is performed, reflecting the stochastic nature of atmospheric variability.}
In our approximation \upd{of gas receiving by TGMD}, we consider the environment \upd{where TGMD is located} to be homogeneous, \upd{meaning that noise does not depend on the time of the measurement.}
Therefore, the probability measure $P\left(\set{\xi}\right)$ is chosen in advance and does not depend on time for all $\xi \in \Omega$.
When all the parameters of the gas measurement device are fixed, it is modeled as a function $$\mathrm{S}: \mathscr{L} \times \mathscr{R} \times \mathscr{C} \times \Omega \to \mathscr{C}.$$
In the context of our application, we consider only methane as a gas to detect, so the set $\mathscr{C}$ contains only different concentrations of methane.
Therefore, in our case $\mathscr{C} \subset \R$.
Since object $\xi \in \Omega$ is taken by chance, the function $\mathrm{S}$ defines a stochastic process. %
%
%
For the convenience of the reader, we gather all notation used here in Table~\ref{tbl:notation}.

\begin{table*}
    \caption{Numenclature used in this paper.}
    \label{tbl:notation}
\scalebox{0.9}{
    \begin{tabular}{p{0.3\linewidth}p{0.75\linewidth}}
        Notation & Meaning \\
        \hline
        $\mathbb{L}$ & set of all available filters; \\
        $L$ & $|\mathbb{L}| = 4374$, number of available filters; \\
        $N$ & $640$, the length of the sequence of filters; \\
        $M$ & $16$, the dimensionality of candidate solutions to OFS\utwo{, which is the number of selected filter types}; \\
        $K$ & $10^3$, number of independent samples of a random variable; \\ 
        $\mathscr{L}$ & $\mathbb{L}^M$, search space of OFS; \\
        $\dL$ & metric on the space $\mathbb{L}$; \\
        $\mathrm{S}(\bm{x}, \bm{r}^*, c^*, \xi)$ & function which denotes the result of the recovery of a certain gas; \\
        $\Sdiv(\bm{x}, \xi)$ & stochastic process defined in Eq.~\eqref{eq:Sdiv} that assess\utwo{es} the quality of solution $\bm{x}$ to OFS problem; \\
        $\bm{x}, \bm{y}, \dots$ & lowercase bold letters denote vectors; \\
        $\bm{X}, \bm{Y}, \dots$ & capital bold letters denote matrices; \\
        $x_i, y_i, \dots$ & letter of this font with subscript denotes the $i$-th element of vectors $\bm{x}, \bm{y}, \dots$; \\
        $\bm{X}_{i, :}, \bm{Y}_{i, :}, \dots$ & this subscript denotes the $i$-th row of matrices $\bm{X}, \bm{Y}, \dots$; \\
        $X_{i, j}, Y_{i, j}, \dots$ & letter of this font with subscript denotes the number in $i$-th row and $j$-th column of matrices $\bm{X}, \bm{Y}, \dots$; \\
        $\rm{x}, \rm{y}, \rm{X}, \rm{Y}, \dots$ & letters of this font denote random variables; \\
        $\BF{x}, \BF{y}, \dots$ & letters of this font denote random vectors; \\
        ${\BF{X}}, {\BF{Y}}, \dots$ & letters of this font denote random matrices; \\
        $\xi^{(i)}, \bm{x}^{(i)}, \BF{x}^{(i)},$ $\BF{X}^{(i)}, \dots$ & superscript denotes the number of the object. Objects with different numbers but the same letter belong to the same domain;  \\
        $\F(\bm{x}), \Fest_K(\bm{x})$ & objective function of OFS task \F and its noisy estimate \Fest given $K$ following samples: $\set{\Sdiv^2\!\br{\bm{x}, \xi^{(i)}}}_{i=1}^K$; \\ 
        $\E(\mathrm{x}), \Eest_K(\mathrm{x})$ & mean and sample mean of a random variable $\mathrm{x}$ given its $K$ samples; \\
        $\Var(\mathrm{x}), \Varest_K(\mathrm{x})$ & variance and sample variance of a random variable $\mathrm{x}$ given its $K$ samples; \\
        $\overline{a,b}$ & set of integers $\set{a, a+1, \dots, b}$; \\
        $\uar{\mathbb{X}}$ & select an element uniformly at random from set $\mathbb{X}$; \\
        $\uar{a, b}$ & select a real value uniformly at random from the segment $[a,b] \cap \R$.
    \end{tabular}
}
    
\end{table*}

As described in Section~\ref{sec:description-trace-gas-measurement-device}, we aim to locate the best filters $\bm{x} \in \mathscr{L}$ for the single gas concentration $c^* \in \mathscr{C}$ and single configuration of the system $\bm{r}^* \in \mathscr{R}$.
Therefore, we assume $c^*$ and $\bm{r}^*$ fixed and so simplify the notation to define a function $F$ that qualifies the performance of the chosen $\bm{x} \in \mathscr{L}$.
This function is defined on the space $\mathscr{L}$ and returns a real value from $\R$, which denotes the error made in gas retrieval.
\begin{requirements}Informally, filter set $\bm{x}$ should be qualified better than $\bm{y}$ by the function $F$ if $\bm{x}$ leads to the same or better precision in the retrieved concentration of gas as $\bm{y}$, but with a smaller ``\upd{uncertainty}''.
\label{req:objf-informal}
\end{requirements}
This requirement is motivated by a wish to obtain a solution that allows for precise detection of the gas with strong confidence.
Now we will give a more formal interpretation of our intuitive requirement.

For the fixed in advance $\bm{r}^*$ and $c^*$, let us introduce in Eq.~\eqref{eq:Sdiv} a stochastic process parameterized by $\bm{x} \in \mathscr{L}$: 
\begin{equation} \Sdiv(\bm{x}, \xi) \coloneqq 1 - \dfrac{\mathrm{S}(\bm{x}, \bm{r}^*, c^*, \xi)}{c^*}. \label{eq:Sdiv} \end{equation}
Here we omit $c^*$ and $\bm{r}^*$ in arguments of $\Sdiv$ because we assume them to be fixed in advance, and $\xi$ is decided by chance.
Given the probabilistic measure $P$, we express the precision in the gas retrieval when filters $\bm{x} \in \mathscr{L}$ are used, as the mathematical expectation of the random variable $\Sdiv(\bm{x}, \xi)$, written as $\E\cl{\Sdiv\br{\bm{x}, \xi}}$.
The \upd{uncertainty in the gas retrieval} is then expressed as the variance of the random variable $\Sdiv(\bm{x}, \xi)$, written as $\Var\cl{\Sdiv\br{\bm{x}, \xi}}$.                                                         
Following the conventional definition of Stochastic Integer Programming (see Definition 3 in~\cite{bianchi2009survey}), we define the objective function in Eq.~\eqref{eq:objfunction} to assess the quality of the given selection of filters $\bm{x} \in \mathscr{L}$:
\begin{equation}
    \F(\bm{x}) \coloneqq \E\left[\Sdiv^2(\bm{x}, \xi)\right].
    \label{eq:objfunction}
\end{equation}

The proposed OFS objective Eq.~\eqref{eq:objfunction} both meets our intuitive requirements and admits an analytic characterization of the mean-variance trade‑off. 
We propose an analysis of its properties in the~\ref{sec:objf:properties}, and here we keep only the conclusions. 
The function compromises the variance and mean of the retrieved gas concentration.
In the conflicting case of these objectives, we propose the necessary and sufficient conditions for qualifying a solution as better than others. 
Together, these results guarantee how much increase in mean is required to compensate any rise in variance, providing justification for the behavior of $F$ in selecting certain filter combinations.

\subsubsection{Computation of the Objective Function on a Given Solution}
\label{sec:objf:noise}
It is not possible to compute moments of random variable $\mathrm{D}^2(\bm{x})$ precisely due to \utwo{absence of the analytical form of} the simulation $\mathrm{S}$. 
In this regard, we estimate the value of $\F(x)$ using the sample mean.
Specifically, we sample $K$ times the value of random variable $\Sdiv^2(\bm{x}, \xi)$ and compute an approximation $\Eest_K\left[\Sdiv^2(\bm{x}, \xi)\right]$ of $\E\left[\Sdiv^2(\bm{x}, \xi)\right]$ by finding the arithmetical average of the sampled values.
Then, the approximation of $\F(\bm{x})$ is \begin{equation} \Fest_K(\bm{x}) \coloneqq \sum\limits_{i=1}^K \dfrac{\Sdiv^2\!\br{\bm{x}, \xi^{(i)}}}{K} . \label{eq:approx-objf} \end{equation}

Following our definition, $ \Fest(\bm{x}, K)$ is a noisy function, where the order of magnitude of noise depends on the value of $K$ used in the approximation. 
In order to study this random variable, we impose the following Assumption~\ref{assumption:varience}.

\begin{assumption}
    For every $\bm{x} \in \mathscr{L}$ the value of $\Var\! \left[ \Sdiv^2(\bm{x}, \xi) \right]$ is finite.
    \label{assumption:varience}
\end{assumption}

This is a realistic assumption because the distributions of random variables $\Sdiv(\bm{x}, \xi)$ are close to normal for several $\bm{x}$ that we considered, for example, \utwo{it becomes clear later in} Figure~\ref{fig:noise-comparison}. 
Therefore, we can reasonably assume that the distribution of random variable $\Sdiv^2(\bm{x}, \xi)$ is not heavy-tailed for every $\bm{x} \in \mathscr{L}$.

Application of Central Limit Theorem (CLT)~\cite{parthasarathy2005introduction, casella2024statistical} gives us that the distribution of noise converges to the Gaussian distribution: 
\begin{equation}
\Fest_K(\bm{x}) \xrightarrow{d} 
\mathcal{N}\!\left(\E\!\left[\Sdiv^2(\bm{x}, \xi)\right], \dfrac{\Var\!\left[\Sdiv^2(\bm{x}, \xi)\right]}{K}\right) . 
\label{eq:sample-mean-normality}
\end{equation}
It means that the distribution of $\Fest_K(\bm{x})$ can be approximated with Gaussian distribution when a big enough value of $K$ is used.
In this case, when $T$ samples $\left(f_i\right)_{i=1,2,\dots,T}$ of random variable $\Fest_K(\bm{x})$ are observed, the maximum likelihood estimator of mean in Gaussian Distribution is known to be: $$\sum\limits_{i=1}^T \dfrac{f_i}{T} = \Fest_{(K\cdot T)}(\bm{x}).$$
In this regard, we consider values of $\Fest_K(\bm{x})$ with a big value of $K$, specifically $K = 10^3$. 

When, for some $\bm{x}, \bm{y} \in \mathscr{L}$, the difference between $\Fest_{K_1}(\bm{x})$ and $\Fest_{K_2}(\bm{y})$ becomes small, we use Welch's unequal variances \textit{t}-test~\cite{welch1947generalization} to reject or accept the null hypothesis that the estimated means are different.
This test imposes the constraint that the sample means of random variables are normally distributed, which is satisfied in our case due to the chosen large value of $K$ in Eq.~\eqref{eq:sample-mean-normality}.
This test was chosen because 1) we do not assume that the variances of the compared random variables are equal, and 2) the test allows us to compare populations with different numbers of samples, meaning $K_1 \ne K_2$, which we find convenient for implementation.
\upd{We apply this test in Algorithm~\ref{alg:n} described in Section~\ref{sec:metrics-cmp}.}

\section{Optimization of OFS problem}
\label{sec:optimization}

A representative selection of metaheuristics and Machine Learning approaches is considered to solve the OFS formulated in Section~\ref{sec:searchSpace} in the Problem Formulation~\ref{formulation:ofs-reduced} with the objective function \F defined in Section~\ref{sec:objf}.
The choice of these numerical methods is justified for the reasons discussed in Section~\ref{sec:introduction}.
We discuss the particular selected algorithms and then compare the results of their application to OFS in Section~\ref{sec:optimization:comparison}. Sections~\ref{sec:optimization:selection:GAs},~\ref{sec:optimization:selection:RSM},~\ref{sec:optimization:selection:MBs} correspond to the classes of algorithms defined in~\cite{amaran2016simulation}, which provides a comprehensive review of simulation optimization algorithms and their applications, ensuring that our selection of methods is grounded in established best practices within the field.

We acknowledge the possibility that there might be other optimization algorithms, not considered in this study, that could potentially identify even better solutions to OFS within the given computational limits.
However, the chosen algorithms demonstrate the effectiveness of applying optimization techniques to our OFS problem.
Notably, several algorithms outperform the baseline approach described in Section~\ref{sec:description-trace-gas-measurement-device}.
This success establishes the feasibility of efficient OFS optimization within the defined time constraints and highlights the potential for obtaining high-quality solutions.
Future work can investigate both unaddressed optimization algorithms and more advanced versions of the ones already considered.
This focus on further improvement becomes even more relevant as we plan to enhance the realism of our simulator, leading to more computationally expensive simulations.

\subsection{Evolutionary Algorithms}
\label{sec:optimization:selection:GAs}

Randomized search heuristics (RSH) are off-the-shelf algorithms to solve complex optimization problems~\cite{auger2011theory, rozenberg2012handbook}.
Such algorithms define the distribution over candidate solutions, sample populations of candidates from the distribution and usually adjust the distribution based on the qualities of the sampled candidates.
Evolutionary Algorithms~\cite{rozenberg2012handbook, jansen2013analyzing} are nature-inspired RSH that use metaphors of evolution.
These algorithms produce descendant candidate solutions by a sequence of reproductions, selections, recombinations, and mutations.
The quality of those solutions is evaluated by the objective function.

The pseudo-boolean search space $\set{0, 1}^n$ for fixed constant $n \in \N$ is usually considered in the context of discrete optimization.
$(1 + 1)$ EA is an Evolutionary Algorithm, which was rigorously analyzed on several pseudo-boolean toy problems, for example, see~\cite{jansen2013methods, doerr2020theory}.
The algorithm maintains a candidate solution $\BF{x}_{\text{best}} \in \{0, 1\}^n$ with the smallest (in the case of minimization) observed value of the objective function $F$.
The optimization process works in multiple iterations.
At every iteration of the discussed $(1+1)$ EA, a number $k$ is sampled from Binomial Distribution $\mathcal{B}(n, r/n)$.
Here $n$ is the size of the vector, which represents the candidate solution, and $r \in (0, n]$ is a real constant, which represents the \emph{mutation rate}.
If $k = 0$, then we change it to 1 (so-called ``shift'' strategy), since it is more efficient for practical purposes~\cite{pinto2018towards}.
Then $k$ components in the solution $\BF{x}_{\text{best}}$ are chosen uniformly at random.
A new candidate solution $\BF{x}_{\text{new}}$ is created by flipping the bits in each of the $k$ chosen components.
The new candidate solution $\BF{x}_{\text{new}}$ is taken as new observed solution if and only if $F(\BF{x}_{\text{best}}) \ge F(\BF{x}_{\text{new}})$.
Such iterations repeat until the computation budget is exhausted, or other termination conditions are satisfied.

\subsubsection{$(\mu + \lambda)$ EA}
\label{sec:ea-simple}

\utwo{In this paper we study a variant of the $(1+1)$ EA for integer variables as proposed in~\cite{doerr2018static}. Each variable can take any integer value from 1 to $c$. During mutation, a selected variable is replaced by an integer chosen uniformly at random from 1 to $c$. In~\cite{doerr2018static} this operator is called uniform mutation and it includes a check that the new value must differ from the old one. In our implementation we skip that check. This means we allow the new value to be the same as the old one. Since $c$ is large and equals the number of filters $L$, the probability of drawing the same value twice is very small. Apart from this simplification, we follow the standard $(1+1)$ EA rule for choosing how many variables to mutate.}

$(\mu + \lambda)$ EA is an implementation of this algorithm with a bigger population size for solving OFS. 
In this algorithm:
\begin{itemize}
    \item Population Size $\mu$: a population of $\mu$ candidate solutions is maintained.
    \item Selection and Mutation: during each iteration $\lambda$ offspring are created. 
    Each offspring is generated in two steps: 1) a single parent solution is selected uniformly at random from the current population of $\mu$ candidates; 
    2) The \utwo{uniform} mutation operation described \utwo{in this section} is applied to this parent solution.
    \item Elitist Survival Selection: The algorithm then combines the parent population ($\mu$ candidates) with the newly generated $\lambda$ offspring ($\mu + \lambda$ total solutions). 
    From this combined pool the algorithm selects the $\mu$ best individuals (those with the smallest objective function values) to form the population for the next iteration. 
\end{itemize}

\utwo{To justify our design choices in the $(\mu+\lambda)$ EA, we note the following points.
We choose a population of size $\mu$ and generate $\lambda$ offspring in each generation to balance exploration and computational cost. 
A larger population improves diversity and reduces the risk of premature convergence. 
Selecting each parent uniformly at random maintains an unbiased search, meaning that it avoids over‐emphasising any individual.
We apply elitist survival selection ensures that the best solutions are never lost and that the algorithm makes steady progress toward better filters for the OFS problem.}

In our implementation, we used $\mu = 10, \lambda = 20$.

\subsubsection{$(\mu/2 + \lambda)$ EA}
\label{sec:ea-simple-cross}

Apart from mutations, evolutionary algorithms use crossover to generate new candidate solutions~\cite{back2012handbook}.
In this work, we consider uniform crossover which produces new candidate solutions by combining two given solutions.
Specifically, given solutions $\BF{x}, \BF{y}$ uniform crossover generates solution $\BF{z}$ such that every component $z_i$ is set to either $x_i$ or $y_i$ with equal probability. \utwo{The Paired Crossover Evolutionary Algorithm~\cite{prugel2015run} relies solely on uniform crossover and selection. It has strong theoretical guarantees on simple test functions and it works well on various SCOPs~\cite{aishwaryaprajna2019noisy, aishwaryaprajna2022noisy}.} 

\utwo{However, this algorithm only uses the values that appeared in the initial population. In the OFS problem, we need to explore new combinations of filters from a large library. To overcome this limitation, we add our mutation operator to the procedure. We embed uniform crossover and mutation in the $(\mu+\lambda)$ EA framework from Section~\ref{sec:ea-simple}.} 

\utwo{In our $(\mu/2 + \lambda)$ EA, we pick two parents at random from the current population of size $\mu$. We apply uniform crossover to these parents to produce one child. Then we propose to apply the mutation described in Section~\ref{sec:ea-simple}. We repeat this process $\lambda$ times each iteration to generate $\lambda$ offspring. We combine the $\mu$ parents with the $\lambda$ offspring and select the best $\mu$ solutions to form the next population. The notation $\mu/2$ highlights that two parents are involved in each crossover step.}

\subsubsection{Integer Fast GA}
\label{sec:integer-fast-ga}

The Nevergrad platform~\cite{rapin2018nevergrad} contains many general-purpose black-box optimization meta-heuristics.
The one that the authors recommend to apply for a discrete problem at hand is Fast GA.
It is an extension of $(1 + 1)$ EA with a scheme to control the mutation rate by sampling it from a power-law distribution as proposed in~\cite{doerr2017fast}.
We applied a particular instance of Fast GA, namely algorithm \textit{DoubleFastGADiscreteOnePlusOne} from Nevergrad, which extends the mutation rate control to integer domains~\cite{descreteBenchmarks}.
For the sake of shortness, we refer to \textit{DoubleFastGADiscreteOnePlusOne} as Integer Fast GA.

\subsubsection{Noisy Portfolio}
\label{sec:noisy-portfolio}

This algorithm corresponds to the method called \textit{RecombiningPortfolioOptimisticNoisyDiscreteOnePlusOne} in the library Nevergrad~\cite{rapin2018nevergrad}.
According to the documentation in the source, this algorithm constitutes an extension of $(1 + 1)$ EA.
The extension includes genetic crossover applied at every iteration and a so-called optimistic noise handling scheme.
One can find more details about those methods in the source code of Nevergrad~\cite{rapin2018nevergrad}.
Moreover, the mutation rate in this algorithm is controlled during the optimization process using uniform mixing proposed in~\cite{dang2016self}.

\subsection{Response Surface Methodologies}
\label{sec:optimization:selection:RSM}

Response Surface Methodology (RSM) is an optimization algorithm, where a model, called response surface, approximates the behavior of the objective function based on a set of known input-output data points~\cite{amaran2016simulation}.
RSM starts optimization by creating a Design of Experiment (DoE), meaning a set of points \utwo{(for example uniformly selected)}, which is used to build the initial model.
Then RSM iteratively uses the surface as a surrogate to find the next point, where the objective value is evaluated.
This point is then added to the model.
Those iterations are repeated until the computational budget is exhausted.

\subsubsection{Bayesian Optimization}
\label{sec:bo}

Bayesian Optimization (BO)~\cite{mockus1975bayesian} and Efficient Global Optimization (EGO)~\cite{jones_efficient_1998} are particular cases of RSM, where the response surface constitutes a statistical model.
We considered the conventional Gaussian Process (GP)~\cite{williams2006gaussian} as the probabilistic model.
This model is especially appealing for our application because it allows modeling the inherent Gaussian noise in the objective function.
The time complexity of the algorithm that creates GP is $O(n^3)$, where $n$ is the number of points added to the model~\cite{quinonero2007approximation}.
It becomes computationally inefficient to add more than $10^3$ points to GP, therefore we limit the computational budget for BO by $10^3$ evaluations of $\Fest_K$.
We used BO implementation from skopt library~\cite{skoptBO}.
The number of the DoE points was set to $110$, and the noise parameter of the algorithm was set to Gaussian, due to the reasons discussed in Section~\ref{sec:objf:noise}.
For each of the $M$ components, we defined the bounding box, which restricts the search space for BO, to be $[1, L]$.
All float values that are encountered in the evaluated points are converted to integers by disregarding the decimal parts.
The rest of the parameters of BO were set to their default values used in skopt.

\subsection{Model-Based Methods}
\label{sec:optimization:selection:MBs}

Model-based methods maintain probabilistic models to navigate in the search space.
These models are built and refined iteratively to guide the search towards promising regions.
Here we consider two algorithms that fall under this category.
Generally speaking, those algorithms could also be classified as Evolutionary Algorithms.
Notably, those optimizers are different from the ones described in Section~\ref{sec:optimization:selection:GAs} because the model-based methods learn and adapt their inherent models throughout the optimization process.

\subsubsection{$(\mu/2, \lambda)$ Mixed Integer Evolution Strategy }
\label{sec:mies}

Mixed Integer Evolution Strategy (MIES)~\cite{li2013mixed} is proposed to address the problems with domains that contain real and discrete parts.
The probabilistic model is used in this algorithm to sample mutations.
Since in our OFS problem filters are encoded as integers, we apply only part of the algorithm that addresses integer domains.
For such search spaces, MIES borrows the method originally proposed for unbounded sets of values in every component~\cite{rudolph1994evolutionary}.
We used the implementation of this algorithm from the library~\cite{van2019automatic}.
The selection scheme used in the algorithm is non-elitist, meaning only the created $\lambda$ individuals take part in the selection.
Before the mutation, MIES applies uniform crossover to a pair of parents selection uniformly at random from the population of $\mu$ solutions.
In our experiments, we used values $\mu = 10, \lambda = 20$.

\subsubsection{Univariate Marginal Distribution Algorithm }
\label{sec:umda}
 
\begin{algorithm2e}[!tb]
    \caption{UMDA for integer programming, where objective function $F: \mathbb{L}^M \to \R$ is minimized. Value at every component of every point is an integer from the set $\mathbb{L} \coloneqq \overline{1,L}$. Given positive integers $\mu \le \lambda$, the algorithm produces $\lambda$ offspring and selects $\mu$ ones from them. The maximal number of $F$ evaluations is limited by constant $b$.}
    \label{alg:umda}
    For every $i \in \overline{1, M}, j \in \overline{1, L}$ initialize: $\rm{P}^{(1)}_{i,j} \gets 1/L$\;
    \label{alg:umda:pmin} Set min for probability: $p_{\text{min}} \gets \dfrac{1}{(L-1)\cdot M}$\;
    Initialize best-so-far: $\BF{x}^* \gets \uar{\mathbb{L}^M}$\;
    \For{$t \gets 1, 2, \ldots, \lfloor b/\lambda \rfloor $}{
        \For{$k \gets 1, 2, \ldots, \lambda$}{
            \For{$i \gets 1, 2, \ldots, M$}{
                Sample integer: $\rm{x}_i^{(k)} \sim \BF{P}_{i,:}^{(t)}$\;
            }
        }
        Set $\br{\BF{y}^{(k)}}_{k=1}^{\lambda}$ to reordered $\br{\BF{x}^{(k)}}_{k=1}^{\lambda}$:
        $F(\BF{y}^{(1)}) \le F(\BF{y}^{(2)}) \le \ldots \le F(\BF{y}^{(\lambda)})$\;
        \For{$i \gets 1, 2, \ldots, M$}{
            \For{$j \gets 1, 2, \ldots, L$}{
                Update distribution: $\rm{P}^{(t+1)}_{i,j} \gets \dfrac{1}{\mu}\sum\limits_{k=1}^{\mu} \mathbbm{1}\!\cl{\rm{y}_i^{(k)} = j}$\;
                Adjust to lower/upper bounds: $p_{\text{min}} \le \rm{P}^{(t+1)}_{i,j} \le 1 - p_{\text{min}}$\;
            }
        }
        \If {$F(\BF{x}^*) \ge F(\BF{y}^{(1)})$} {
            Update best-so-far: $\BF{x}^* \gets \BF{y}^{(1)}$\;
        }
    }
    \Return $\BF{x}^*$\;
\end{algorithm2e}

Estimation of Distribution Algorithms (EDAs)~\cite{larranaga2001estimation, hauschild2011introduction} explicitly define the probabilistic distribution over the search space of an optimization problem.
Univariate Marginal Distribution Algorithm (UMDA)~\cite{muhlenbein1996recombination} is an instance of EDA suited for problems with pseudo-boolean search spaces.
UMDA maintains a global probability distribution over the search space, considering the value in every component as a random variable.
The core assumption behind UMDA is that those random variables are independent. 
UMDA is an iterative optimization algorithm.
On every iteration, it samples $\lambda$ candidate solutions from the global distribution and uses the first $\mu$ solutions in the ranking according to the objective function to update this probabilistic model. 
Since usually the algorithm is considered only for pseudo-boolean \utwo{or continuous} domains, we propose our version of UMDA for integer domains in Algorithm~\ref{alg:umda}.
In our implementation, we used constants $\mu = 5, \lambda = 50$.

\subsection{Automated Algorithm Selection}
\label{sec:ngopt}

\utwo{In the previous sections, we introduced several optimization algorithms for SCOP. However, the best choice often depends on subtle features of the considered black-box objective function and can only be determined by empirical testing. To avoid having to run every algorithm on every instance of the OFS problem, we employ Automated Algorithm Selection (AAS)~\cite{kerschke2019automated}, which uses measurable problem properties to recommend or switch between optimizers on the fly. Usually, AAS methods are trained for the specific instances of optimization problems and then applied to previously unseen tasks. In our study, we rely on Nevergrad library and its NGOpt to handle this selection process dynamically. NGOpt is a so-called wizard, which selects an optimization algorithm based on the properties of the considered problem and can switch the algorithm during the minimization process~\cite{meunier2021black}.
The properties include the number of dimensions, presence of noise, type of constraints, etc.}

\subsection{Numerical Comparison}
\label{sec:optimization:comparison}

\begin{figure*}[!tb]
    \centering
    \begin{tabular}{ p{0.46\textwidth} p{0.01\textwidth} p{0.46\textwidth} }
         & \adjustbox{valign=m, center}{\includegraphics[width=0.96\textwidth, trim=00mm 00mm 00mm 00mm, clip]{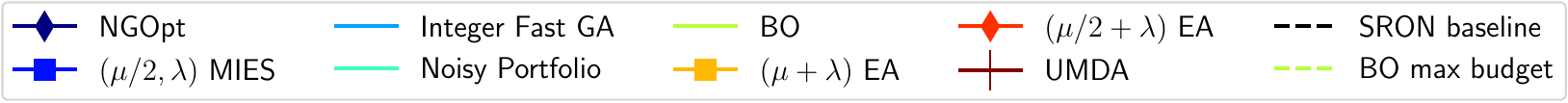}} & \\
        \adjustbox{valign=m, center}{\includegraphics[width=\linewidth, trim=00mm -02mm 8mm 00mm]{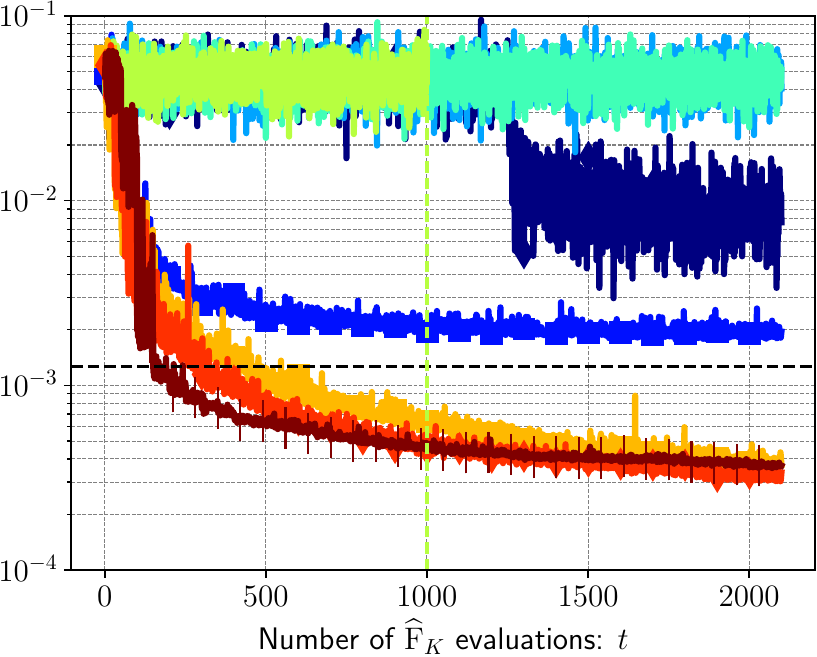}} & &
        \adjustbox{valign=m, center}{\includegraphics[width=\linewidth, trim=8mm -02mm 00mm 00mm]{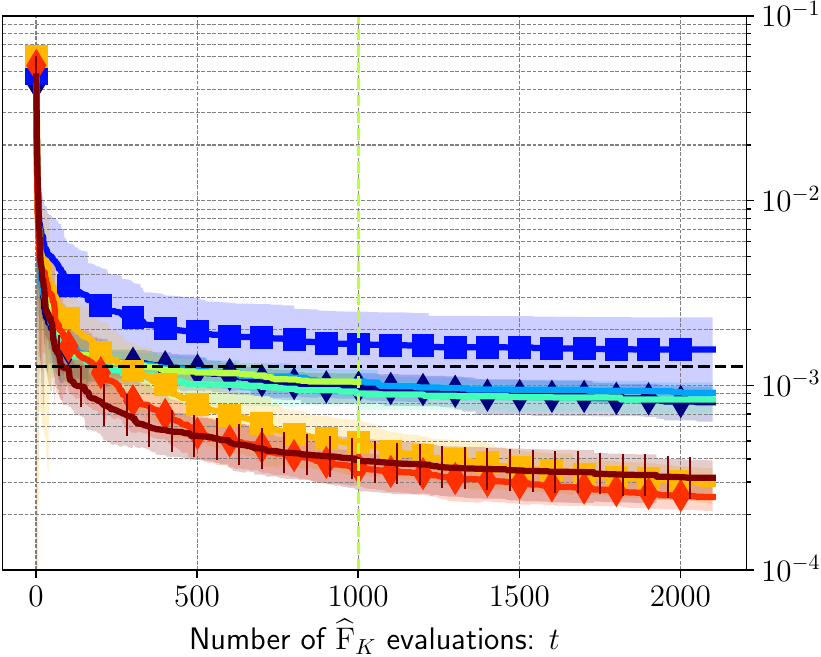}} \\
        \adjustbox{valign=t, center}{ \scalebox{0.8} {\begin{minipage}{0.5\textwidth} \centering \textbf{(a) Average objective value: 
        $ \Eest_n\! \left( \mathrm{f}^{(t)}\!\cl{A(\xi)} \right) $} \end{minipage}} } & \adjustbox{valign=t, center}{} &
        \adjustbox{valign=t, center}{ \scalebox{0.8} {\begin{minipage}{0.5\textwidth} \centering \textbf{(b) Average best-so-far: $ \Eest_n\! \left( \mathrm{g}^{(t)}\!\cl{A(\xi)} \right) $,\\ shaded area: $\sqrt{\Varest_n\! \left( \mathrm{g}^{(t)}\!\cl{A(\xi)} \right)}$ } \end{minipage}} } \\

    \end{tabular}
    \caption{Convergence plot for every algorithm $A \in \mathbb{A}_1$. Part (a) represents the approximated mean of $\mathrm{f}$ defined in Eq.~\eqref{eq:cr1} as the quality of the solution considered by algorithm $A$ when computational budget $t$ is spent. Part (b) represents the approximated mean and standard deviation of $\mathrm{g}$ defined in Eq.~\eqref{eq:cr2} as the quality of best-so-far solution found by algorithm $A$ when computational budget $t$ is spent. The number of independent runs per algorithm is $n = 20$.
    }
    \label{fig:application1}
\end{figure*}

In this section, we demonstrate the application of the selected algorithms to the OFS problem\utwo{, see sections~\ref{sec:optimization:selection:GAs}},~\ref{sec:optimization:selection:RSM},~\ref{sec:optimization:selection:MBs},~\ref{sec:ngopt}. 
Let us denote the set of those algorithms as $\mathbb{A}_1$.
We used $\Fest_K$ as the objective function with fixed constant $K = 10^3$ for all the algorithms.
In each algorithm except BO, we used the computational budget of $2.1 \cdot 10^3$ evaluations of the objective function $\Fest_K$.
For BO we made only $10^3$ evaluations of $\Fest_K$, due to the reasons discussed in Section~\ref{sec:bo}.
Every algorithm is run $n = 20$ times and every time the set of initial solutions is taken uniformly at random from the search space $\mathscr{L}$\utwo{, so the initial population is different for each run of every algorithm.}

In this section, we study how well the $\mathbb{A}_1$ algorithms perform on the OFS problem.
Each algorithm in $\mathbb{A}_1$ is randomized.
Let $\Xi$ be the sample space such that $\xi \in \Xi$ determines the behavior of algorithms.
Therefore, for a fixed $\xi \in \Xi$ and $A \in \mathbb{A}_1$ we will consider $A(\xi)$ as a deterministic algorithm.
Random vector $\BF{x}^{(t)}\!\br{{A(\xi)}}$ denotes the candidate solution that is evaluated when the objective function $\Fest_K$ is called in $t$-th time by algorithm $A$.
As the measure of performance of an algorithm $A$ we consider two following criteria. 
\begin{enumerate}
    \item \textbf{Quality‐of‐solution.}~\label{cr:f} Quality of the solution considered by algorithm $A$ when computational budget $t$ is spent: 
    \begin{equation}\mathrm{f}^{(t)}\!\cl{A(\xi)} \coloneqq \Fest_K\!\cl{\BF{x}^{(t)}(A(\xi))}; \label{eq:cr1} \end{equation}
    \item \textbf{Best-so-far.}~\label{cr:best-so-far} Quality of best-so-far solution found by algorithm $A$ when computational budget $t$ is spent: 
    \begin{equation}\mathrm{g}^{(t)}\!\cl{A(\xi)} \coloneqq \min \set{\Fest_K\!\cl{\BF{x}^{(j)}(A(\xi))} \mid j \in \overline{1, t} }. \label{eq:cr2} \end{equation}
\end{enumerate}

\utwo{We introduce these definitions to be precise about how we measure algorithm performance. Although quality‐of‐solution and best‐so‐far are standard concepts, we did not find their formal definitions in the related literature and so we give them here for clarity and reproducibility.}

The results of the numerical experiment are summarized in Figure~\ref{fig:application1}.
\utwo{All the algorithms are given the same total number of the allowed objective function evaluations, except BO, for the reasons discussed in Sec.~\ref{sec:bo}.}
In both plots on this figure, the X axis depicts the number of times $t$ function $\Fest_K$ was evaluated.
The Y axis in part (a) of Figure~\ref{fig:application1} represents the approximated mean of quality-of-solution criterion.
This plot shows that candidate solutions in Integer Fast Ga, Noisy Portfolio, and BO vary and do not improve on average when the optimization process proceeds.
However, NGOpt starts producing solutions that perform better on average when approximately half of the given computational budget is exhausted.
This can happen due to the switch of the underlying optimization algorithm that NGOpt uses.
Average candidate solutions considered by $(\mu/2, \lambda)$ MIES significantly improve in the first half of the optimization process, but then stagnate until the end.

The Y axis in part (b) of Figure~\ref{fig:application1} represents the approximated mean of best-so-far criterion and the shaded area represents the approximated standard deviation of best-so-far for each algorithm in $\mathbb{A}_1$.
The line for every algorithm in this plot constitutes the lower envelope of the corresponding line in the plot (b) of Figure~\ref{fig:application1}.
All the algorithms except $(\mu/2, \lambda)$ MIES on average produced some solutions that outperform the baseline solution.
It is clear, that algorithm $(\mu + \lambda)$ EA, $(\mu/2 + \lambda)$ EA and UMDA outperform all the other considered algorithms in both criteria; \utwo{quality-of-solution and best-so-far}.
We call those three algorithms \emph{leading algorithms} and aim to improve their performance in our further investigation.

\section{Employing New Distances to Navigate in the Domain of OFS}
\label{sec:dd}

When an optimization algorithm is equipped with domain information it can efficiently disregard areas of the search space with sub-optimal candidate solutions~\cite{antonov2023curing}.
While there are many possible ways to account for problem-specific information, we use a distance metric as proposed in~\cite{antonov2023curing}.
The metric represents our knowledge of the filters in the library $\mathbb{L}$ and the search space symmetry to permutations.

\subsection{\utwo{Definition of distance} between candidate solutions}
\label{sec:distance}

In this section, we propose two distance metrics on the space $\mathscr{L}$.
We remind the following classical definition of the metric space in \utwo{\ref{sec:metric}}.


For this application the distance metric on the set $\mathbb{L}$ should relate to the similarity between different filters. We propose and tested two following distance measures.
\begin{enumerate}
    \item \label{dist:ratio} Ratio of correlation with methane absorption lines, which we call $d_1$;
    \item \label{dist:fourier} Second moment of the Fourier transformed transmission profiles, which we call $d_2$.
\end{enumerate}
The distance metric~\ref{dist:ratio} is the ratio between the total transmission and integrated transmission profile multiplied by the trace gas absorption lines. Let $T^x_q$ be the (discretized) transmission of filter $x$ for wavelength $q$ and $a_q$ the absorption of the trace gas molecule of interest for wavelength $q$. Then the distance metric $d_1(x,y)$ is given by
\begin{equation}
    d_1(x,y) = \left| \frac{\sum_q a_q T^x_q }{\sum_q T^x_q} - \frac{\sum_q a_q T^y_q }{\sum_q T^y_q} \right|.
\end{equation}
This ratio represents how sensitive this method is to differences in trace gas concentration. 
The distance metric~\ref{dist:fourier} computes the difference in the second moment of the absolute value of the Fourier transformed transmission profile. Let $\tilde{T}_{\zeta}$ denote the absolute value of the discrete Fourier transform of the transmission profile and $\zeta$ the associated (spatial) frequency. The distance metric~\ref{dist:fourier} is then given by
\begin{equation}
    d_2(x,y) = \left| \sum_{\zeta} \zeta^2 \tilde{T^x_\xi} -\sum_{\zeta} \zeta^2 \tilde{T^y_\zeta} \right|.
\end{equation}
A larger second moment represents a filter with sharper and or more spectral features. 
We numerically validated that there are no different filters $x, y \in \mathbb{L}$ such that $d_1(x, y) = 0 $ or $ d_2(x, y) = 0$.
The other conditions of the metric are satisfied for $d_1, d_2$ due to the properties of the modulo.
Therefore, $d_1, d_2$ are metrics on the space $\mathbb{L}$ (in the mathematical sense).

\subsection{Filter multisets}
\label{sec:multisets}

The described distances explain the relation between filters, however, the objective of the application is to find a \emph{multiset}~\cite{knuth1997art} of filters.
We propose to take into account the fact that the order of filters in a sequence  $\BF{x} \in \mathscr{L}$ does not change the value of the objective function $\F(\BF{x})$.
In this regard, we use linear-assignment problem~\cite{akgul1992linear} for two chosen sequences of filters to compare their similarity. 

\begin{definition}
    For the metric space $(\mathbb{L}, \dL)$ and fixed integer constant $M > 0$ we define a mapping \lap ~on the space $\mathbb{L}^M$ as follows. 
    For any $\bm{x}, \bm{y} \in \mathbb{L}^M$: 
    \[\textsc{LAP}(\dL, \bm{x}, \bm{y}) \coloneqq \min\limits_{\pi \in \mathbb{P}} \set{\sum\limits_{i = 1}^{M}\dL\!\br{x_{i}, y_{\pi(i)}}},\]
    where $\mathbb{P}$ is the set of all bijections (permutations) $\overline{1,M} \to \overline{1,M}$.
\end{definition}

Since permutations of integers in $\bm{x} \in \mathbb{L}^M$ do not influence the value of $\F(\bm{x})$, it is convenient to consider all solutions made of the same number of the same integers as one solution.
Therefore, we define the following equivalence relation $\equiv$ on the space $\mathbb{L}^M$.

\begin{definition}
    Any two points $\bm{x}, \bm{y} \in \mathbb{L}^M$ are in relation $\bm{x} \equiv \bm{y}$ if $$\textsc{LAP}(\dL, \bm{x}, \bm{y}) = 0.$$
    This relation induces the quatient space $\mathbb{M}(\dL) \coloneqq \mathbb{L}^M / \equiv .$
    \label{def:quatient}
\end{definition}

Formally, elements of the space $\mathbb{M}(\dL)$ are sets, which are called equivalence classes: $\set{\bm{x}, \bm{y} \in \mathbb{L}^M \mid \bm{x} \equiv \bm{y}}.$
When we define a metric on the space $\mathbb{M}(\dL)$ we show the axioms of metric on points $\bm{x}, \bm{y}, \bm{z} \in \mathbb{L}^M.$
This can be trivially extended to all the elements from the corresponding equivalence classes of $\bm{x}, \bm{y}, \bm{z}$ using properties of the quotient space.
\utwo{We demonstrate that all the axioms of the metric are upheld in~\ref{sec:metric}, therefore $\textsc{LAP}(\dL, \cdot, \cdot)$ is metric on the space $\mathbb{M}(\dL)$ for any metric $\dL$.}

\subsection{Distance-Driven Versions of $(\mu + \lambda)$ EA and $(\mu/2 + \lambda)$ EA}
\label{sec:dd-ea}

\newcommand{\figuretext}[1]{\scalebox{0.8}{\begin{minipage}{1.2\linewidth} #1 \end{minipage}}}


\begin{figure}[!tb]
    \centering
    \includegraphics[width=0.8\linewidth]{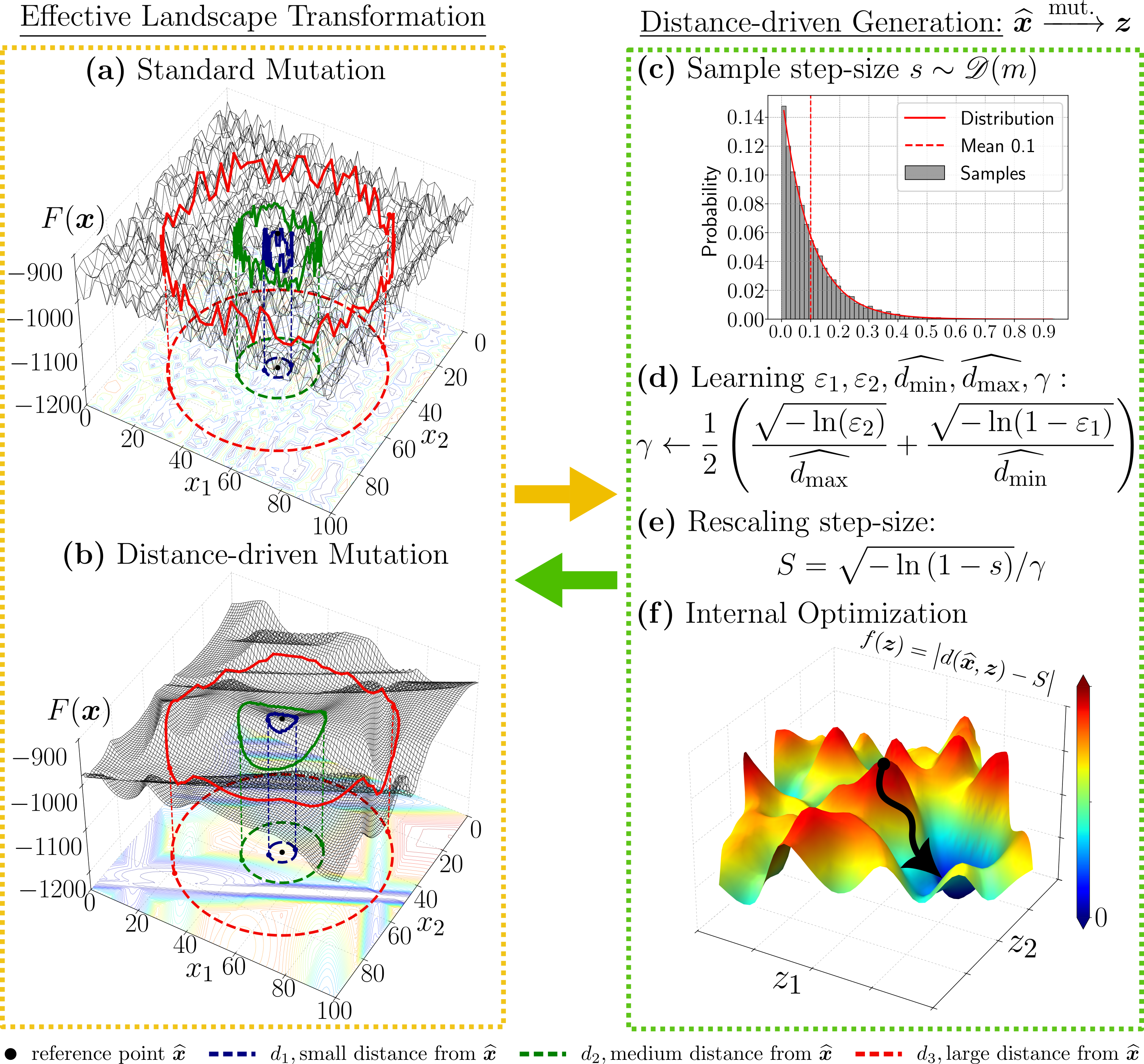}
    \caption{Distance-driven optimization. 
    \textbf{Left:} Standard mutation (a) samples points on contours $d_1,d_2,d_3$ of $F(\bm{x})$; distance-driven mutation (b) “flattens” objective values on closer contours to the reference point $\bm{\widehat{x}}$ in the transformed landscape.  
    \textbf{Right:} To generate a new candidate $\bm{z}$ in the transformed landscape, (c) draw step‐size $s \sim \mathscr{D}(m)$ (see Section~\ref{sec:step-size-distr}), (d) learn scaling parameter $\gamma$ (see Section~\ref{sec:gamma}), (e) rescale the step-size $ S = {\sqrt{-\ln(1-s)}}/{\gamma},$ (see Section~\ref{sec:gamma}) and (f) solve $ \min_{\bm{z}}\bigl|d(\bm{\widehat{x}},\bm{ z})-S\bigr| $ to place $\bm{ z}$ exactly at distance $S$ from $\bm{\widehat{x}}$ (see Section~\ref{sec:internal-opt}).}
    \label{fig:dd-opt}
\end{figure}

We apply the method proposed by us earlier in~\cite{antonov2023curing} to employ the distance metric $d$ in EA. 
\uthree{The main steps of this method are visualized in Figure~\ref{fig:dd-opt}.}
It extends a mutation operator in the optimization algorithm, which aims to find at least one $\bm{x}^* \in \mathbb{X}$ such that $F(\bm{x}^*) = \min\nolimits_{\bm{x} \in \mathbb{X}} \set{F(\bm{x})},$ where $\mathbb{X}$ is some search space and $F$ is the objective function defined on it.
In order to make this extension, the method defines a \emph{mapping of the distance} between any two candidate solutions to the interval $(0, 1)$ and creates a \emph{distribution $\mathcal{D}$ over the step-sizes} on this interval\uthree{, visualized in Fig.~\ref{fig:dd-opt}~(c)}.
To do a mutation of the solution $\bm{x} \in \mathbb{X}$, the algorithm samples a step size $s \in (0, 1)$ from the distribution $\mathcal{D}$ and maps this step size back to the distance $S \in \R$ in the original domain $\mathbb{X}$\uthree{, visualized in Fig.~\ref{fig:dd-opt}~(d) and (e)}.
Then it \emph{solves the internal optimization problem} to generate a solution $\bm{y} \in \mathbb{X}$ which steps away from $\bm{x}$ to the distance $S$:
\begin{equation}\bm{y} \in \argmin\limits_{\bm{z} \in \mathbb{X}}\set{|d(\bm{x}, \bm{z}) - S|}. \label{eq:internal-opt} \end{equation}
\uthree{The landscape of this optimization problem is visualized in Fig.~\ref{fig:dd-opt}~(f).
Those steps result in effective transformation of the optimization landscape for every mutation, such that the landscape is flattened on closer contours to the mutated point $\bm{\widehat{x}}$. This is shown in Fig.~\ref{fig:dd-opt}~(a) and (b).}

We explain how we applied every step (emphasized with italic font) of this general methodology in later sections: Section~\ref{sec:gamma}, Section~\ref{sec:internal-opt}, and Section~\ref{sec:step-size-distr}.
Now we directly describe the proposed extension of $(\mu/2 + \lambda)$ EA summarized in Algorithm~\ref{alg:dd-ea}.
The extension of $(\mu + \lambda)$ EA is the same, except that instead of the lines~\ref{alg:dd-ea:repr-selection},~\ref{alg:dd-ea:crossover1}, and~\ref{alg:dd-ea:crossover2} it chooses $\BF{c}$ uniformly at random from $\mathfrak{P}^{(t)}.$

\begin{algorithm2e}[!tb]
    \caption{Proposed DD-$(\mu/2 + \lambda)$ EA, equipped with a distance-driven heuristic for integer programming, where objective function $F: \mathbb{L}^M \to \R$ is minimized. Points in its domain have dimensionality $M$. Value at every component of every point is an integer from the set $\mathbb{L} \coloneqq \overline{1,L}$, which is a metric space with distance $\dL$. The maximal number of $F$ evaluations is limited by constant $b$. On top of that, the algorithm uses integer constants $\mu, \lambda, R$ and a real constant $0\le m \le 1$.}
    \label{alg:dd-ea}
    Select points for exploration for each $i \in \overline{1,R}$: $\BF{e}^{(i)} \gets \uar{\mathbb{L}^M}$\;
    $\gamma \gets \textsc{explore}\br{\set{\BF{e}^{(i)}}_{i=1}^R, \dL}$\label{alg:dd-ea:explore}\;
    Build distribution over step-sizes: $\mathcal{D} \gets \textsc{distribution}(m)$\label{alg:dd-ea:step-size-distr-build}\;
    For each $i \in \overline{1, \mu}$ create a solution for the first population : $\BF{x}^{(i)} \gets \uar{\mathbb{L}^M}$\;
    Initialize population: $\mathfrak{P}^{(1)} \gets \set{\BF{x}^{(i)}}_{i=1}^{\mu}$\;
    \For{$t \gets 1, 2, \ldots, \lfloor b/\lambda \rfloor$ \label{alg:dd-ea:loop}}{ 
        \For{$i \gets 1,2,\ldots,\lambda $ \label{alg:dd-ea:lambda}}{ 
            Select $\BF{p}^{(1)}, \BF{p}^{(2)}$ u.a.r. from $\mathfrak{P}^{(t)}$\label{alg:dd-ea:repr-selection}\;
            \For{$k \gets 1, 2, \ldots, M$ \label{alg:dd-ea:crossover1} }{
                Uniform crossover: $c_k \gets \uar{\set{\rm{p}^{(1)}_k, \rm{p}^{(2)}_k}}$\label{alg:dd-ea:crossover2}\; 
            }
            Sample step-size: $\rm{s} \sim \mathcal{D}$\label{alg:dd-ea:step-size}\;
            $\rm{S} \gets \sqrt{-\ln\!\br{1 - \rm{s}}}/\gamma $\label{alg:dd-ea:transform-dist}\;
            $\BF{x}^{(i)} \gets \textsc{DDMutation}(\BF{c}, \dL, \rm{S})$\label{alg:dd-ea:dd-mutation}\; 
        }
        Set $\set{\BF{y}^{(i)}}_{i=1}^{\mu + \lambda}$ to reordered solutions in $\mathfrak{P}^{(t)} \cup \set{\BF{x}^{(i)}}_{i=1}^{\lambda}$:
        $F\!\br{\BF{y}^{(1)}} \le F\!\br{\BF{y}^{(2)}} \le \dots \le F\!\br{\BF{y}^{(\mu + \lambda)}}$\label{alg:dd-ea:reorder}\;
        $\mathfrak{P}^{(t + 1)} \gets \set{\BF{y}^{(i)}}_{i=1}^{\mu}$\label{alg:dd-ea:survivale-selection}\;
        $\BF{x}^* \gets \BF{y}^{(1)}$\label{alg:dd-ea:update}\;
    }
    \Return $\BF{x}^*$\;
\end{algorithm2e}

The algorithm starts in line~\ref{alg:dd-ea:explore} by exploring the search space as proposed in~\cite{antonov2023curing}. 
The function $\textsc{Explore}$ and value $\gamma$ are describe in Section~\ref{sec:gamma}.
In line~\ref{alg:dd-ea:step-size-distr-build}, the algorithm builds the distribution of step sizes $\mathcal{D}$  using the fixed mean $m = 0.1$.
The distribution is described in Section~\ref{sec:step-size-distr}.

The algorithm \utwo{then} runs the main optimization loop in line~\ref{alg:dd-ea:loop}.
EA iteratively makes changes to the candidate solutions and selects more promising candidates until the budget $b$ is exhausted.
The value $b$ limits the number of objective function evaluations that the algorithm is allowed to make to find an estimate of the optimized function minimizer.
We used $\Fest_K$ as the objective function with fixed $K = 10^3.$

\utwo{We also experimented with application of Welch's T-test to the values of the underlying random variables to distinguish statistically significant differences in the objective function from insignificant ones. 
This application did not lead to significant improvement, and in some cases it even worsened performance of the corresponding algorithms.
Therefore, we leave application of statistical selection methods for future work.}

The candidates for the best estimate are stored in the set $\mathfrak{P}^{(t)}$.
The algorithm makes changes to the elements of this set in the loop in line~\ref{alg:dd-ea:lambda}.
In every iteration of this loop algorithm selects a pair of candidate solutions uniformly at random from the set $\mathfrak{P}^{(t)}$ in line~\ref{alg:dd-ea:repr-selection}. 
This uniform selection is made to avoid a biased search process when the algorithm may prefer one part of the search space over another.

Changes to the selected solutions are made in lines~\ref{alg:dd-ea:crossover2},~\ref{alg:dd-ea:dd-mutation}.
In line~\ref{alg:dd-ea:crossover2} algorithm performs uniform crossover.
In line~\ref{alg:dd-ea:dd-mutation}, the distance-driven mutation is applied.
The details of the function $\textsc{Mutation}$ are discussed in Section~\ref{sec:internal-opt}.
The ``strength'' of the mutation is defined by the number $s \in (0,1)$ sampled from $\mathcal{D}$ in line~\ref{alg:dd-ea:step-size}.
This number is transformed to the distance in $\mathbb{M}(\dL)$ in line~\ref{alg:dd-ea:transform-dist}.
The formula for this transformation is explained in Section~\ref{sec:gamma}.

After the loop in line~\ref{alg:dd-ea:lambda} is finished, algorithm selects $\mu$ most fit individuals from the set $\mathfrak{P}^{(t)}$ and produced candidates in line~\ref{alg:dd-ea:reorder}.
The next population is set to the selected set in line~\ref{alg:dd-ea:survivale-selection}.
Then the current best estimate of the minimizer $\BF{x}^*$ is updated in line~\ref{alg:dd-ea:update}.
When the computation budget is exhausted the algorithm returns this solution $\BF{x}^*$.

\subsubsection{Mapping of Distances.}
\label{sec:gamma}

The algorithm from~\cite{antonov2023curing} finds approximations $\widehat{\delta_{\text{min}}}, \widehat{\delta_{\text{max}}}$ of maximal and minimal distance in the domain.
Then it uses the found distances to compute the parameter $\gamma$ needed to define the mapping for any $\bm{x}, \bm{y} \in \mathbb{L}^M$: $$\tau(\bm{x}, \bm{y}) = 1 - \exp\!\br{-\gamma^2\cdot d^2(\bm{x}, \bm{y})}.$$
The inverse of this transformation that we use to map the value from the interval $(0,1)$ back to the original space $\mathbb{L}^M$:
$$d(\bm{x}, \bm{y}) = \sqrt{- \ln \! \br{1 - \tau(\bm{x}, \bm{y})}}/\gamma.$$

The latter equation defines the transformation made in line~\ref{alg:dd-ea:transform-dist} of Algorithm~\ref{alg:dd-ea}.

We compute OFS-specific values for approximated minimal $\widehat{\delta_{\text{min}}}(d_{\mathbb{L}})$ and maximal $\widehat{\delta_{\text{max}}}(d_{\mathbb{L}})$ distances in $\mathbb{M}(\dL)$ for each metric on $\mathbb{L}: d_{\mathbb{L}} \in \set{d_1, d_2}$ and use them in Algorithm 2 from~\cite{antonov2023curing} to compute $\gamma$.

For the given $d_{\mathbb{L}}$ the smallest distance on $\mathbb{L}^M$ is: 
$$\delta_{\text{min}}(d_{\mathbb{L}}) = \min\limits_{\bm{x}, \bm{y} \in \mathbb{L}^M} \set{ \textsc{LAP}(d_{\mathbb{L}}, \bm{x}, \bm{y}) \mid \bm{x} \not\equiv \bm{y}}.$$
Let us consider the value: $$\delta^*(\dL) \coloneqq \min\limits_{x, y \in \mathbb{L}} \set{\dL(x, y) \mid x \ne y}.$$
If there exist points $\bm{x}, \bm{y} \in \mathbb{L}^M$ such that $\bm{x} \not\equiv \bm{y}$ and $\textsc{LAP}(\dL, \bm{x}, \bm{y}) < \delta^*,$ then for some permutation $\pi$ we have $\sum_{i=1}^M{\dL(x_i, y_{\pi(i)})} < \delta^*(\dL).$
However, it is impossible since metric $\dL$ returns only non-negative values and at least one positive.
Therefore, $\forall \bm{x}, \bm{y} \in \mathbb{L}^M: $ either $\textsc{LAP}(\dL, \bm{x}, \bm{y}) = 0$ or $\textsc{LAP}(\dL, \bm{x}, \bm{y}) \ge \delta^*(\dL).$
Equality is attained for example for such pair $\bm{x}, \bm{y} \in \mathbb{L}^M$ that $x_i = y_i$ for all $i \in \overline{1, M-1}$ and $\dL(x_M, y_M) = \delta^*(\dL).$
We have just demonstrated that $\delta_{\text{min}}(\dL) = \delta^*(\dL).$

When candidate solution $\bm{x} \in \mathbb{L}^M$ is fixed, we can analogically show that: 
\begin{equation*}
    \delta_{\text{min}}(\dL, \bm{x}) \coloneqq \min\limits_{\bm{y} \in \mathbb{L}^M} \set{\textsc{LAP}(\dL, \bm{x}, \bm{y}) \mid \bm{x} \not\equiv \bm{y}} = \min\limits_{ \substack{y \in \mathbb{L}, \\ i \in \overline{1,M} } }  \set{\dL(x_i, y) \mid x_i \ne y}.
\end{equation*}
Now we consider the value:
\begin{equation*}
\delta^{**}(\dL) \coloneqq \max\limits_{\bm{x} \in \mathbb{L}^M} \set{\delta_{\text{min}}(\dL, \bm{x})} 
= \max\limits_{x \in \mathbb{L}} \set { \min\limits_{y \in \mathbb{L}} \set{\dL(x, y) \mid x \neq y} }.
\end{equation*}

One of the assumptions on the search space in~\cite{antonov2023curing} is so called consistency, which includes the following condition.
For any two points $\bm{x}, \bm{y} \in \mathbb{L}^M$ the smallest distances are similar: $\delta_{\text{min}}(\dL, \bm{x}) \approx \delta_{\text{min}}(\dL, \bm{y}).$
If this assumption is valid for our application, then $\delta^{**}(\dL)/\delta^*(\dL) \approx 1.$
However, our computations show that $\delta^{**}(d_1)/\delta^*(d_1) \approx 2 \cdot 10^6$ and $\delta^{**}(d_2)/\delta^*(d_2) \approx 7 \cdot 10^8,$ which means that the consistency assumption is not upheld.
Therefore, we considered $R=10$ points $\BF{x}^{(i)} \in \mathbb{L}^M, i \in \overline{1,R}$ chosen uniformly at random.
They are used to compute the following value:
$$\widetilde{\delta_{\text{min}}}(\dL) \coloneqq \dfrac{1}{R} \sum\limits_{i=1}^R{\delta_{\text{min}}\!\br{\dL, \BF{x}^{(i)}}}. $$
We do not set $\widehat{\delta_{\text{min}}}(\dL)$ equal to $\widetilde{\delta_{\text{min}}}(\dL)$ due to the limitations of floating-point precision in computer.
The floating-point representation of numbers can not bijectively map every possible distance to a unique number in the interval $(0,1).$
Therefore, we multiply $\widetilde{\delta_{\text{min}}}(\dL)$ by a sufficiently big constant factor: 
$$\widehat{\delta_{\text{min}}}(\dL) = 100 \cdot \widetilde{\delta_{\text{min}}}(\dL).$$

The biggest distance $\delta_{\text{max}}(d_{\mathbb{L}})$ on $\mathbb{L}^M$ is defined by analogy to the smallest:
$\delta_{\text{max}}(d_{\mathbb{L}}) = \max \set{ \textsc{LAP}(d_{\mathbb{L}}, \bm{x}, \bm{y}) \mid \bm{x}, \bm{y} \in \mathbb{L}^M}.$
We did not observe the same properties for the maximal distance as for the minimal.
Therefore, we applied the general procedure proposed in~\cite{antonov2023curing} to compute $\widetilde{\delta_{\text{max}}}\!\br{\dL, \BF{x}^{(i)}}, i \in \overline{1,R}$ for $R=10$ points chosen uniformly at random from $\mathbb{L}^M.$
The maximal of this values is used as $\widehat{\delta_{\text{max}}}(\dL)$: $$\widehat{\delta_{\text{max}}}(\dL) = \max\limits_{i \in \overline{1,M}} \set{\widetilde{\delta_{\text{max}}}\!\br{\dL, \BF{x}^{(i)}}}.$$

In line~\ref{alg:dd-ea:explore} of Algorithm~\ref{alg:dd-ea} we denoted the method that takes $R$ points from $\mathbb{L}^M$ and computes values $\widehat{\delta_{\text{min}}}(\dL), \widehat{\delta_{\text{max}}}(\dL), \gamma$ as $\textsc{Explore}.$
It returns only $\gamma$ since it is used later in the algorithm for the transformation of step size to distance in line~\ref{alg:dd-ea:transform-dist}.

\subsubsection{Solving Internal Optimization Problem.}
\label{sec:internal-opt}

Specification of Eq.~\eqref{eq:internal-opt} to OFS problem tells that we aim at locating a point $\bm{y}$ in the set: $$\argmin\limits_{\bm{z} \in \mathbb{L}^M}\set{|\textsc{LAP}(\dL, \bm{x}, \bm{z}) - S|},$$ for the given metric $\dL$, constant $S$ and a point $\bm{x}$.
Such point $\bm{y}$ constitutes the mutant of $\bm{x}$ that steps away from it to the distance $\widetilde{S}$ according to the metric $\textsc{LAP}(\dL, \cdot, \cdot).$
The value of $\widetilde{S}$ is closest possible to $S$.
It is perfect if $\widetilde{S} = S$, however, this might be not always possible, since the space $\mathbb{L}^M$ is not continuous.

We propose a heuristic approach that aims at locating $\bm{y}$ to minimize $|\textsc{LAP}(\dL, \bm{x}, \bm{y}) - S|,$ within the limited number of $\textsc{LAP}$ function evaluations.
In line~\ref{alg:dd-ea:dd-mutation} of Algorithm~\ref{alg:dd-ea} this method is called $\textsc{DDMutation},$ \utwo{which stands for \emph{distance-driven mutation}.}
This non-trivial internal optimization task can be seen as an inverse linear assignment problem because the goal is to produce a vector that gives a specific solution to this problem.
Algorithm~\ref{alg:inv-lap} summarizes the proposed method to solve this problem\utwo{, called Distance‑Driven Assignment Evolutionary Algorithm (DDA‑EA).} 

The number of $\textsc{LAP}$ function evaluations is restricted since it might be a computationally expensive procedure, especially for bigger values of $M$.
In contrast to $\textsc{LAP}$, which returns only one real value, we use notation $\textsc{LAP}^*(\dL, \bm{x}, \bm{y})$ to emphasize that it returns two objects:
1) the permutation $\pi^*$ on which the minimum of $\sum_{i=1}^M{\dL(x_i, y_{\pi^*(i)})}$ is reached, and 2) the inherent smallest value, achievable through the permutation $\pi^*.$
Notably, the second number of $\textsc{LAP}^*$ equals to the one returned by $\textsc{LAP}.$

The general scheme of Algorithm~\ref{alg:inv-lap} is the same as in $(1 + \lambda)$ EA.
It treats $|\textsc{LAP}(\dL, \bm{x}, \bm{y}) - s|$ as the minimized objective function in the search space $\mathbb{L}^M$ and iteratively produces $\lambda$ candidate solutions by making variations of the best-so-far solution.
However, this algorithm takes advantage of the permutation $\pi$ available through $\textsc{LAP}^*$.
The maintained bijection $\pi^{(i)}$ represents the assignment of components in the initial solution $\bm{x}^{(0)}$ and the current best-so-far $\BF{x}^{(i)}.$
In the variation phase Algorithm~\ref{alg:inv-lap} in line~\ref{alg:inv-lap:k} selects a component with number $\rm{k}$ in $\bm{x}^{(0)}$.

\SetKw{Break}{break}
\scalebox{1}{
\begin{algorithm2e}[H]
    \caption{\utwo{Distance‑Driven Assignment Evolutionary Algorithm (DDA‑EA) -- }the proposed method to locate a vector $\bm{y} \in \mathbb{L}^M$ such that $\textsc{LAP}(\dL, \bm{y}, \bm{x}^{(0)}) \approx s$, given $M\in \N$, $\bm{x}^{(0)} \in \mathbb{L}^M$, constant $s$ and metric $\dL$ on the set $\mathbb{L} \coloneqq \overline{1, L}$.  The maximal number of $\textsc{LAP}$ function evaluations is limited by the constant $b$. On top of that, the algorithm uses constants $\lambda, R \in \N$. \utwo{The method is used to solve the internal optimization problem for distance-driven algorithm applied to OFS.}}
    \label{alg:inv-lap}
    Initialization of parent: $\BF{x}^{(1)}, \rm{v}^{(1)} \gets \bm{x}^{(0)}, 0$\;
    Init. permut.: $\pi^{(1)} \gets \br{\forall i \in \overline{1, M}: i \mapsto i}$\;
    \For{$i \gets 1, 2, \ldots, \lfloor b/\lambda \rfloor$}{
        $\BF{x}^{(i + 1)}, \pi^{(i + 1)}, \rm{v}^{(i + 1)} \gets \BF{x}^{(i)}, \pi^{(i)}, \rm{v}^{(i)}$\;
        \For{$j \gets 1, 2, \ldots, \lambda$}{
            \nlnonumber \DontPrintSemicolon \textbf{Variation Phase:}\; \PrintSemicolon
            $\BF{y} \gets \BF{x}^{(i)}$\;
            \For{$r \gets 1, 2, \ldots, R$}{
                $\rm{k} \gets \uar{\overline{1,M}}$\label{alg:inv-lap:k}\;
                Set $\br{f_t}_{t=1}^L$ to reordered ints. in $\mathbb{L}$: $\dL(x^{(0)}_\rm{k}, f_1) < \dL(x^{(0)}_\rm{k}, f_2) < \ldots < \dL(x^{(0)}_\rm{k}, f_L)$\label{alg:inv-lap:reorder}\;
                Find position $\rm{p}$ such that: $f_\rm{p} = \rm{x}^{(i)}_{\pi^{(i)}(\rm{k})}$\label{alg:inv-lap:p}\;
                \If{$\rm{p} \ne 1$ \textbf{\textsc{and}} $\rm{v}^{(i)} > s$\label{alg:inv-lap:bigger}}{
                    $\rm{y}_{\pi^{(i)}(\rm{k})} \gets $ Harmonic Sample from $(f_{\rm{p}-1}, f_{\rm{p}-2}, \dots, f_1)$\label{alg:inv-lap:bigger-sample}\;
                }
                \If{$\rm{p} \ne L$ \textbf{\textsc{and}} $\rm{v}^{(i)} < s$\label{alg:inv-lap:smaller}}{
                    $\rm{y}_{\pi^{(i)}(\rm{k})} \gets $ Harmonic Sample from $(f_{\rm{p}+1}, f_{\rm{p}+2}, \dots, f_L)$\label{alg:inv-lap:smaller-sample}\;
                }
                \If{$\BF{y} \ne \BF{x}^{(i)}$\label{alg:inv-lap:changed}}{
                    \Break\;
                }
            }
            \nlnonumber \DontPrintSemicolon \textbf{Selection Phase:}\; \PrintSemicolon
            $\pi_{\BF{y}}, \rm{v}_{\BF{y}} \gets \textsc{LAP}^*(\dL, \bm{x}^{(0)}, \BF{y}) $\;
            \If{$\rm{v}_{\BF{y}} = s$}{
                \Return $\BF{y}$\;
            }
            \If{$|\rm{v}_{\BF{y}} - s| < |\rm{v}^{(i)} - s| $}{
                $\BF{x}^{(i + 1)}, \rm{v}^{(i + 1)} \gets \BF{y}, \rm{v}_{\BF{y}}$\;
                $\pi^{(i + 1)} \gets \pi_{\BF{y}}$\;
            }
        }
    }
    \Return $\BF{x}^{\br{\lfloor b/\lambda \rfloor}}$\;
\end{algorithm2e}
}

\uthree{Figure~\ref{fig:permutation} shows the initial (reference) solution, $\bm x^{(0)}$, and the mutant solutions, $\mathbf x^{(i)}$, at a curtain stage of Algorithm~\ref{alg:inv-lap}. At each stage, the algorithm samples a position $k$, and finds a position in the mutant solution that corresponds to position $k$ in the initial solution, under the current permutation $\pi^{(i)}$. Thus, the filter $x ^{(0)}_k$ at position $k $ corresponds to a component in $\mathbf{x}^{(i)} $ through the permutation  $ \pi^{( i )}$. The fact that the position of the mutant solution may not be the same as the position in the reference solution is indicated by the curved red line. }

\begin{figure}[!tb]
\centering
\resizebox{0.55\linewidth}{!}{%
\begin{circuitikz}
\tikzstyle{every node}=[font=\normalsize]
\draw  (5,14.5) rectangle (6.25,13.25);
\node [font=\normalsize] at (6,11.75) {};
\node [font=\normalsize] at (4.5,11.5) {};
\draw  (6.25,14.5) rectangle (7.5,13.25);
\draw  (11.25,14.5) rectangle (12.5,13.25);
\node [font=\normalsize] at (4,13.75) {$\bm{x}^{(0)}:$};
\node [font=\normalsize] at (8.25,14) {};
\node [font=\normalsize] at (8.25,14) {};
\node [font=\normalsize] at (8.75,13.75) {};
\node [font=\normalsize] at (8.75,13.75) {};
\draw [short] (7.5,14.5) -- (7.75,14.5);
\draw [short] (7.5,13.25) -- (7.75,13.25);
\draw [short] (8.5,14.5) -- (8.75,14.5);
\draw [short] (8.5,13.25) -- (8.75,13.25);
\node [font=\normalsize] at (8,13.75) {$\ldots$};
\node [font=\normalsize] at (5.5,14.75) {1};
\node [font=\normalsize] at (6.75,14.75) {$2$};
\node [font=\normalsize] at (9.25,14.75) {$\rm{k}$};
\node [font=\normalsize] at (8.25,14) {};
\draw [short] (10,14.5) -- (10.25,14.5);
\draw [short] (11,14.5) -- (11.25,14.5);
\draw [short] (10,13.25) -- (10.25,13.25);
\draw [short] (11,13.25) -- (11.25,13.25);
\node [font=\normalsize] at (10.5,13.75) {$\ldots$};
\node [font=\normalsize] at (11.75,14.75) {$M$};
\node [font=\normalsize] at (7,13.75) {${x}^{(0)}_2$};
\node [font=\normalsize] at (5.75,13.75) {${x}^{(0)}_1$};
\node [font=\normalsize] at (9.5,13.75) {${x}^{(0)}_\rm{k}$};
\node [font=\normalsize] at (12,13.75) {${x}^{(0)}_M$};
\draw  (5,12) rectangle (6.25,10.75);
\node [font=\normalsize] at (6.5,11) {};
\node [font=\normalsize] at (5.25,10.75) {};
\draw  (6.25,12) rectangle (7.5,10.75);
\draw  (11.25,12) rectangle (12.5,10.75);
\node [font=\normalsize] at (4,11.25) {$\BF{x}^{(i)}:$};
\node [font=\normalsize] at (8.25,11.5) {};
\node [font=\normalsize] at (8.25,11.5) {};
\node [font=\normalsize] at (8.75,11.25) {};
\node [font=\normalsize] at (8.75,11.25) {};
\draw [short] (7.5,12) -- (7.75,12);
\draw [short] (7.5,10.75) -- (7.75,10.75);
\draw [short] (8.5,12) -- (8.75,12);
\draw [short] (8.5,10.75) -- (8.75,10.75);
\node [font=\normalsize] at (8,11.25) {$\ldots$};
\node [font=\normalsize] at (5.5,10.5) {1};
\node [font=\normalsize] at (6.75,10.5) {$2$};
\node [font=\normalsize] at (9.35,10.45) {$\pi^{(i)}(\rm{k})$};
\node [font=\normalsize] at (8.25,11.5) {};
\draw [short] (10,12) -- (10.25,12);
\draw [short] (11,12) -- (11.25,12);
\draw [short] (10,10.75) -- (10.25,10.75);
\draw [short] (11,10.75) -- (11.25,10.75);
\node [font=\normalsize] at (10.5,11.25) {$\ldots$};
\node [font=\normalsize] at (11.75,10.5) {$M$};
\node [font=\normalsize] at (7,11.25) {$\rm{x}^{(i)}_2$};
\node [font=\normalsize] at (5.75,11.25) {$\rm{x}^{(i)}_1$};
\node [font=\normalsize] at (9.35,11.25) {$\rm{x}^{(i)}_{\pi^{(i)}(\rm{k})}$};
\node [font=\normalsize] at (12,11.25) {$\mathrm{x}^{(i)}_M$};
\draw [ color={rgb,255:red,240; green,0; blue,0}, line width=1pt, ->, >=Stealth] (9.25,13.25) .. controls (7.75,12.5) and (11,13) .. (9.25,12) ;
\node [font=\normalsize] at (10.75,12.5) {$\pi^{(i)}(\rm{k})$};
\draw [ color={rgb,255:red,240; green,0; blue,0} ] (8.75,14.5) rectangle (10,13.25);
\draw [ color={rgb,255:red,240; green,0; blue,0} ] (8.75,12) rectangle (10,10.75);
\end{circuitikz}
 }%
\caption{Example of selected position $k$ in the initial solution $\bm{x}^{(0)}$ and the corresponding position in individual $\BF{x}^{(i)}$ according to the permutation $\pi^{(i)}$.}
\label{fig:permutation}
\end{figure}

The key idea of our heuristic is to increase the distance $\dL(x^{(0)}_{\rm{k}}, \rm{x}^{(i)}_{\pi^{(i)}(\rm{k})})$ between those filters, when $\textsc{LAP}(\dL, \bm{x}^{(0)}, \BF{x}^{(i)}) < s$ and reduce otherwise.
This is implemented in lines~\ref{alg:inv-lap:reorder},~\ref{alg:inv-lap:p},~\ref{alg:inv-lap:bigger},~\ref{alg:inv-lap:bigger-sample},~\ref{alg:inv-lap:smaller} and~\ref{alg:inv-lap:smaller-sample}, that generate candidate solution $\BF{y}$ with the described change in one of its components.
The implementation details from the pseudocode are visualized in Figure~\ref{fig:sorted}.
For the sake of shortness in our pseudocode Algorithm~\ref{alg:dd-ea} and Figure~\ref{fig:sorted} we represent $\textsc{LAP}(\dL, \bm{x}^{(0)}, \BF{x}^{(i)})$ by $\rm{v}^{(i)}$, and $\textsc{LAP}(\dL,\bm{x}^{(0)}, \BF{y})$ by $\rm{v}_{\BF{y}}$.

\uthree{In Figure~\ref{fig:sorted}, we demonstrate the selection procedure of the new filter in both cases of how the total distance $\textsc{LAP}$ compares with $s$. When this distance is bigger than $s$, the filters to the left are considered. Otherwise, filters to the right of $f_p$ are considered. The selection of a filter from the considered ones is done by chance according to the Harmonic distribution~\cite{doerr2018static}. This distribution selects filters closer to the initial one with higher probability than selecting those that are further.}   
It is not guaranteed that $\rm{v}_{\BF{y}}$  is smaller or bigger than $\rm{v}^{(i)}$ because the assignment $\pi_{\BF{y}}$ might be not equal to $\pi^{(i)}$.
Therefore, we randomize this heuristic and use it as a variation operator in the inherent structure of EA.

\begin{figure}[!tb]
\centering
\resizebox{0.6\linewidth}{!}{%
\begin{circuitikz}
\tikzstyle{every node}=[font=\normalsize]
\draw  (3.75,13.25) rectangle (5,12);
\draw  (5,13.25) rectangle (6.25,12);
\draw  (7.5,13.25) rectangle (8.75,12);
\draw  (10,13.25) rectangle (11.25,12);
\draw  (12.5,13.25) rectangle (13.75,12);
\draw  (13.75,13.25) rectangle (15,12);
\draw [short] (6.25,13.25) -- (6.5,13.25);
\node [font=\normalsize] at (7,12.75) {};
\draw [short] (6.25,12) -- (6.5,12);
\node [font=\normalsize] at (7.5,12.25) {};
\node [font=\normalsize] at (7,12.75) {};
\node [font=\normalsize] at (7.5,12) {};
\draw [short] (7.25,13.25) -- (7.5,13.25);
\node [font=\normalsize] at (7.5,13.25) {};
\node [font=\normalsize] at (6.75,10.75) {};
\draw [short] (7.25,12) -- (7.5,12);
\node [font=\normalsize] at (7.25,12) {};
\node [font=\normalsize] at (11,11.5) {};
\draw [short] (11.25,13.25) -- (11.5,13.25);
\node [font=\normalsize] at (10.5,12.75) {};
\node [font=\normalsize] at (12,10.75) {};
\draw [short] (11.25,12) -- (11.5,12);
\node [font=\normalsize] at (11.5,12) {};
\node [font=\normalsize] at (12.25,11.5) {};
\draw [short] (12.25,13.25) -- (12.5,13.25);
\node [font=\normalsize] at (11.75,12.75) {};
\node [font=\normalsize] at (12.5,10.75) {};
\draw [short] (12.25,12) -- (12.5,12);
\node [font=\normalsize] at (12,12) {};
\node [font=\normalsize] at (4.25,12.5) {$f_1$};
\node [font=\normalsize] at (5.5,12.5) {$f_2$};
\node [font=\normalsize] at (8.1,12.5) {$f_{\rm{p}-1}$};
\node [font=\normalsize, color={rgb,255:red,240; green,0; blue,0}] at (9.25,12.5) {$f_{\rm{p}}$};
\node [font=\normalsize] at (10.6,12.5) {$f_{\rm{p}+1}$};
\node [font=\normalsize] at (13.1,12.5) {$f_{L-1}$};
\node [font=\normalsize] at (14.25,12.5) {$f_L$};
\node [font=\normalsize] at (7,12.5) {\dots};
\node [font=\normalsize] at (12,12.5) {\dots};
\draw [->, >=Stealth] (8.25,11.5) -- (3.75,11.5);
\draw [->, >=Stealth] (8.25,11.5) -- (8.25,9.5);
\node [font=\normalsize, rotate around={270:(0,0)}] at (8.5,10.5) {Probability};
\node [font=\normalsize] at (13.5,10.75) {};
\draw [->, >=Stealth] (10.5,11.5) -- (15,11.5);
\draw [->, >=Stealth] (10.5,11.5) -- (10.5,9.5);
\node [font=\normalsize, rotate around={90:(0,0)}] at (10.25,10.5) {Probability};
\draw [ color={rgb,255:red,240; green,0; blue,0}, line width=1pt, dashed] (10.75,9.5) .. controls (11,11.25) and (12.5,11.25) .. (15,11.25);
\node [font=\normalsize, color={rgb,255:red,240; green,0; blue,0}, rotate around={-360:(0,0)}] at (9.5,14.25) {$\rm{x}^{(i)}_{\pi^{(i)}(\rm{k})}$};
\draw [ color={rgb,255:red,240; green,0; blue,0}, ->, >=Stealth] (9.25,13.75) -- (9.25,13);
\draw [ color={rgb,255:red,240; green,0; blue,0}, line width=1pt, dashed] (8,9.5) .. controls (7.75,11.25) and (6.25,11.25) .. (3.75,11.25);
\node [font=\normalsize, rotate around={-360:(0,0)}] at (3.25,12.5) {$\mathbb{L}:$};
\draw [ color={rgb,255:red,0; green,0; blue,240} ] (7.9,10) circle (0.25cm);
\draw [ color={rgb,255:red,0; green,0; blue,240}, ->, >=Stealth] (7.9,10.25) -- (7.9,12);
\draw [ color={rgb,255:red,0; green,0; blue,240}, ->, >=Stealth] (5.5,11.5) -- (5.5,12);
\draw [ color={rgb,255:red,0; green,0; blue,240} ] (5.5,11.25) circle (0.25cm);
\draw [ color={rgb,255:red,0; green,0; blue,240}, ->, >=Stealth] (4.25,11.5) -- (4.25,12);
\draw [ color={rgb,255:red,0; green,0; blue,240} ] (4.25,11.25) circle (0.25cm);
\draw [ color={rgb,255:red,0; green,0; blue,240}, ->, >=Stealth] (10.85,10.25) -- (10.85,12);
\draw [ color={rgb,255:red,0; green,0; blue,240} ] (10.85,10) circle (0.25cm);
\draw [ color={rgb,255:red,0; green,0; blue,240}, ->, >=Stealth] (13,11.5) -- (13,12);
\draw [ color={rgb,255:red,0; green,0; blue,240} ] (13,11.25) circle (0.25cm);
\draw [ color={rgb,255:red,0; green,0; blue,240}, ->, >=Stealth] (14.25,11.5) -- (14.25,12);
\draw [ color={rgb,255:red,0; green,0; blue,240} ] (14.25,11.25) circle (0.25cm);
\draw [ color={rgb,255:red,0; green,0; blue,240}, line width=2pt, ->, >=Stealth] (9,13.5) -- (3.75,13.5);
\draw [ color={rgb,255:red,0; green,0; blue,240}, line width=2pt, ->, >=Stealth] (9.75,13.5) -- (15,13.5);
\node [font=\normalsize, color={rgb,255:red,0; green,0; blue,240}, rotate around={-360:(0,0)}] at (5.5,14) {$s < \rm{v}^{(i)}$};
\node [font=\normalsize, color={rgb,255:red,0; green,0; blue,240}, rotate around={-360:(0,0)}] at (13.25,14) {$s > \rm{v}^{(i)}$};
\draw [ color={rgb,255:red,240; green,0; blue,0} ] (8.75,13.25) rectangle (10,12);
\end{circuitikz}
 }%
\caption{Example of ordered filters from the library $\mathbb{L}$ in ascending order of the value $\dL(x_k^{(0)}, f_i)$ for each filter $f_i$. The filter  $f_p$ is colored in red because it is changed in $\BF{x}^{(i)}$ to create the mutant $\BF{y}$. If the current distance $\rm{v}^{(i)}$ between $\bm{x}^{(0)}$ and $\BF{x}^{(i)}$ is bigger than the target distance $s$, then filters to the left of $f_p$ are considered to substitute $f_p$. Otherwise, filters to the right of $f_p$ are considered. The selection of a filter from the considered ones is done by chance according to Harmonic distribution. It is visualized in the lower part of the picture by plots, that depict the probability of selecting each filter.}
\label{fig:sorted}
\end{figure}

The choice of the new value for the considered component is done through Discrete Harmonic Distribution in lines~\ref{alg:inv-lap:bigger-sample} and~\ref{alg:inv-lap:smaller-sample} of Algorithm~\ref{alg:inv-lap}.
It is defined and applied for mutations, for example, in~\cite{doerr2018static}, where it is shown that sampling from it gives the best possible number of iterations to locate the optimum of the problem considered in that work. 
Our choice of this distribution is motivated by \utwo{this} rigorous result.
The probabilities assigned to integers via such harmonic sampling are visualized in lower part of Figure~\ref{fig:sorted}.
Notably, the described variation may not change the candidate solution, therefore, we repeat the procedure $R$ times unless $\BF{y}$ becomes different from $\BF{x}^{(i)}$ earlier.

When the Algorithm~\ref{alg:inv-lap} is applied via function $\textsc{DDMutation}$ it takes three arguments: candidate solution $\bm{x}$, metric $\dL$ and constant $\rm{S}$.
To define the parameters of the algorithm we set $\bm{x}^{(0)}$ to the passed $\bm{x}$, constant $s = \rm{S}$, $b = 10^3$, $\lambda = 5$ and $R = 10$.
The rest attributes of Algorithm~\ref{alg:inv-lap} are set to the ones with the same name from the context of OFS problem.
We used the implementation of $\lap$ and $\lap^*$ from the library skopt~\cite{skoptLAP}.

\subsubsection{Distribution $\mathcal{D}$ Over Step-Sizes.}
\label{sec:step-size-distr}

The distribution $\mathcal{D}$ has maximal entropy on the interval $(0, 1)$ over all distribution with the fixed mean $m$.
We used constant factor $m = 0.1$ and applied methodology from~\cite{antonov2023curing} to derive this distribution.
Visualization of $\mathcal{D}$ is proposed in Figure~\ref{fig:step-sizes-distr}.

\begin{figure}[!ht]
    \centering
    \includegraphics[width=0.5\textwidth]{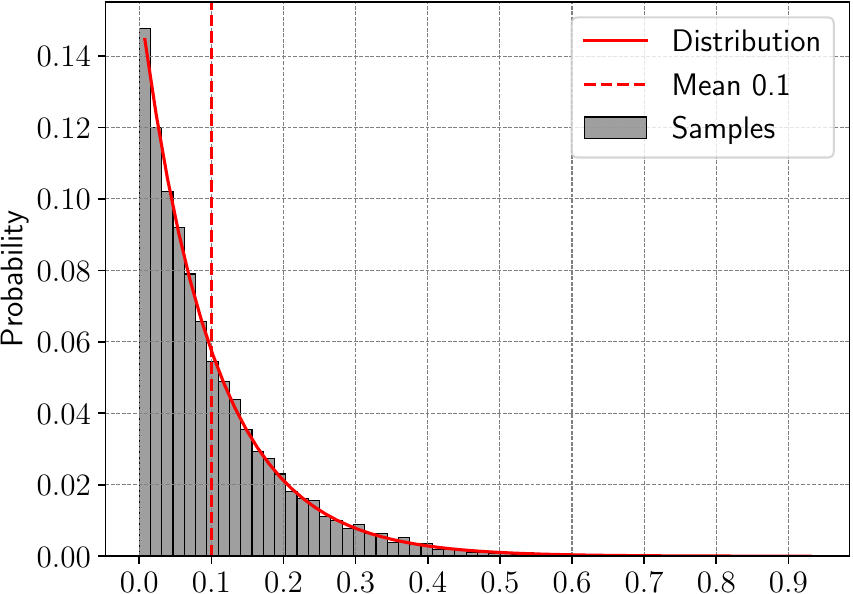}
    \caption{Distribution $\mathcal{D}$ over the step sizes in mutation operator.}
    \label{fig:step-sizes-distr}
\end{figure}

In order to demonstrate that methods described in Section~\ref{sec:gamma}, Section~\ref{sec:internal-opt} and Section~\ref{sec:step-size-distr} can efficiently work together, we designed the following computational experiment.
We take 100 evenly spaced values $\set{s^{(i)}}_{i=1}^{100}$ from $(10^{-4}, 1-10^{-5}) \subset (0, 1)$.
Then we map each of them to the distance $\set{S^{(i)}}_{i=1}^{100}$, as described in Section~\ref{sec:internal-opt}.
For each $S^{(i)}$ we choose 10 points $\set{\BF{x}^{(i, j)}}_{j=1}^{10}$ uniformly at random from $\mathbb{L}^M$.
For every $\BF{x}^{(i,j)}$ we generated mutant: $$\BF{y}^{(i,j)} \coloneqq \textsc{DDMutation}\!\br{\dL, \BF{x}^{(i,j)}, S^{(i)}},$$ and computed the values: 
$$\mathbb{V}^{(i)} \coloneqq \set{  \dfrac{1}{S^{(i)}} \cdot \mid \lap\!\br{\dL,\BF{x}^{(i,j)},\BF{y}^{(i,j)}} - S^{(i)}\mid}_{j=1}^{10}.$$
Numbers in this family of sets represent the relative deviation from the target distance and the distance between the mutant and the original point.
The smaller those values are, the smaller the \utwo{gas retrieval} error is, and so the more precise our mutation operator is.
We observed that for each $i \in \overline{1, 100}$ the mean $m^{(i)}$ of $\mathbb{V}^{(i)}$ was smaller than $0.05$ for both $d_1$ and $d_2$ taken as $\dL$.
The median of $\set{m^{(i)}}_{i=1}^{100}$ is $\approx 4 \cdot 10^{-5}$ for $d_1$ and $\approx 10^{-4}$ for $d_2$.

Based on this experiment, we conclude that the proposed Algorithm~\ref{alg:inv-lap} works very precisely in the context of OFS, for the considered constants and distance metrics $d_1, d_2$.

\subsection{Comparison of the Defined Metrics}
\label{sec:metrics-cmp}

\begin{figure*}[!tb]
    \begin{tabular}{p{0.3\textwidth}  p{0.3\textwidth}  p{0.3\textwidth}}
         & \adjustbox{valign=t, center}{ \includegraphics[width=0.5\textwidth, trim=0mm 00mm 0mm 0mm,clip]{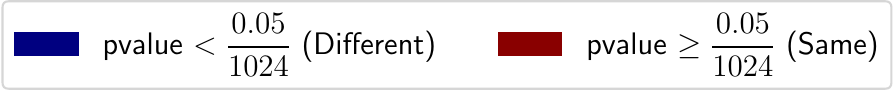}} & \\
        \adjustbox{valign=t, left}{\includegraphics[width=0.4\textwidth, trim=28mm 0mm 0mm 10mm,clip]{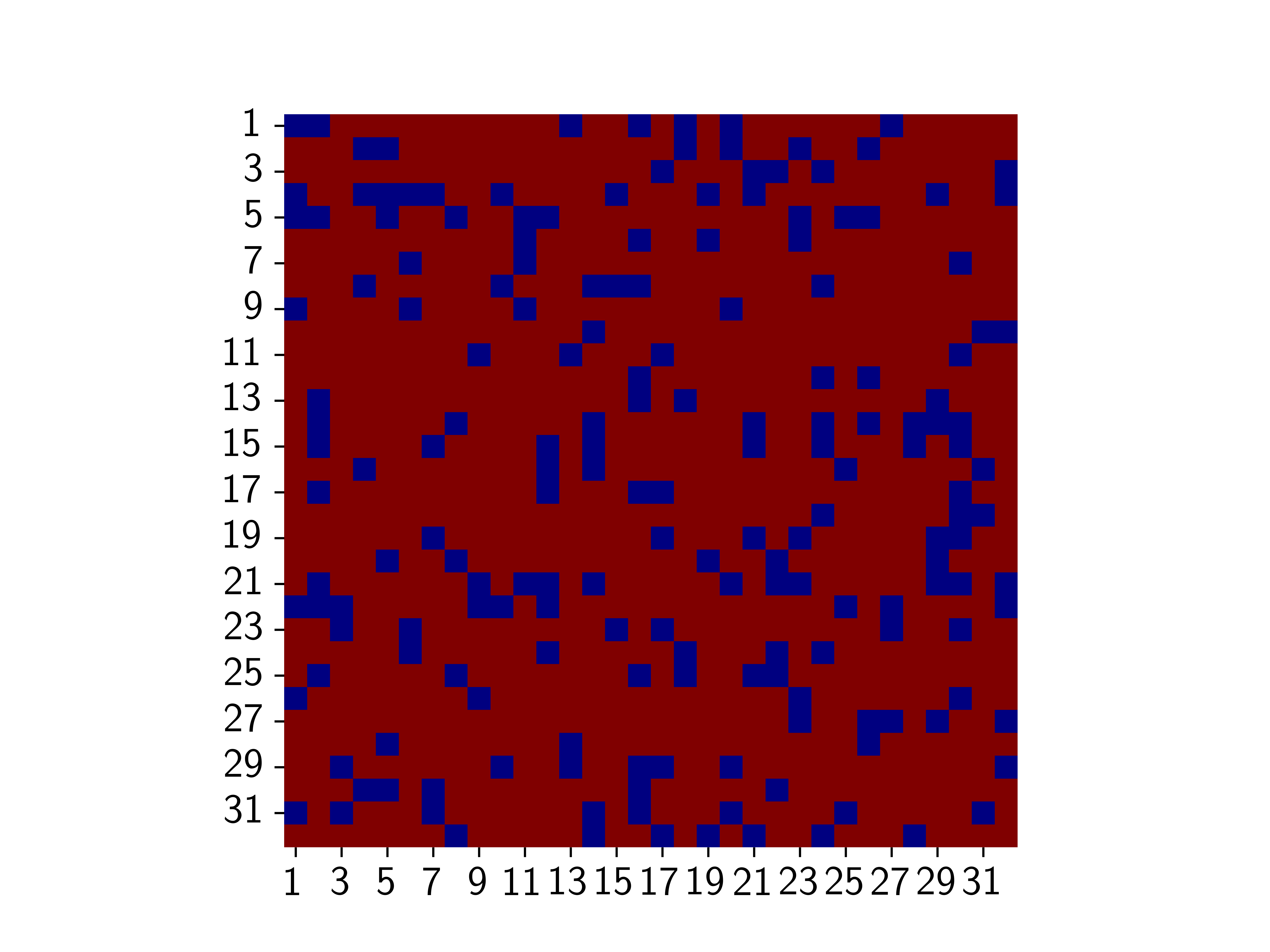}} &
        \adjustbox{valign=t, left}{\includegraphics[width=0.4\textwidth, trim=28mm 0mm 0mm 10mm,clip]{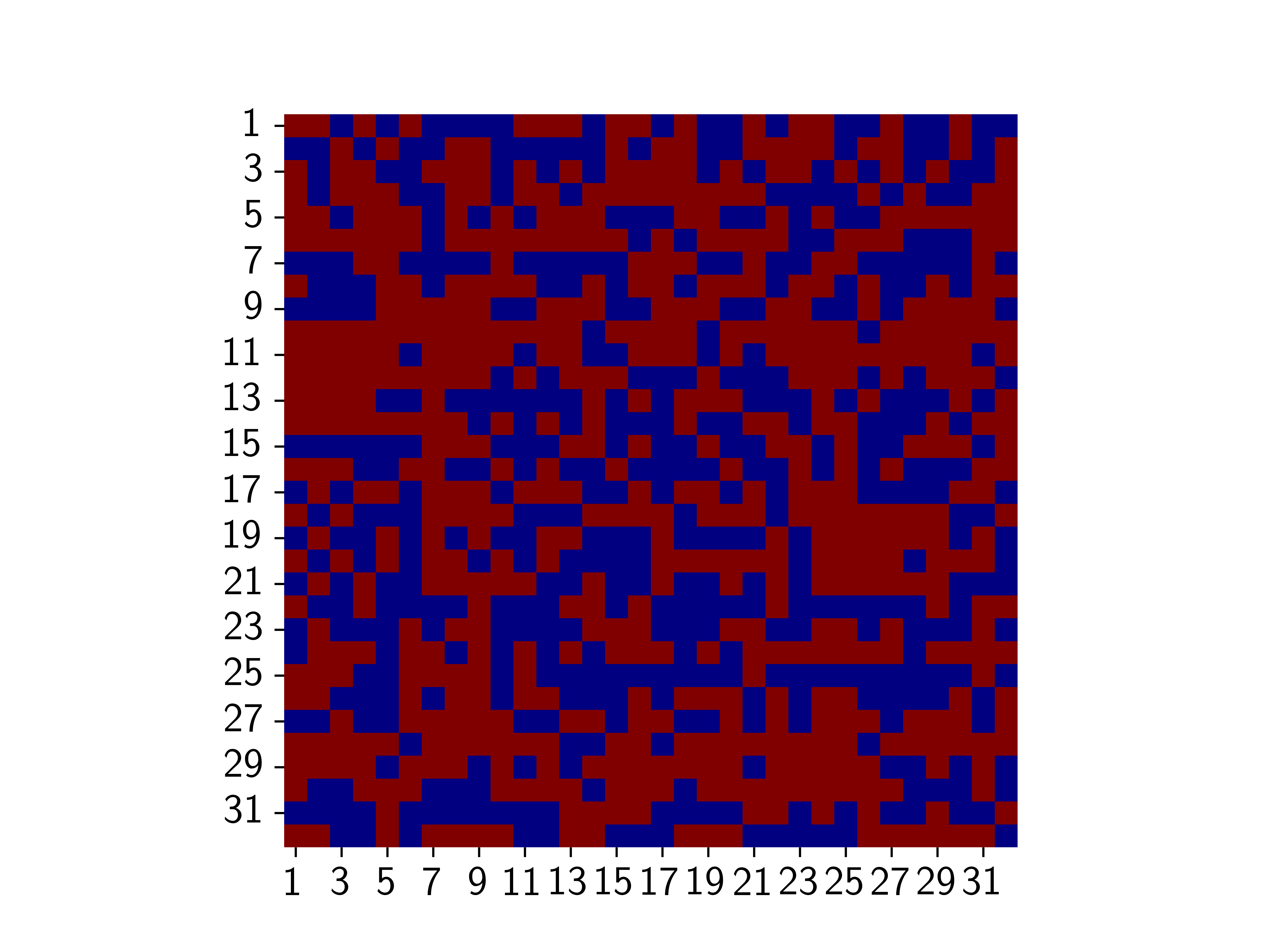}} &
        \adjustbox{valign=t, left}{\includegraphics[width=0.4\textwidth, trim=28mm 0mm 0mm 10mm,clip]{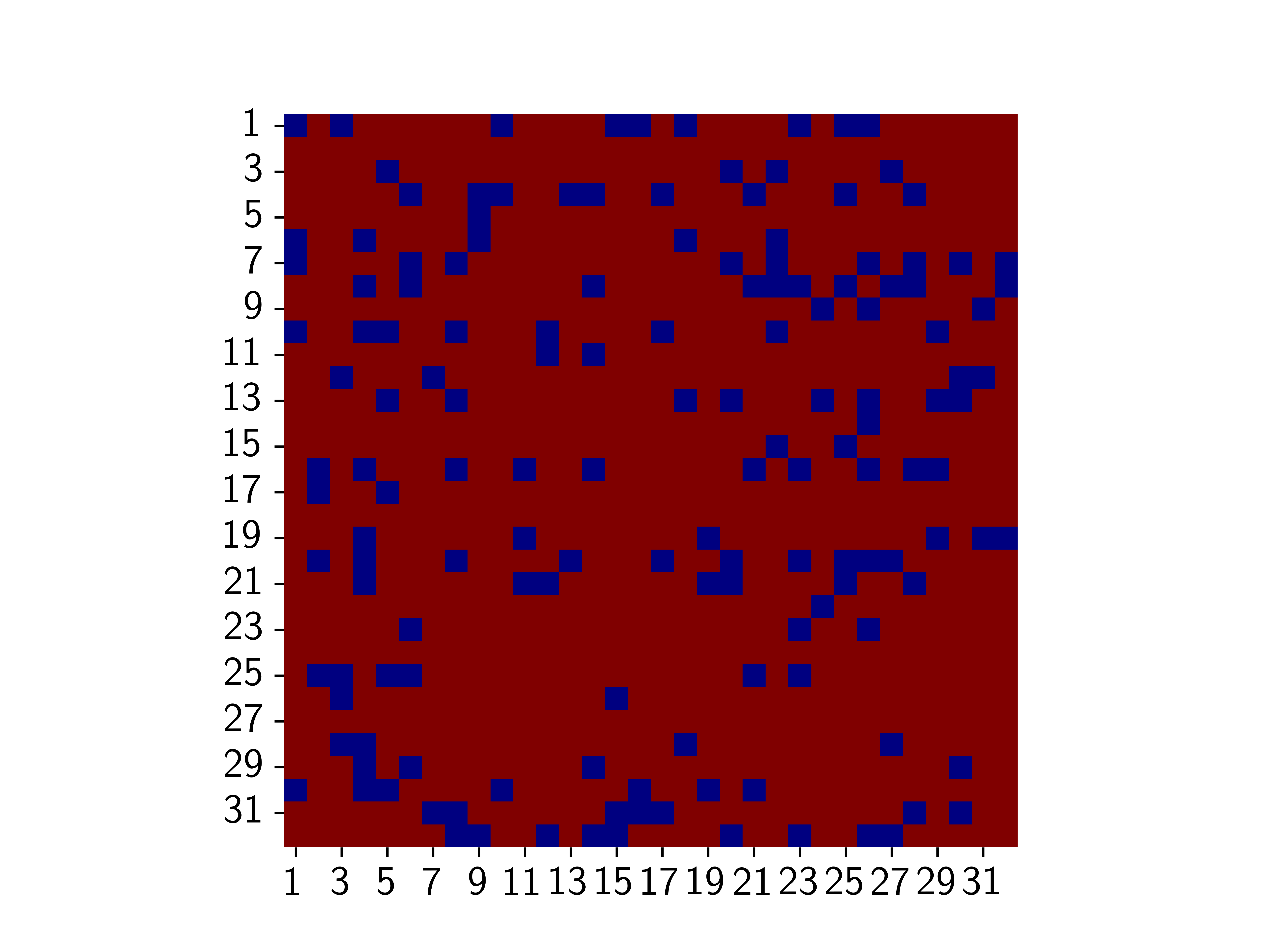}} \\
        \adjustbox{valign=t}{ \scalebox{0.75}{\begin{minipage}{0.4\textwidth}
        \begin{itemize} 
        \item[\textbf{(a)}] \textbf{Hamming distance: \\ $\mathscr{H}(\BF{x}^{(i)}, \BF{y}^{(i,j)}) = 1,$ rejected $18\%$} 
        \end{itemize}
        \end{minipage} } } &
        \adjustbox{valign=t}{ \scalebox{0.75}{
        \begin{minipage}{0.4\textwidth}
        \begin{itemize} 
        \item[\textbf{(b)}] \textbf{$\lap\!\br{d_1, \BF{x}^{(i)}, \BF{y}^{(i,j)}} \in (10 \cdot \delta_1, 15 \cdot \delta_1) ,$ rejected $44\%$} 
        \end{itemize}
        \end{minipage} } } &
        \adjustbox{valign=t}{ \scalebox{0.75}{ \begin{minipage}{0.4\textwidth}
        \begin{itemize} 
        \item[\textbf{(c)}] \textbf{$\lap\!\br{d_2, \BF{x}^{(i)}, \BF{y}^{(i,j)}} \in (10 \cdot \delta_2, 15 \cdot \delta_2),$ rejected $15\%$} 
        \end{itemize}        
        \end{minipage}}}
    \end{tabular}
    \caption{Flatness of neighborhoods in the three cases of the metric space: (a) $(\mathbb{L}^M, \mathscr{H})$, (b) $\br{\mathbb{M}(d_1), \lap(d_1, \cdot, \cdot)}$ and (c) $\br{\mathbb{M}(d_2), \lap(d_2, \cdot, \cdot)}$. }
    \label{fig:n}
\end{figure*}

\utwo{Having defined several distance metrics to quantify differences between solutions, we now turn to comparing the local structure of the search space induced by each metric. 
We begin with the Hamming distance, which underlies the conventional mutation operator in the $(\mu + \lambda)$ EA and the $(\mu/2 + \lambda)$ EA.}
This metric is defined for any $\bm{x}, \bm{y} \in \mathbb{L}^M$ as: 
$$\mathscr{H}(\bm{x}, \bm{y}) = \sum\limits_{i=1}^M \mathbbm{1}\!\cl{x_i \ne y_i}.$$

Since it is not known which distance is better for OFS, it is relevant to make their comparison.
In this section, we propose the first step towards a comprehensive understanding of difference between metrics.
Specifically, we analyze how many solutions with statistically different values of $\Fest_K$ are located in neighborhoods for the three cases of inherent metric: 1) Hamming distance $\mathscr{H}$, 2) $\lap(d_1, \cdot, \cdot)$, 3) $\lap(d_2, \cdot, \cdot)$.
We designed the experiment outlined in Algorithm~\ref{alg:n} to perform this analysis.

\SetKwProg{Cases}{Case}{ is}{}
\begin{algorithm2e}[!tb]
    \caption{Proposed measurement of the number of statistically different solutions in neighborhoods of OFS search space}
    \label{alg:n}
    \KwData{$d \gets $ one of $\set{\mathscr{H}, d_1, d_2}$;}
    \For{$i \gets 1, 2, \ldots, n$}{
        $\BF{x}^{(i)} \gets \uar{\mathbb{L}^M}$\label{alg:n:x}\;
        $\BF{w}^{(i)} \gets \br{\mathrm{D}^2\!\cl{\BF{x}^{(i)}, \xi^{(p)}}}_{p=1}^{K}$\label{alg:n:Dx}\;
        \For{$j \gets 1, 2, \ldots, n$}{
            \If{$d = \mathscr{H}$}{
                $\BF{y}^{(i,j)} \gets \BF{x}^{(i)}$\;
                $k \gets 1 + ((j-1) \mod M)$\label{alg:n:k}\;
                $\rm{y}^{(i, j)}_{k} \gets \uar{\overline{1, L} \setminus \set{\rm{x}^{(i)}_{k}}}$\label{alg:n:y-hamming}\;
            }
            \If{$d \in \set{d_1, d_2} $ }{
                $\delta \gets \widehat{\delta_{\text{min}}}(d)$\label{alg:n:delta}\;
                $\rm{s} \gets \uar{10 \cdot \delta, 15 \cdot \delta }$\label{alg:n:step-size}\;
                $\BF{y}^{(i,j)} \gets \textsc{DDMutation}\br{\BF{x}^{(i)}, d, \rm{s}}$\label{alg:n:y-lap}\;
            }
            $\BF{z}^{(i, j)} \gets \br{\mathrm{D}^2\!\cl{\BF{y}^{(i, j)}, \xi^{(p)}}}_{p=1}^{K}$\label{alg:n:Dz}\;
            $\rm{P}_{i, j} \gets \textsc{Welch}\!\br{\BF{w}^{(i)}, \BF{z}^{(i, j)}}$\label{alg:n:welch}\;
        }
    }
    \Return $\set{\set{\BF{x}^{(i)}}_{i=1}^{n}, \set{\BF{y}^{(i, j)}}_{i,j=1}^{n}, \BF{P}}$\label{alg:n:ret}\;
\end{algorithm2e}

In Algorithm~\ref{alg:n} the following procedure is repeated $n$ times.
In line~\ref{alg:n:x} of Algorithm~\ref{alg:n} a point $\BF{x}^{(i)}$ is chosen uniformly at random from the search space $\mathbb{L}^M$.
The random variable $\Sdiv^2$ is sampled $K$ times for the point $\BF{x}^{(i)}$ in line~\ref{alg:n:Dx}.
Such samples are used to find value $\Fest_K$ as defined in Eq.~\eqref{eq:approx-objf}.

For each $i$ the following generation of mutants is repeated $n$ times.
If the mutation is made according to the Hamming Distance $\mathcal{H}$, then the algorithm sets $k$ to the number of the changed component in line~\ref{alg:n:k}.
Notably, it is cyclic with period $M$, which emphasizes that 1) each component is chosen and 2) $n$ can be greater than $M$.
The filter at the selected component $k$ is changed in line~\ref{alg:n:y-hamming} to a different filter taken uniformly at random from $\mathbb{L}$.
The hamming distance $1$ is the smallest mutation that a conventional mutation operator can produce.

If the considered inherent metric on $\mathbb{L}$ is either $d_1$ or $d_2$, then in line~\ref{alg:n:delta} Algorithm~\ref{alg:n} proceeds as follows. 
It sets the value $\delta$ to the approximated value of the smallest distance on $\mathbb{M}(d)$, defined in Section~\ref{sec:gamma}.
Then it chooses a step size uniformly at random from $10\cdot \delta$ to $15\cdot \delta$ in line~\ref{alg:n:step-size}.
The probability of obtaining such a step size from $\mathcal{D}$ that maps to the distance smaller than $10\cdot \delta$ is $\approx 0.025$ for both $d_1, d_2$.
This is a relatively small chance.
Therefore, the factors for the distance in line~\ref{alg:n:step-size} constitute small magnitudes that are possible to get in one mutation.
In line~\ref{alg:n:y-lap} the mutant is created using the function $\textsc{DDMutation}$ defined in Section~\ref{sec:internal-opt}.

In all the cases of the distance, the Algorithm~\ref{alg:n} in line~\ref{alg:n:Dz} makes $K$ samples of the random variable $\Sdiv^2$ on the mutant.
Then the samples of the parent and mutant are used for statistical comparison in line~\ref{alg:n:welch}.
The Weltch t-test is applied as discussed in Section~\ref{sec:objf:noise}.
The pvalue of this test is used to define one element of matrix $\BF{P}$.

Eventually, Algorithm~\ref{alg:n} returns all parents, corresponding mutants, and the matrix $\BF{P}$.
This matrix is visualized in Figure~\ref{fig:n} for the three cases of inherent distance: $\mathscr{H}, \lap(d_1, \cdot, \cdot), \lap(d_2, \cdot, \cdot)$.
When one statistical comparison is made, it is natural to reject the null hypothesis when pvalue is smaller than \utwo{chosen $\alpha=0.05$}.
Since the goal of the experiment is to draw conclusions on the number of statistically different neighbors, we take into account Bonferroni correction.
That is, we define the maximal pvalue that is still acceptable to reject the null hypothesis by $n^2 = 1024$.
So, when the pvalue is smaller than $0.05/1024$, the corresponding points are considered different and the square is colored in blue in Figure~\ref{fig:n}.
Otherwise, the null hypothesis is not rejected and the square is colored in red.

Figure~\ref{fig:n} depicts that $\lap(d_1, \cdot, \cdot)$ brought more variability to $\mathbb{M}(d_1)$ than the other two metrics on their spaces.
Hamming distance makes the considered parts of the search space $\mathbb{L}^M$ slightly less flat than $\lap(d_2, \cdot, \cdot)$ makes its space $\mathbb{M}(d_2)$.
Evidently, $\lap(d_1, \cdot, \cdot)$ stands out in this experiment, however, it does not mean that it is the best distance for OFS problem.
This is due to the possibly big jumps of value $\Fest_K$ between the points with statistically different performances.
We plan to study how rugged the landscapes are in future works.

\subsection{Distance-Driven UMDA}
\label{sec:dd-umda}

In this section, we propose an extension of UMDA, introduced in Section~\ref{sec:umda}, that takes advantage of the distances $d_1, d_2.$
This is done in multiple steps.

Notice that it is wasteful to maintain a distribution for every component of $\bm{x} \in \mathbb{L}^M$, since permutations of components in $\bm{x}$ do not influence the performance of $\bm{x}$.
Therefore, we make the first modification of UMDA and store a single vector $\BF{p}$ of size $L$, that contains probabilities for filters in the library.
This distribution $\BF{p}$ is updated for every component in the same way as in Algorithm~\ref{alg:umda}.
We will call this modification of UMDA as UMDA-U.

\utwo{The global distribution in UMDA‑U does not tend to concentrate its probability mass on a few filters.
As a result, when we sample a solution from this distribution, it often contains duplicates and exceeds the allowed number of distinct filters. 
To fix this, we introduce a second modification that adjusts sampling to respect the filter‐count constraint.}

The idea of Probabilistic Logic Sampling, introduced for Bayesian Networks in~\cite{henrion1988propagating}, is applied in Estimation of Distribution algorithms that address permutations-based COPs~\cite{ceberio2012review}.
We employ this idea to avoid repetitions of the same filter multiple times.
Specifically, we create the conditional distribution $P(\rm{x}_i | \rm{x}_1, \rm{x}_2, \dots, \rm{x}_{i-1})$ to sample the component $\rm{x}_i$ of the candidate solution $\BF{x}$.
If $\rm{x}_i \in \set{\rm{x}_1, \rm{x}_2, \dots, \rm{x}_{i-1}}$, then $P(\rm{x}_i | \rm{x}_1, \rm{x}_2, \dots, \rm{x}_{i-1})$ is set to $0$, otherwise it is set to $C \cdot \BF{p}_{\rm{x}_i}$.
Constant $C$ is chosen in such a way that: $\sum_{\rm{x}_i \in \mathbb{L}} P(\rm{x}_i | \rm{x}_1, \rm{x}_2, \dots, \rm{x}_{i-1}) = 1.$
We call this modification of UMDA-U as UMDA-U-PLS.

Finally, we account for metric $\dL$ in the modification of UMDA-U-PLS called UMDA-U-PLS-Dist, which we summarize in Algorithm~\ref{alg:umda-u-pls-dist}.
On every iteration of the loop in line~\ref{alg:u:t}, this algorithm makes $\lambda$ evaluations of the passed objective function $F$, so the content of the loop is repeated $\lfloor b/\lambda \rfloor$ times.
The loop in line~\ref{alg:u:lambda} is used to create the population of $\lambda$ candidate solutions.
In line~\ref{alg:u:sample1}, an integer in the first component is sampled as in the original UMDA.
Then the conditional distribution is defined for every filter $j \in \mathbb{L}$.
The expression for this distribution is proposed in line~\ref{alg:u:cond-distr}, where the constant $C$ is chosen for every $i$.
This constant should be such that $\sum_{j=1}^L P^{(t, i)}\!\br{j  \mid \rm{x}_1^{(k)},\dots, \rm{x}_{i-1}^{(k)}} = 1$.
The value of $P\!\br{j \mid \rm{x}_1, \rm{x}_2, \dots, \rm{x}_{i-1}}$ becomes bigger, when the filter $j$ further from filters $\set{\rm{x}_1, \rm{x}_2, \dots, \rm{x}_{i-1}}$.
To represent the distance between the set and one filter, we used the sum of distances according to the given metric $\dL$.
The value of another component $i$ is then sampled in line~\ref{alg:u:sample2} from the constructed distribution $P^{(t, i)}$.

\scalebox{1}{
\begin{algorithm2e}[H]
    \caption{Proposed UMDA-U-PLS-Dist, equipped with a distance-driven heuristic for integer programming, where objective function $F: \mathbb{L}^M \to \R$ is minimized. Points in its domain have dimensionality $M$. Value at every component of every point is an integer from the set $\mathbb{L} \coloneqq \overline{1,L}$, which is a metric space with distance $\dL$. Given positive integers $\mu \le \lambda$, the algorithm produces $\lambda$ offspring and selects $\mu$ ones from them. The maximal number of $F$ evaluations is limited by constant $b$.}
    \label{alg:umda-u-pls-dist}
    For every $j \in \overline{1, L}$ initialize: $\rm{p}^{(1)}_{j} \gets 1/L$\;
    Set min for probability: $p_{\text{min}} \gets \dfrac{1}{(L-1)\cdot M}$\;
    Initialize best-so-far: $\BF{x}^* \gets \uar{\mathbb{L}^M}$\;
    \For{$t \gets 1, 2, \ldots, \lfloor b/\lambda \rfloor $\label{alg:u:t}}{
        \For{$k \gets 1, 2, \ldots, \lambda$\label{alg:u:lambda}}{
            Sample integer: $\rm{x}_1^{(k)} \sim \BF{p}^{(t)}$\label{alg:u:sample1}\;
            \For{$i \gets 2, 3, \ldots, M$\label{alg:u:M}}{
                \nlnonumber \DontPrintSemicolon \textbf{Construction of conditional distr. $P^{(t, i)}\!\br{\cdot  \mid \rm{x}_1^{(k)},\dots, \rm{x}_{i-1}^{(k)}}$ over integers in $\mathbb{L}$ :}\; \PrintSemicolon
                \nl \For{$j \gets 1, 2, \ldots, L$\label{alg:u:L}}{
                    $P^{(t, i)}\!\br{j  \mid \rm{x}_1^{(k)}, \rm{x}_2^{(k)}, \dots, \rm{x}_{i-1}^{(k)}} \gets 
                    \mathbbm{1}\!\cl{j \notin \set{\rm{x}_1^{(k)}, \rm{x}_2^{(k)}, \dots, \rm{x}_{i-1}^{(k)}}} \cdot C \cdot \rm{p}_j^{(t)} \cdot \sum\limits_{r=1}^{i-1}\dL(j, x_r) $\label{alg:u:cond-distr}\;
                }
                Sample from the distribution: $\rm{x}_i^{(k)} \sim P^{(t, i)}\!\br{\cdot  \mid \rm{x}_1^{(k)},\dots, \rm{x}_{i-1}^{(k)}}$\label{alg:u:sample2}\;
            }
        }
        Set $\br{\BF{y}^{(k)}}_{k=1}^{\lambda}$ to reordered $\br{\BF{x}^{(k)}}_{k=1}^{\lambda}$:
        $F(\BF{y}^{(1)}) \le F(\BF{y}^{(2)}) \le \ldots \le F(\BF{y}^{(\lambda)})$\label{alg:u:reorder}\;
        \For{$j \gets 1, 2, \ldots, L$\label{alg:u:loop-update}}{
            Update distribution: $\rm{p}^{(t+1)}_{j} \gets \dfrac{1}{\mu \cdot M}\sum\limits_{k=1}^{\mu}\sum\limits_{i=1}^{M} \mathbbm{1}\!\cl{\rm{y}_i^{(k)} = j}$\label{alg:u:update}\;
            Adjust to lower/upper bounds: $p_{\text{min}} \le \rm{p}^{(t+1)}_{j} \le 1 - p_{\text{min}}$\label{alg:u:adjust}\;
        }
        \If {$F(\BF{x}^*) \ge F(\BF{y}^{(1)})$} {
            Update best-so-far: $\BF{x}^* \gets \BF{y}^{(1)}$\label{alg:u:best}\;
        }
    }
    \Return $\BF{x}^*$\label{alg:u:ret}\;
\end{algorithm2e}
}

When $\lambda$ new points are created, they are ordered in ascending order of the value assigned by $F$ in line~\ref{alg:u:reorder}.
Then, the underlining distribution $\BF{p}$ is updated.
In line~\ref{alg:u:update} we calculate the number of times that filter $j$ occurred in the $\mu$ top-performing candidate solutions.
This number is divided by the total number of components in those points.
The resulting probability $\rm{p}_j$ is then adjusted in line~\ref{alg:u:adjust} to belong to the range $[p_{\text{min}}, 1-p_{\text{min}}]$, by the following rule: 
$$\rm{p}_j = \max \set{p_{\text{min}}, \min \set{\rm{p}_j,  1-p_{\text{min}}}}.$$
The Algorithm~\ref{alg:umda-u-pls-dist} maintains the best-so-far found solution $\BF{x}^*$, which is updated in line~\ref{alg:u:best}.
This solution is returned in line~\ref{alg:u:ret}, when the computation budget is exhausted.

We applied all version of UMDA using $F = \Fest_K$ with the fixed $K = 10^3$, $\mu = 5, \lambda = 50, b = 2100$.    

\subsection{Numerical Results}

In this section, we consider the leading algorithms and their distance-driven analogs.
The algorithms are categorized in Table~\ref{tab:algo-legend}. We group them by their broader classes for clarity.
That is $(\mu + \lambda)$ EA (Section~\ref{sec:ea-simple}), $(\mu/2 + \lambda)$ EA (Section~\ref{sec:ea-simple-cross}), UMDA (Section~\ref{sec:umda}), 
DD-$(\mu/2 + \lambda)$ EA with distances $d_1, d_2$ (Section~\ref{sec:dd-ea}), UMDA-U, UMDA-U-PLS-Dist and UMDA-U-PLS-Dist with distances $d_1, d_2$ (Section~\ref{sec:dd-umda}).
We denote the set of these algorithms as $\mathbb{A}_2$.
Notably, we did not include DD-$(\mu + \lambda)$ EA to $\mathbb{A}_2$ because it was outperformed by DD-$(\mu/2 + \lambda)$ EA, which emphasizes the usefulness of the crossover operator on this SCOP.

\begin{table}[htbp]
  \centering
  \caption{Algorithms and their legend used in Figure~\ref{fig:application2}}
  \label{tab:algo-legend}
  \begin{tabularx}{0.9\linewidth}{@{}llX@{}}
    \toprule
    \textbf{Algorithm} & \textbf{Discussed in} & \textbf{Legend in Figure~\ref{fig:application2}} \\
    \midrule

    \multicolumn{2}{@{}l}{\textbf{Evolutionary Algorithms}}\\[2pt]
    $(\mu+\lambda)$ EA               & Section~\ref{sec:ea-simple} & \adjustbox{valign=m}{\includegraphics[scale=1]{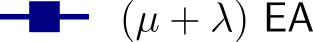}} \\
    $(\mu/2 + \lambda)$ EA           & Section~\ref{sec:ea-simple-cross} & \adjustbox{valign=m}{\includegraphics[scale=1]{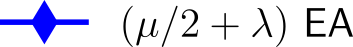}} \\
    \addlinespace[4pt]

    \midrule
    \multicolumn{2}{@{}l}{\textbf{UMDA and Extensions}}\\[2pt]
    UMDA                        & Section~\ref{sec:umda} & \adjustbox{valign=m}{\includegraphics[scale=1]{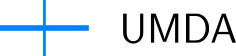}} \\
    UMDA-U                      & Section~\ref{sec:dd-umda} & \adjustbox{valign=m}{\includegraphics[scale=1]{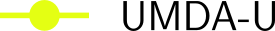}} \\
    UMDA-U-PLS                  & Section~\ref{sec:dd-umda} & \adjustbox{valign=m}{\includegraphics[scale=1]{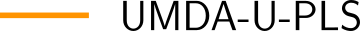}} \\
    \addlinespace[4pt]

    \midrule
    \multicolumn{2}{@{}l}{\textbf{Distance-Driven Algorithms}}\\[2pt]
    DD-$(\mu/2 + \lambda), d_1$ EA                  & Section~\ref{sec:dd-ea} & \adjustbox{valign=m}{\includegraphics[scale=1]{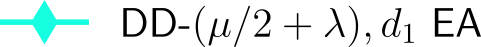}} \\
    DD-$(\mu/2 + \lambda), d_2$ EA                  & Section~\ref{sec:dd-ea} & \adjustbox{valign=m}{\includegraphics[scale=1]{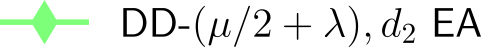}} \\
    UMDA-U-PLS-Dist, $d_1$                          & Section~\ref{sec:dd-umda} & \adjustbox{valign=m}{\includegraphics[scale=1]{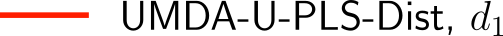}} \\
    UMDA-U-PLS-Dist, $d_2$                          & Section~\ref{sec:dd-umda} & \adjustbox{valign=m}{\includegraphics[scale=1]{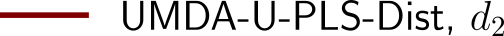}} \\
    \bottomrule
  \end{tabularx}
\end{table}

The results of the application of those algorithms to OFS are presented in Figure~\ref{fig:application2}.
\utwo{Part (a) shows the estimated average quality of the candidate solution proposed at each evaluation step.
The degree of short-term variability reflects how the output variations influence the sampling process.
When the best-so-far curve looks smooth, but the quality-of-solution curve oscillates, it indicates that the sampling distribution does not stabilize.
}

\begin{figure*}[!tb]
    \centering
    \begin{tabular}{ p{0.46\textwidth} p{0.01\textwidth} p{0.46\textwidth} }
         & \adjustbox{valign=m, center}{\includegraphics[width=0.96\textwidth, trim=00mm 00mm 00mm 00mm, clip]{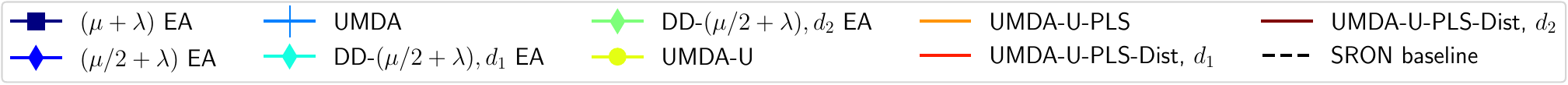}} & \\
        \adjustbox{valign=m, center}{\includegraphics[width=\linewidth, trim=00mm -02mm 8mm 00mm]{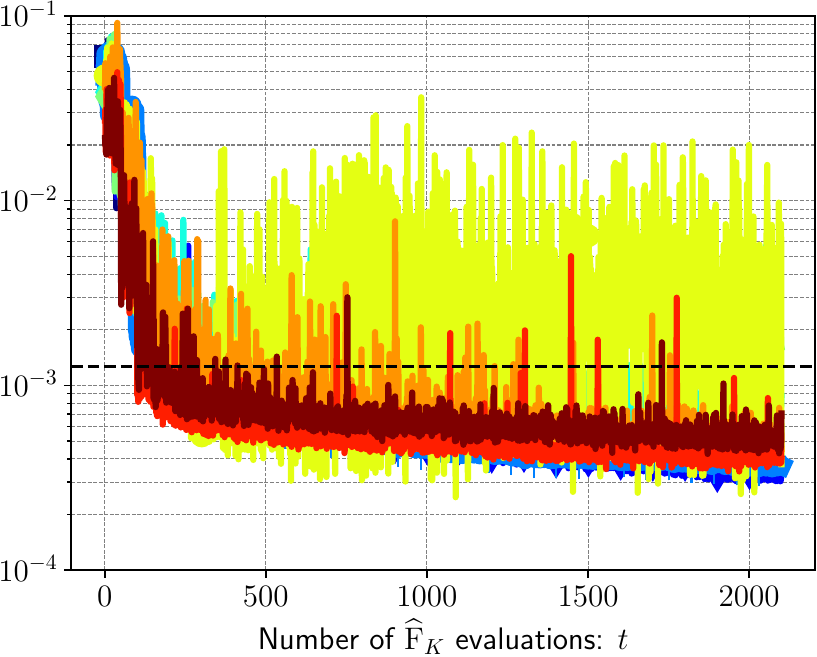}} & &
        \adjustbox{valign=m, center}{\includegraphics[width=\linewidth, trim=8mm -02mm 00mm 00mm]{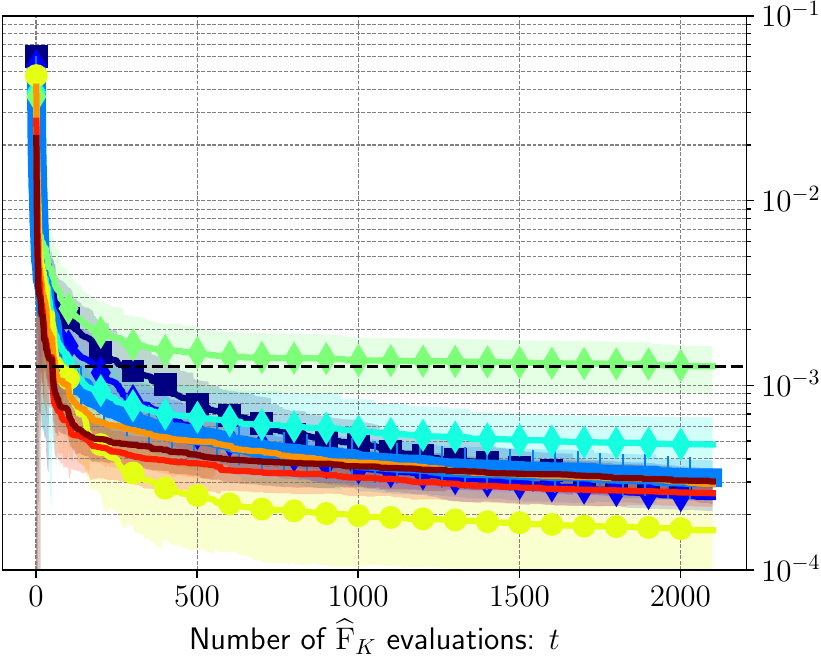}} \\
        \adjustbox{valign=t, center}{ \scalebox{0.8} {\begin{minipage}{0.5\textwidth} \centering \textbf{(a) Average objective value: 
        $ \Eest_n\! \left( \mathrm{f}^{(t)}\!\cl{A(\xi)} \right) $} \end{minipage}} } & \adjustbox{valign=t, center}{} &
        \adjustbox{valign=t, center}{ \scalebox{0.8} {\begin{minipage}{0.5\textwidth} \centering \textbf{(b) Average best-so-far: $ \Eest_n\! \left( \mathrm{g}^{(t)}\!\cl{A(\xi)} \right) $,\\ shaded area: $\sqrt{\Varest_n\! \left( \mathrm{g}^{(t)}\!\cl{A(\xi)} \right)}$ } \end{minipage}} } \\
    \end{tabular}
    \caption{Convergence plot for every algorithm $A \in \mathbb{A}_2$. Part (a) represents the approximated mean of $\mathrm{f}$ defined in Eq.~\eqref{eq:cr1} as the quality of the solution considered by algorithm $A$ when computational budget $t$ is spent. Part (b) represents the approximated mean and standard deviation of $\mathrm{g}$ defined in Eq.~\eqref{eq:cr2} as the quality of best-so-far solution found by each algorithm $A \in \mathbb{A}_2$ when computational budget $t$ is spent. The number of independent runs per algorithm is $n = 20$.
    }
    \label{fig:application2}
\end{figure*}

From part (b) we see that the two worst algorithms are distance-driven versions of EA with $d_1$ and $d_2$.
While DD-$(\mu/2 + \lambda), d_1$ EA manages to find solutions better than $\bm{x}_\text{base}$ relatively quickly, DD-$(\mu/2 + \lambda), d_2$ EA slowly converges to the baseline and then stagnates.
Notably, their versions without distance-driven mutation operator outperforms both algorithms at the end of the optimization.
However, $(\mu/2 + \lambda)$ EA and $(\mu + \lambda)$ EA are slightly worse than DD-$(\mu/2 + \lambda), d_1$ EA, when very small computational budget is spent $\approx 200$ evaluations of $\Fest_K$.
Therefore, a distance-driven analog of EA can produce better solutions when given a tiny computational budget.

\newcommand{\yes}{\textcolor{ForestGreen}{\ding{52}}}
\newcommand{\no}{\textcolor{red}{\ding{56}}}

\begin{figure*}[!tb]
    \centering
    \begin{tabular}{ p{0.45\textwidth} p{0.45\textwidth} }
        \adjustbox{valign=t}{
            \hspace{0pt}\begin{minipage}{0.45\textwidth}
            \centering
                \begin{tabular}{|p{0.14\textwidth}|p{0.04\textwidth}|p{0.04\textwidth}|p{0.04\textwidth}|p{0.04\textwidth}|p{0.04\textwidth}|p{0.04\textwidth}|p{0.04\textwidth}| }
                \hline  
                \adjustbox{center}{\scalebox{0.55}{ Reject $H_0$?}} & \no & \no & \no & \yes & \yes & \yes & \yes \\
                \hline
                \adjustbox{center}{\scalebox{0.55}{ $\mathcal{C}(\bm{x}) > 12$ ?}} & \no & \yes & \yes & \yes & \yes & \yes & \yes \\
                \hline
                \adjustbox{center}{\scalebox{0.55}{ $\mathcal{C}(\bm{x}) = 16$ ?}} & \no & \no & \no & \yes & \yes & \no & \no \\
                \hline
                \end{tabular}
            \end{minipage}
        } &
        \adjustbox{valign=t}{
        \adjustbox{valign=t}{
            \hspace{0pt}\begin{minipage}{0.45\textwidth}
            \centering
                \begin{tabular}{|p{0.26\textwidth}|p{0.21\textwidth}|p{0.21\textwidth}|}
                \hline  
                \adjustbox{center}{\scalebox{0.65}{ \hspace{-10pt} Reject $H_0$?}} & \adjustbox{center}{\yes} & \adjustbox{center}{\yes} \\
                \hline
                \adjustbox{center}{\scalebox{0.65}{ \hspace{-10pt} $\mathcal{C}(\bm{x}) > 12$ ?}} & \adjustbox{center}{\yes} & \adjustbox{center}{\yes} \\
                \hline
                \adjustbox{center}{\scalebox{0.65}{ \hspace{-10pt} $\mathcal{C}(\bm{x}) = 16$ ?}} & \adjustbox{center}{\no} & \adjustbox{center}{\no} \\
                \hline
                \end{tabular}
            \end{minipage}
        }            
        } \\
        \adjustbox{valign=t, center}{\includegraphics[width=0.45\textwidth, trim=00mm -02mm 00mm 00mm, clip]{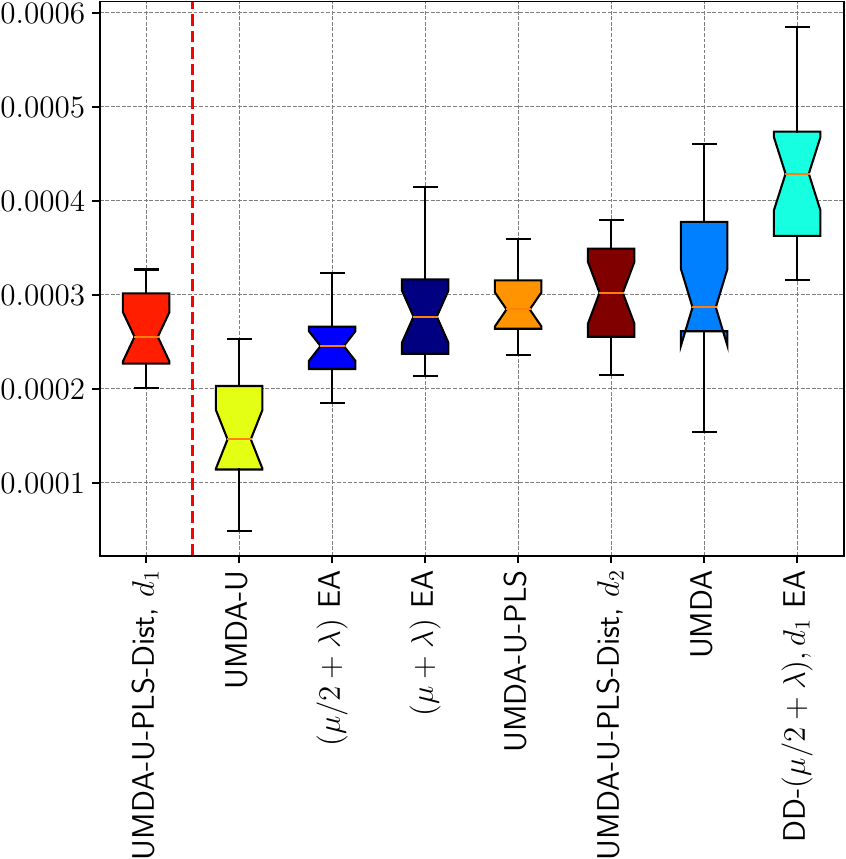}} & 
        \adjustbox{valign=t, center}{\includegraphics[width=0.45\textwidth, trim=00mm -02mm 00mm 00mm, clip]{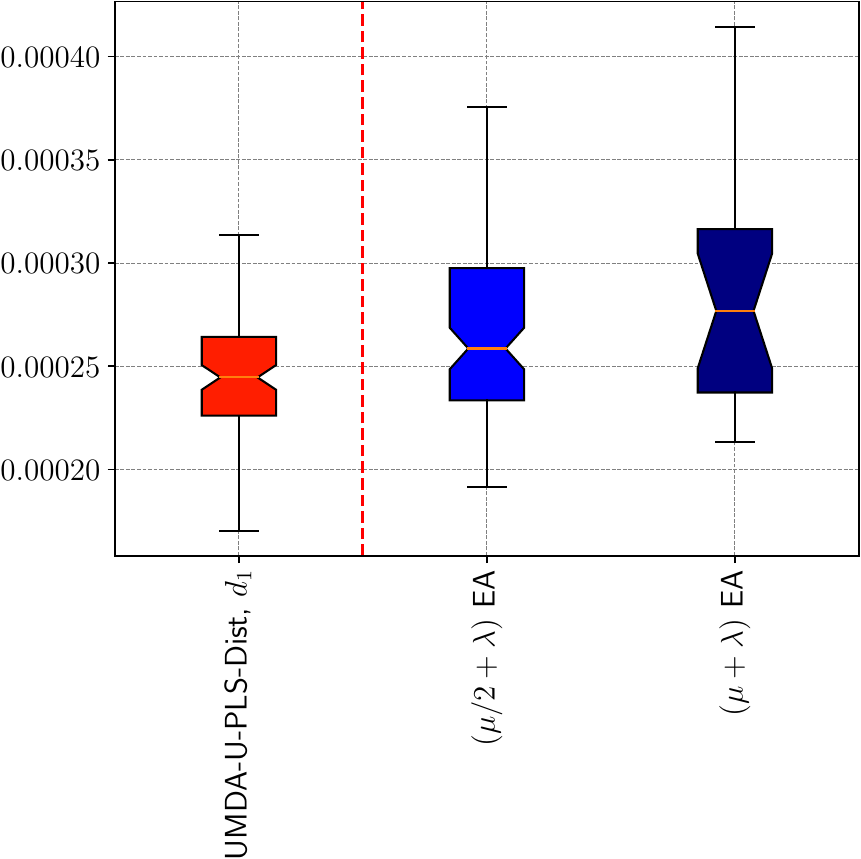}} \\
        \adjustbox{valign=t}{ \scalebox{0.8}{ \begin{minipage}{0.55\textwidth} \begin{itemize} \item[\textbf{(a)}] \textbf{Comparsion against the rest algorithms in $\mathbb{A}_2$ using $T_1 = T_2 = 20$ in Eq.~\eqref{eq:stat-test}} \end{itemize}\end{minipage} } } &
        \adjustbox{valign=t}{ \scalebox{0.8}{ \begin{minipage}{0.55\textwidth} \begin{itemize} \item[\textbf{(b)}] \textbf{Comparison against $(\mu/2 + \lambda)$ EA using $T_1 = T_2 = 100$ and against $(\mu + \lambda)$ EA using $T_1 = 100, T_2 = 20$} \end{itemize}\end{minipage} } }
        \\

    \end{tabular}
    \caption{Statistical comparison defined in Eq.~\eqref{eq:stat-test}, where UMDA-U-PLS-Dist with $d_1$ is compared against the rest algorithms in $\mathbb{A}_2$. 
    The table in the upper part of the figure summarizes the result of this analysis. 
    The first row of the table tells if the null hypothesis $H_0$ is rejected, which happens when pvalue of Mann-Whitney U rank test is smaller than $0.05$.
    The second row tells if the number of different filters $\mathcal{C}(\bm{x})$ in each best-so-far solution $\bm{x}$ produced by every run of the corresponding algorithm is greater than 12.
    It constitutes the small violation of the constraint defined in Eq.~\eqref{eq:ofs-optimization:different}.
    The last row tells if this constraint is not violated, meaning that every solution has exactly 16 different filters.
    The rows of the table represent the algorithms whose names are written vertically at the bottom of the figure.
    Every solution obtained in runs of UMDA-U-PLS-Dist with $d_1$ has exactly 16 different filters.
    }
    \label{fig:stat-test}
\end{figure*}

One of the possible reasons why distance-driven EAs are worse than conventional EA is big jumps in the search space caused by the \utwo{big} value of the mean in the distribution over step-sizes $\mathcal{D}.$ 
It is possible to use parameter-control methods to adjust this parameter during the optimization process, as proposed, for example, in Evolutionary Strategies~\cite{rudolph2012handbook}.
Another way is to use a fixed parameter sampled from a heavy-tailed distribution~\cite{doerr2017fast}.

From part (b) of Figure~\ref{fig:application2} we see that UMDA-U performs significantly better than UMDA.
However, analysis of the best-so-far solutions obtained by UMDA-U shows that each of them has at most 5 different filters, which violates the constraint in Eq.~\eqref{eq:ofs-optimization:different} by far.
It is evident from part (a) of Figure~\ref{fig:application2} that objective values of the solutions produced in different runs of UMDA are significantly different.
This causes the rugged behavior of the line that corresponds to UMDA in the considered chart.
Therefore, an extension of this algorithm, called UMDA-U-PLS was proposed.
UMDA-U-PLS produces solutions with exactly 16 different filters \utwo{(this number of filters is used for the reasons discussed in Sec.~\ref{sec:addressing-ofs})}, surpassing UMDA in terms of average best-so-far solution throughout the whole optimization process.
This demonstrates that it is faster to learn one distribution over $\mathbb{L}$ than $M$ distributions over $\mathbb{L}$.
However, part (a) of Figure~\ref{fig:application2} depicts jumps in the line that corresponds to UMDA-U-PLS.
The jumps happen less frequently than in UMDA-U, however, certain runs still produce relatively bad-performing individuals sampled from the learned conditional distribution.

Notably, \utwo{the improved} UMDA-U-PLS-Dist algorithm with $d_1$ performs better than UMDA-U-PLS in terms of the average best-so-far solution, as shown in part (b) of Figure~\ref{fig:application2}. 
This suggests that the distance-based heuristic used to define the conditional distribution over filters helps to generate better candidate solutions.
However, UMDA-U-PLS-Dist algorithm with $d_2$ performs worse than its counterpart with $d_1$ during the entire optimization process.
We can conclude from it that metric $d_1$ suits better suited for this type of heuristic as well as for distance-driven EA.
At the same time, UMDA-U-PLS-Dist with both distances is robust, as there are no significant jumps in the corresponding lines of Figure~\ref{fig:application2} (a).

Overall, algorithm UMDA-U-PLS-Dist with $d_1$ performs the best among the algorithms that do not violate the constraint Eq.~\eqref{eq:ofs-optimization:different} on the number of different filters.
Let us denote this algorithm as $\mathcal{A}_1$.
In Figure~\ref{fig:stat-test} we demonstrate the statistical analysis of the best-so-far solutions found by $\mathcal{A}_1$ and by the rest of the algorithms in $\mathbb{A}_2$, when the computational budget $b$ is exhausted.
The null hypothesis $H_0$ in the comparison is that the algorithm $\mathcal{A}_1$ does not generate stochastically less best-so-far solutions than other algorithms.
We applied Mann-Whitney U rank test to reject or accept the null hypothesis for every $A \in \mathbb{A}_2$:
\begin{equation} \textsc{MWU} \! \br{ \br{\rm{g}^{(b)}\!\cl{\mathcal{A}_1(\xi^{(i)})}}_{i=1}^{T_1}, \br{\rm{g}^{(b)}\!\cl{A(\xi^{(i)})}}_{i=1}^{T_2} }. \label{eq:stat-test} \end{equation}
$H_0$ is rejected when the pvalue in $\textsc{MWU}$ is smaller than $\alpha=0.05$.
We will say that the vector $\br{\rm{g}^{(b)}\!\cl{A(\xi^{(i)})}}_{i=1}^{T}$ is \emph{result} of the algorithm $A \in \mathbb{A}_2$ obtained in $T$ independent runs.

The statistical comparison is summarized in Figure~\ref{fig:stat-test}.
Algorithm UMDA-U-PLS-Dist with $d_1$ does not violate constraints in Eq.~\eqref{eq:ofs-optimization:different} because all its inherent solutions have exactly 16 different filters.
However, as visualized in Figure~\ref{fig:stat-test} (a), the null hypothesis $H_0$ is not rejected for $(\mu/2 + \lambda)$ EA and for $(\mu + \lambda)$ EA when only $n = 20$ independent runs of each algorithm are made.
Therefore, we conducted an experiment shown in Figure~\ref{fig:stat-test} (b), where we increased the number of independent runs.
It is shown that the result of our reference algorithm UMDA-U-PLS-Dist with $d_1$ is stochastically less than the results of the considered counterparts.
Moreover, the reference algorithm does not violate constraints in Eq.~\eqref{eq:ofs-optimization:different} unlike other algorithms in Figure~\ref{fig:stat-test} (b).
Therefore, we demonstrated that the proposed UMDA-U-PLS-Dist with $d_1$ is the robust, fast, and efficient solver of OFS optimization problem.

\section{Analysis of the Obtained Solutions to OFS Problem}
\label{sec:solutions}

In this section, we analyze a subset of all candidate solutions $\mathbb{S}$ considered in $n = 100$ independent runs by the algorithm UMDA-U-PLS-Dist with $d_1$.
We now describe how we selected the subset $\mathbb{G} \subset \mathbb{S}$ \utwo{to locate a diverse set of high-performing solutions}.
Remember that the reference solution, described in Section~\ref{sec:description-trace-gas-measurement-device}, is denoted as $\bm{x}_{\text{base}}$ \utwo{and we solve the minimization problem}.
Considering the points of $\mathbb{S}$ in ascending order of the objective value \utwo{(from worst to best)}, that was assigned to them during the optimization process.
We select the set $\mathbb{G}$ such that:
\begin{enumerate}
    \item it has a maximal size;
    \item pairwise distance according to $d_1$ between its elements is at least $D_{\text{min}} \in \R$;
    \item the objective value of every element is at most $f_{\text{max}} \in \R$.
\end{enumerate}
Algorithmically we greedily add points from $\mathbb{S}$ to $\mathbb{G}$ in the ascending order of the objective value, that was assigned to the solutions during the optimization process.
If at least one of the conditions is not satisfied, then the point is not included in $\mathbb{G}.$
We used constants $D_{\text{min}} = 0.05$ and $f_{\text{max}} = \Fest_K \! \br{\bm{x}_{\text{base}}} / 4 \approx 3 \cdot 10^{-4},$ where $K = 10^4.$
According to our experiments, the chosen distance $D_{\text{min}}$ is twice smaller than the distance between a pair of points chosen in a series of experiments uniformly at random.
Therefore, the set of solutions $\mathbb{G}$ is diverse and high-performing since every element is at least 4 times better than $\bm{x}_{\text{base}}.$
An example of selection $\mathbb{G}$ according to the described procedure is visualized in Figure~\ref{fig:contours}.
We will say that the set $\mathbb{G}$ contains \emph{final solutions} to the OFS problem.

\uthree{Figure~\ref{fig:contours}, the starts represent solutions found in optimization landscape. Colors depict values of the objective function, with more blue representing better values. Set $\mathbb G$ contains a subset of these solutions. Not all blue solutions are selected for this set due to the minimal pairwise distance constraint $D_\text{min}$. Also, solutions distant from selected ones not chosen for $\mathbb {G}$ due to their objective value being worse than $f_\text {max}$ constraint. }

Let us call the solutions in $\mathbb{G}$ as $\set{\bm{x}^{(i)} \mid i \in \overline{1,m}}$.
In our numerical run, we obtained $m = |\mathbb{G}| = 114$ solutions.
Following the recommendation in Section 2.8.3 of~\cite{fu2015handbook}, we reevaluated the objective value for each of them using $K = 10^4$.
We will assume that $\Fest_K \! \br{\bm{x}^{(1)}} \le \Fest_K \! \br{\bm{x}^{(2)}} \le \dots \le \Fest_K \! \br{\bm{x}^{(m)}} $.

In Figure~\ref{fig:all-final-solutions}, we visualized the objective values of all the found final solutions $\mathbb{G}$.
Notably, all of them outperform the baseline solution $\bm{x}_{\text{base}}$ by far.
The diversity of the set $\mathbb{G}$ provides practitioners with the flexibility to choose solutions beyond the one with the optimal objective value, denoted by $\bm{x}^{(1)}$.
This can be particularly relevant when unforeseen constraints arise after the optimization process.
These constraints might include minimizing the manufacturing cost, given that the construction cost is different for filters in $\mathbb{L}$.

\begin{figure}[!tb]
\centering
\includegraphics[width=0.5\linewidth]{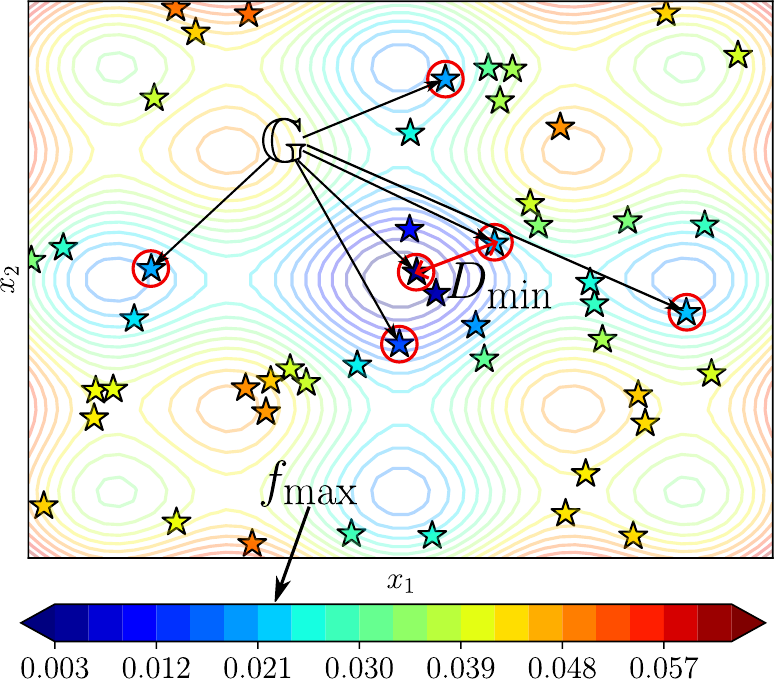}
\caption{Example of the optimization landscape with two decision variables $x_1, x_2$, where candidate solutions are shown as stars.
The color of isocounters and stars represent the value of the objective function, with a legend provided in the color bar at the bottom.
All displayed stars represent elements within the set denoted by $\mathbb{S}$.
Stars encircled in the figure belong to a subset $\mathbb{G} \subset \mathbb{S}$.
The smallest distance $D_{\text{min}}$ between selected solutions is highlighted.
Additionally, the color corresponding to the maximum allowed in $\mathbb{G}$ objective function value $f_{\text{max}}$ is indicated.
}
\label{fig:contours}
\end{figure}

\begin{figure}[!tb]
    \centering
    \includegraphics[width=0.6\linewidth]{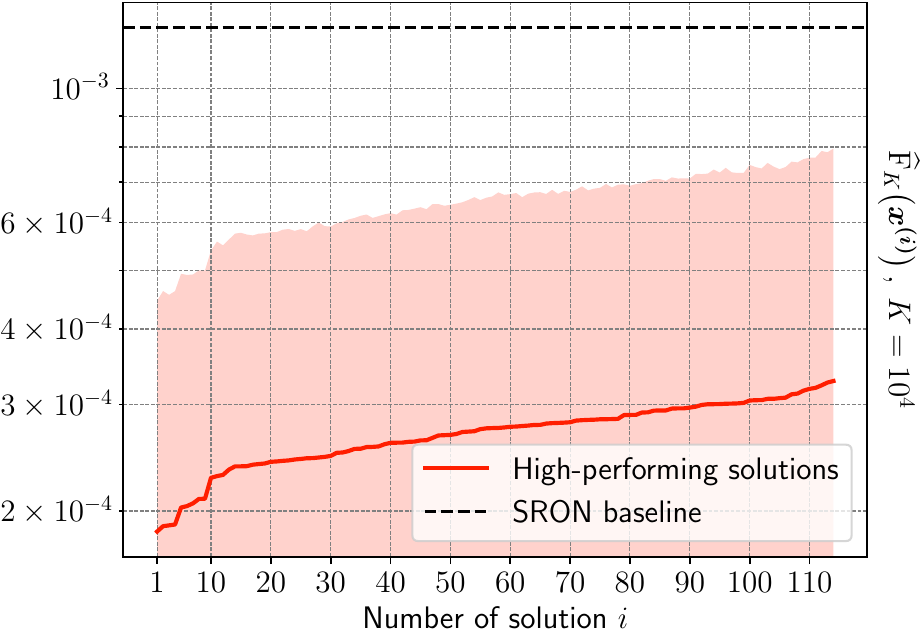}
    \caption{ 
    Overview of the objective values of all the found final solutions to the OFS problem.
    The red line shows the objective values and the shaded area represents the standard deviation from the $\Fest_K \! \br{\bm{x}^{(i)}}$.
    The horizontal axis denotes the number of the candidate solution in the ascending order of the objective function.
    }
    \label{fig:all-final-solutions}
\end{figure}

In Figure~\ref{fig:profiles}, we outline the transmission profiles of every filter in the top 3 final solutions.
Surprisingly, it appears that the found solutions contain transmission profiles with little or no sharp features, but more smooth sections with either a low or high transmission. 
These results show that filters with these features improve the performance of the Trace Gas Measurement Device.

\begin{figure}[!tb]
    \centering
    \begin{tabular}{ p{0.97\linewidth} }
        \adjustbox{valign=t, center}{\includegraphics[width=0.5\linewidth]{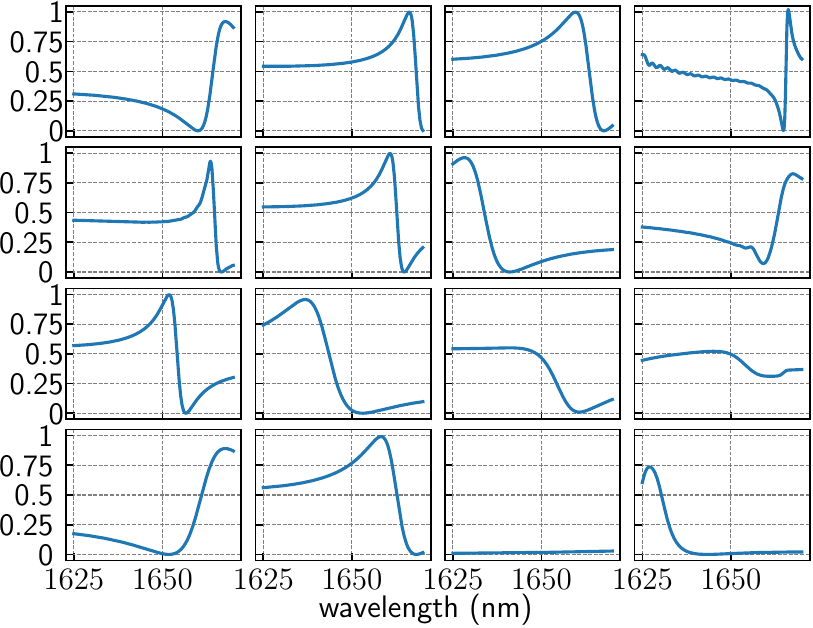}} \\
        \adjustbox{valign=t, center}{ \scalebox{0.8}{ \begin{minipage}{1.1\linewidth} \centering \textbf{(a) $\Fest_K \!\br{\bm{x}^{(1)}} = 1.84\cdot 10^{-4}$} \end{minipage} } } \\
        \adjustbox{valign=t, center}{\includegraphics[width=0.5\linewidth]{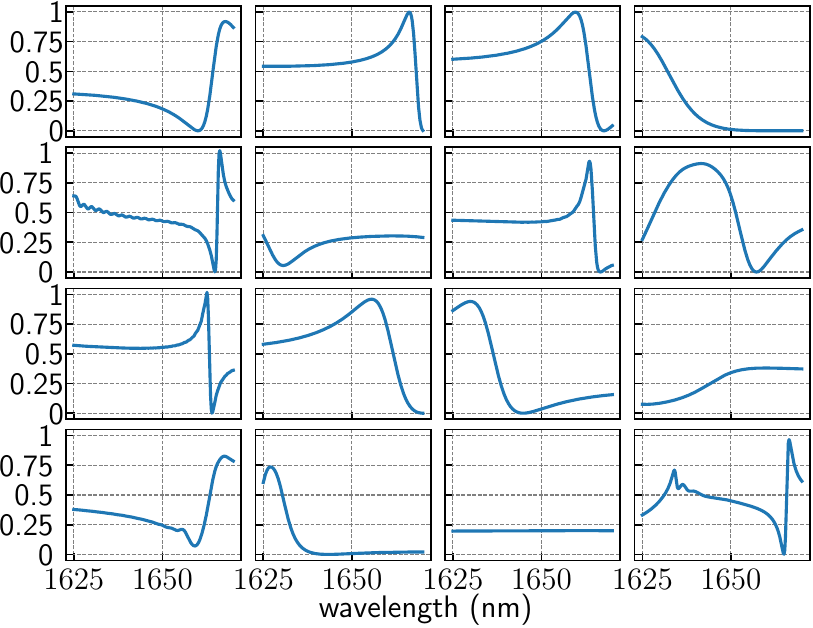}} \\
        \adjustbox{valign=t, center}{ \scalebox{0.8}{ \begin{minipage}{1.1\linewidth} \centering \textbf{(b)  $\Fest_K \!\br{\bm{x}^{(2)}} = 1.88\cdot 10^{-4}$ } \end{minipage} } } \\
        \adjustbox{valign=t, center}{\includegraphics[width=0.5\linewidth]{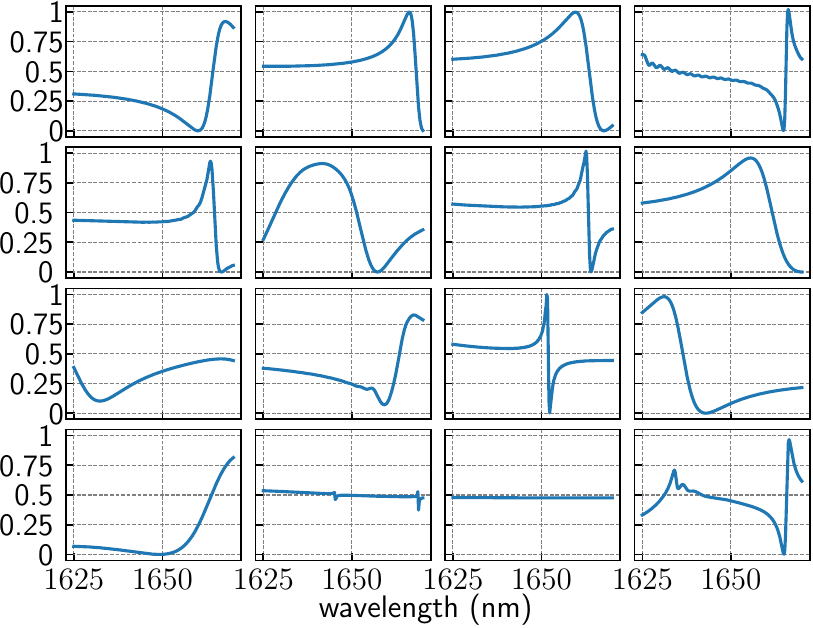}} \\
        \adjustbox{valign=t, center}{ \scalebox{0.8}{ \begin{minipage}{1.1\linewidth} \centering \textbf{(c)  $\Fest_K \!\br{\bm{x}^{(3)}} = 1.89\cdot 10^{-4}$ } \end{minipage} } } 
    \end{tabular}
    \caption{ Transmission profiles of 3 top solutions from the diverse set $\mathbb{G}$. Each sub-panel shows the transmission profile as a function of wavelength. 
    The objective function was evaluated with $K = 10^4.$
    }
    \label{fig:profiles}
\end{figure}

\begin{figure*}[!tb]
    \centering
    \begin{tabular}{ p{0.46\textwidth} p{0.01\textwidth} p{0.46\textwidth} }
        & \adjustbox{valign=t, center}{\includegraphics[width=0.45\textwidth, trim=00mm 65mm 00mm 00mm, clip]{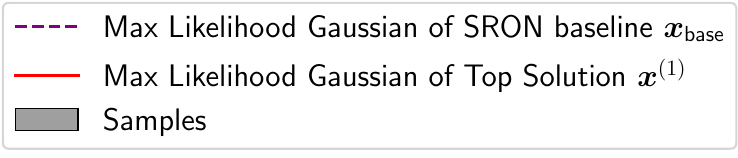}} & \\
        \includegraphics[width=\linewidth, trim=00mm 00mm 00mm 00mm, clip]{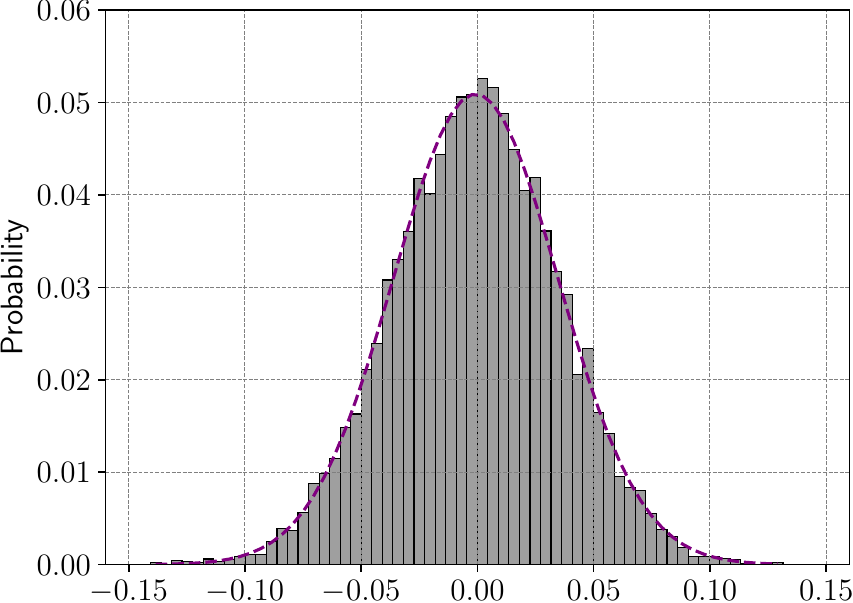} & &
        \includegraphics[width=\linewidth, trim=00mm 00mm 00mm 00mm, clip]{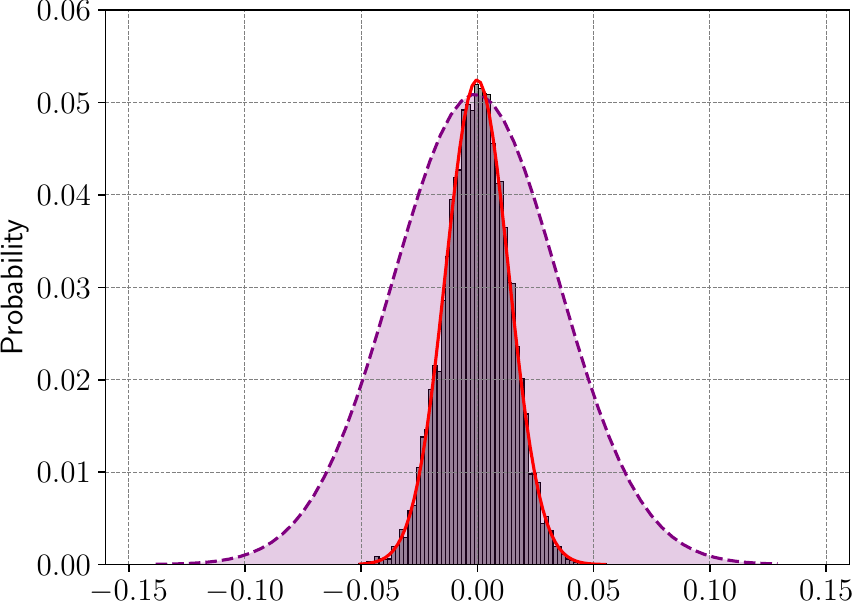} \\
        \adjustbox{valign=t, center}{ \scalebox{0.8}{ \begin{minipage}{1.1\linewidth} \centering \textbf{(a) Distribution of $\Sdiv\!\br{\bm{x}_{\text{base}}}$ } \end{minipage} } } & &
        \adjustbox{valign=t, center}{ \scalebox{0.8}{\begin{minipage}{1.1\linewidth} \centering \textbf{(b) Distribution of $\Sdiv\!\br{\bm{x}^{(1)}}$ } \end{minipage} } } \\

    \end{tabular}
    \caption{ Distribution of relative deviation from the ground-truth in one gas retrieval measurement for the baseline solution in (a) and the top-1 found solution in (b). Each random variable is sampled $K = 10^4$ times and the resulting values are split into 60 segments of equal length. We plot the maximum likelihood Gaussian Distribution that produces the observed samples. In part (b) we copy the Gaussian from part (a) to visualize the stochastic dominance of the solution $\bm{x}^{(1)}$ over $\bm{x}_{\text{base}}.$
    }
    \label{fig:noise-comparison}
\end{figure*}

In Figure~\ref{fig:noise-comparison}, we propose a comparison of the best-found solution against the baseline solution.
This analysis shows that the trace gas retrieval algorithm precision follows a Gaussian distribution for both the reference solution (base) as well as the found top solution. 
The reduced width of the distribution indicates an increase in the performance of the Trace Gas Measurement Device.

\section{Conclusion}

This work investigated the Optimal Filter Selection (OFS) problem as an instance of the Stochastic Combinatorial Optimization Problem (SCOP). 
We developed a simulator that mimics the gas retrieval process, including the introduction of artificial noise to the input signal, \utwo{which translates to the output of the retrieved gas}. 
Our objective was to identify a set of filters that minimizes the expected error after gas retrieval.
To achieve this, we proposed a formalization of OFS, which allowed us to define and analyze an objective function that aligns with our intuitive criteria for an optimal solution. 
Finally, various metaheuristics were employed to generate sets of filters with minimal expected error based on the defined objective function.

We evaluated the performance of various metaheuristics applied to the OFS problem. 
Based on this analysis, we identified the leading algorithms. 
To further enhance their efficiency, we developed distance-driven extensions that leverage domain-specific information. This information is encoded as a metric on the search space of all possible OFS solutions. 
We analyzed and compared the effectiveness of two proposed metrics.
Furthermore, to address the internal optimization problem within one of the algorithms, we proposed a novel heuristic\utwo{, called Distance‑Driven Assignment Evolutionary Algorithm (DDA‑EA),} for solving the inverse of the Linear Assignment Problem (LAP). 
We then compared the performance of the distance-driven algorithms against their original counterparts. 
The comparison was reinforced with statistical testing.
This analysis revealed that the proposed first-of-its-kind algorithm UMDA-U-PLS-Dist with the $d_1$ metric stands out as the most efficient and robust solver among the considered approaches.
The code and tools used to simulate trace gas measurement devices, apply optimization algorithms for filter selection, and perform the experiments described in this work can be accessed at \url{https://github.com/AntKirill/CompressSpecLIACS-Public}.

To analyze the performance of the selected filters, we examined all candidate solutions generated by the algorithm across multiple independent runs. 
We then identified a diverse subset of high-performing solutions for further analysis and comparison. 
These findings suggest that filters exhibiting smooth sections  (no sharp transitions) with a large local difference in the transmission lead to improved performance in the Trace Gas Measurement Device. 
Furthermore, we observed a significant improvement in the top-performing solution compared to the baseline, indicating the effectiveness of the proposed approach. 
Additionally, the distribution of trace gas retrieval algorithm precision across different filter sets follows a Gaussian distribution. 

\section*{Declaration of generative AI and AI-assisted technologies in the manuscript preparation process}

During the preparation of this work, the authors utilized ChatGPT to correct grammatical errors and rephrase some sentences to enhance clarity. After using this service, the authors reviewed and edited the content as needed and take full responsibility for the content of the published article.

\bibliographystyle{ieeetr}
\bibliography{bib}

\newpage
\appendix 

\section{Properties of the OFS Objective Function}
\label{sec:objf:properties}
Let us show the properties of the OFS objective function defined in Eq.~\eqref{eq:objfunction}.
First of all, it conforms to our intuitive criteria \upd{\ref{req:objf-informal}} for the assessment of filter selections.
Indeed, for the fixed solutions $\bm{x}, \bm{y} \in \mathscr{L}$ the requirement can be rewritten as follows:
\begin{equation} 
        \E^2\cl{\Sdiv(\bm{x}, \xi)} \le \E^2\cl{\Sdiv(\bm{y}, \xi)}
        \Var\cl{\Sdiv(\bm{x}, \xi)} < \Var\cl{\Sdiv(\bm{y}, \xi)} .
    \label{eq:case1} 
\end{equation}

We used the square of the expected value since the expected value can be negative.
If $\bm{x}$ and $\bm{y}$ are such that Eq.~\eqref{eq:case1} is upheld, then $\E^2\left[\Sdiv(\bm{x}, \xi)\right] + \Var\left[\Sdiv(\bm{x}, \xi)\right] < \E^2\left[\Sdiv(\bm{y}, \xi)\right] + \Var\left[\Sdiv(\bm{y}, \xi)\right]$.
Which is the same as $\E\left[\Sdiv^2(\bm{x}, \xi)\right] < \E\left[\Sdiv^2(\bm{y}, \xi)\right]$, meaning $\F(\bm{x}) < \F(\bm{y})$.

Now we clarify what happens when the mean and variance are in conflict, meaning when $\bm{x}$ and $\bm{y}$ are such that
\begin{equation} \begin{cases} \E^2\cl{\Sdiv(\bm{x}, \xi)} < \E^2\cl{\Sdiv(\bm{y}, \xi)} \\ \Var\cl{\Sdiv(\bm{x}, \xi)} > \Var\cl{\Sdiv(\bm{y}, \xi)} \end{cases} 
\label{eq:case2} \end{equation}
Let us demonstrate how function $F$ influences the additional difference in means needed to compensate for the increase in variances.
Consider some fixed selections of filters $\bm{x}, \bm{y} \in \mathscr{L}$ that satisfy Eq.~\eqref{eq:case2}.
Due to the symmetry between $\bm{x}$ and $\bm{y}$, we assume without loss of generality that: \begin{equation} \F(\bm{x}) \le \F(\bm{y}) \label{eq:xVSy} \end{equation}
In order to shorten the notation, we will use the following random variables $\mathrm{X} \coloneqq \Sdiv(\bm{x}, \xi), \mathrm{Y} \coloneqq \Sdiv(\bm{y}, \xi). $
According to Eq.~\eqref{eq:objfunction}, the latter Eq.~\eqref{eq:xVSy} is equivalent to: \begin{equation} \E^2(\mathrm{X}) + \Var(\mathrm{X}) \le \E^2(\mathrm{Y}) + \Var(\mathrm{Y}). \label{eq:xVSy:transformed} \end{equation}
There are three cases: 1) $\E(\mathrm{X}) = 0, \Var(\mathrm{Y}) = 0;$ 2) $\Var(\mathrm{Y}) \ne 0;$ 3) $\E(\mathrm{X}) \ne 0$.

\paragraph{Case 1)} If $\E(\mathrm{X}) = 0, \Var(\mathrm{Y}) = 0$, then trivially $\Var(\mathrm{X}) < \E^2(\mathrm{Y})$.
\paragraph{Case 2)} Let us now consider the case when $\Var(\mathrm{X}) \ne 0$.
Multiplication of inequality Eq.~\eqref{eq:xVSy:transformed} by a positive factor ${\E^2(\mathrm{Y})}/{\Var(\mathrm{Y})}$ and truthful transformations gives inequality Eq.~\eqref{eq:ontheway_k} equivalent to Eq.~\eqref{eq:xVSy}:
\begin{equation}    
    \dfrac{\E^2(\mathrm{Y})}{\Var(\mathrm{Y})} \cdot \br{\E^2(\mathrm{Y}) - \E^2(\mathrm{X})} \ge  \dfrac{\E^2(\mathrm{Y})}{\Var(\mathrm{Y})} \cdot \br{\Var(\mathrm{X}) - \Var(\mathrm{Y})}.
    \label{eq:ontheway_k}
\end{equation}

Due to the considered case Eq.~\eqref{eq:case2} we have $\E^2(\mathrm{Y}) - \E^2(\mathrm{X}) > 0$, so division of inequality Eq.~\eqref{eq:ontheway_k} by the positive factor $\E^2(\mathrm{Y}) - \E^2(\mathrm{X})$ and truthful transformations gives the following inequality Eq.~\eqref{eq:k}, which is equivalent to Eq.~\eqref{eq:xVSy}:
\begin{equation}
    \dfrac{\E^2(\mathrm{Y})}{\Var(\mathrm{Y})} \ge \dfrac{ \dfrac{\Var(\mathrm{X})}{\Var(\mathrm{Y})} - 1 }{1 - \left(  \dfrac{\E(\mathrm{X})}{\E(\mathrm{Y})} \right)^2 }.
    \label{eq:k}
\end{equation}

\begin{figure*}[!tb]
    \begin{tabular}{p{0.45\textwidth} p{0.45\textwidth}}
    \includegraphics[width=0.45\textwidth, trim=0mm 00mm 00mm 00mm, clip]{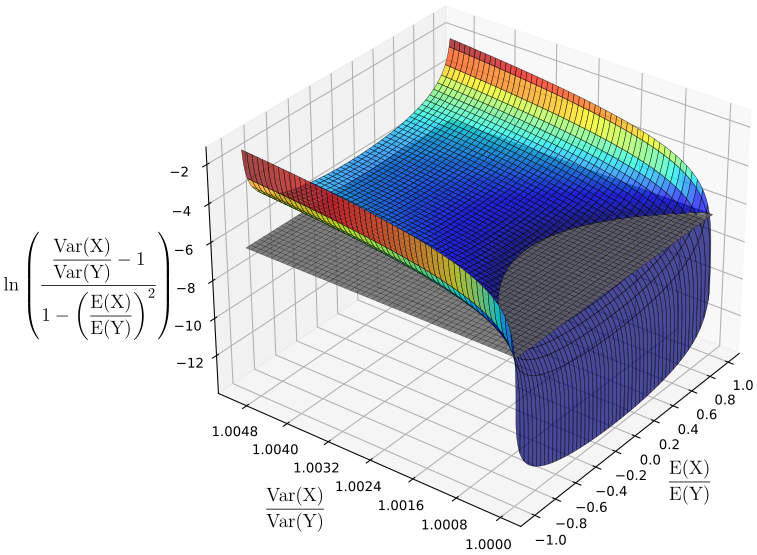}
    &
    \includegraphics[width=0.45\textwidth, trim=0mm 00mm 00mm 00mm, clip]{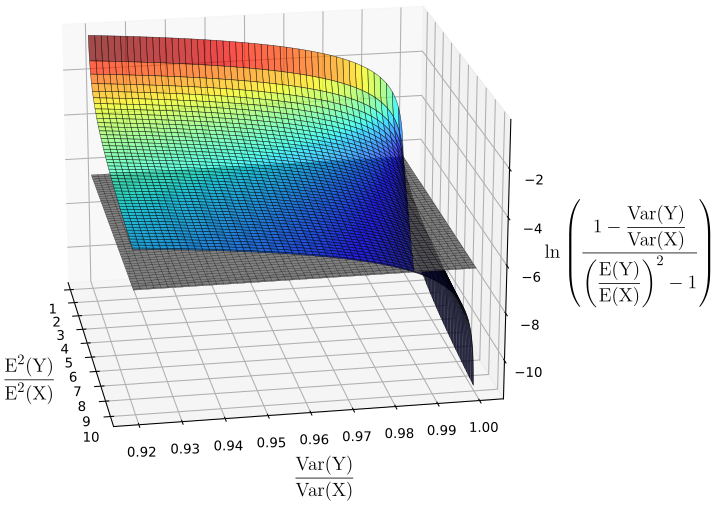}
    \\
    \adjustbox{center}{\textbf{(a) $\Var(\mathrm{Y}) \ne 0$}} & \adjustbox{center}{\textbf{(b) $\E(\mathrm{X}) \ne 0$ }}
    \end{tabular}
    \caption{Function of ratios of variances $\Var(\mathrm{X})/\Var(\mathrm{Y})$ and means $\E(\mathrm{X})/\E(\mathrm{Y})$ between $\mathrm{X}$ and $\mathrm{Y}$ is shown as colored surface. 
    Given that $\E(\mathrm{X}) < \E(\mathrm{Y})$, it represents the lower bound for the quantity $\E^2(\mathrm{Y})/\Var(\mathrm{Y})$, which is necessary and sufficient for function $F$ to assess $\mathrm{X}$ better than $\mathrm{Y}$. 
    The gray plane depicts the average of experimentally computed values of this quantity.}
    \label{fig:trade-off}
\end{figure*}

\paragraph{Case 3)} 
When $\E(\mathrm{X}) \ne 0$ we can make analogical equivalent transformations to obtain the following Eq.~\eqref{eq:k3}:
\begin{equation}
    \dfrac{\E^2(\mathrm{X})}{\Var(\mathrm{X})} \ge \dfrac{ \dfrac{\Var(\mathrm{Y})}{\Var(\mathrm{X})} - 1 }{1 - \left(  \dfrac{\E(\mathrm{Y})}{\E(\mathrm{X})} \right)^2 } .
    \label{eq:k3}
\end{equation}

\paragraph{Analysis.}
The right-hand side (RHS) of Eq.~\eqref{eq:k} and Eq.~\eqref{eq:k3} establishes the lower bound for the ratio of the mean and variance %
necessary and sufficient for function $F$ to assess $\mathrm{X}$ better than $\mathrm{Y}$.
Figure~\ref{fig:trade-off} (a) visualizes the RHS of Eq.~\eqref{eq:k} and Figure~\ref{fig:trade-off} (b) visualizes Eq.~\eqref{eq:k3} in logarithmic scale for various ratios that conform to our assumption Eq.~\eqref{eq:case2}.

To interpret the left-hand side (LHS) of Eq.~\eqref{eq:k} and Eq.~\eqref{eq:k3}, we conduct the following experiment.
We sample $1000$ points uniformly at random in the space $\mathscr{L}$ defined in Section~\ref{sec:searchSpace}.
For each of the sampled point $\bm{x}$ we estimate $\E\cl{\Sdiv(\bm{x}, \xi)}$ using sample mean and $\Var\cl{\Sdiv(\bm{x}, \xi)}$ using sample variance of $K = 10^3$ samples.
This procedure produces the set: $$\mathbb{V} \coloneqq \set{\dfrac{\Eest_K^2\cl{\Sdiv(\bm{x}^{(i)}, \xi)}}{\Varest_K\cl{\Sdiv(\bm{x}^{(i)}, \xi)}}}_{i = 1}^{1000}.$$

We compute the sample mean of the set $\mathbb{V}$, which gives an approximation of the value in LHS of Eq.~\eqref{eq:k} and Eq.~\eqref{eq:k3} expected while optimizing the function $F$.
The $\ln$ of this value equals $-6.6$, and it is depicted by the gray plane in Figure~\ref{fig:trade-off}.
The part of the surface that is lower than the gray plane in Figure~\ref{fig:trade-off} depicts the set of ratios of expected values and means that lead to grading $\mathrm{X}$ better than $\mathrm{Y}$ in average over the set $\mathbb{V}$.
Conversely, the region above the plane represents ratios leading to worse average grading of $\mathrm{X}$ compared to $\mathrm{Y}$.

When $\Var(\mathrm{X})$ grows relative to $\Var(\mathrm{Y})$, the value ${\Var(\mathrm{X})}/{\Var(\mathrm{Y})}$ becomes bigger.
As demonstrated in Figure~\ref{fig:trade-off}, in this situation it becomes less likely to assess $\mathrm{X}$ better than $\mathrm{Y}$.
The intersection between the surface and the gray plane in Figure~\ref{fig:trade-off} occurs when the ratio of variances, $\Var(\mathrm{X})/\Var(\mathrm{Y})$, is less than $1.0024$.
This intersection depicts the maximum tolerable ratio in variances where $\mathrm{X}$ can still be graded better than $\mathrm{Y}$ by the function $F$, on average over the set $\mathbb{V}$.
When such grading is possible, we see that it becomes less likely when values of $\E(\mathrm{X})$ and $\E(\mathrm{Y})$ are closer in absolute value.
Moreover, it is clear that for the fixed ratio of expected value the RHS of Eq.~\eqref{eq:k} is monotonically increasing, so grading of $\mathrm{X}$ better than $\mathrm{Y}$ under the assumption in Eq.~\eqref{eq:case2} is possible only when variances are not too different but means compensate this difference by far.
\upd{In summary, we demonstrated how the means must adjust to offset the rise in variances.
Although the results of this analysis are not directly applied in the subsequent sections of the paper, they remain crucial as they provide guarantees regarding the candidate solutions we favor.}

\section{Metric space for OFS}
\label{sec:metric}

We remind the following classical definition of the metric space in Definition~\ref{def:metricSpace}.

\begin{definition}[Metric Space] A \emph{metric space} is a pair $(X, d)$, where $X$ is a set and $d$ is a real-value function on $X \times X$ which satisfies that, for any $x,y,z \in X$, 
\begin{enumerate}
    \item $d(x,y) \geq 0$ and $d(x,y) = 0 \iff x = y$,
    \item $d(x,y) = d(y, x)$,
    \item $d(x,z) \leq d(x,y) + d(y,z)$.
\end{enumerate} 
The function $d$ is called the \emph{metric} on $X$. 
\label{def:metricSpace}
\end{definition}

Formally, elements of the space $\mathbb{M}(\dL)$ are sets, which are called equivalence classes: $\set{\bm{x}, \bm{y} \in \mathbb{L}^M \mid \bm{x} \equiv \bm{y}}.$
When we define a metric on the space $\mathbb{M}(\dL)$ we show the axioms of metric on points $\bm{x}, \bm{y}, \bm{z} \in \mathbb{L}^M.$
This can be trivially extended to all the elements from the corresponding equivalence classes of $\bm{x}, \bm{y}, \bm{z}$ using properties of the quotient space.

\begin{proposition}
    If $(\mathbb{L}, \dL)$ is metric space, then $\textsc{LAP}(\dL, \cdot, \cdot)$ is metric on the space $\mathbb{M}(\dL)$.
    \label{prop:metric}
\end{proposition}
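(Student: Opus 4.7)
The plan is to verify the three axioms of Definition~\ref{def:metricSpace} for $\lap(\dL, \cdot, \cdot)$, after first confirming well-definedness on the quotient space $\mathbb{M}(\dL)$. I will work with representatives $\bm{x}, \bm{y}, \bm{z} \in \mathbb{L}^M$ and exploit the fact that the set $\mathbb{P}$ of permutations on $\overline{1,M}$ is a group under composition.

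As a preliminary step, I would check that $\equiv$ is genuinely an equivalence relation, since $\mathbb{M}(\dL)$ is defined via it. Reflexivity uses the identity permutation together with $\dL(x_i, x_i) = 0$. Symmetry: if $\pi \in \mathbb{P}$ witnesses $\lap(\dL, \bm{x}, \bm{y}) = 0$, then since every term $\dL(x_i, y_{\pi(i)}) \ge 0$ must vanish, the permutation $\pi^{-1}$ witnesses $\lap(\dL, \bm{y}, \bm{x}) = 0$. Transitivity is obtained by composing the witnessing permutations and applying the triangle inequality of $\dL$ termwise. I would then argue that $\lap(\dL, \cdot, \cdot)$ descends to the quotient: if $\bm{x} \equiv \bm{x}'$ via a permutation $\tau$ (so $x_i = x'_{\tau(i)}$ for all $i$), then for any $\pi \in \mathbb{P}$, $\sum_i \dL(x_i, y_{\pi(i)}) = \sum_i \dL(x'_{\tau(i)}, y_{\pi(i)})$, and reindexing shows this equals $\sum_j \dL(x'_j, y_{\pi \circ \tau^{-1}(j)})$. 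Taking minima over $\pi$ on both sides gives equal LAP values.

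Non-negativity is immediate since $\dL \ge 0$. For the identity axiom, $\lap(\dL, \bm{x}, \bm{y}) = 0$ is exactly the defining condition of $\bm{x} \equiv \bm{y}$, i.e., equality in the quotient space. Symmetry of $\lap$ follows from a one-line bijection on $\mathbb{P}$: for any $\pi \in \mathbb{P}$, by the symmetry of $\dL$,
\begin{equation*}
    \sum_{i=1}^{M} \dL(x_i, y_{\pi(i)}) = \sum_{i=1}^M \dL(y_{\pi(i)}, x_i) = \sum_{j=1}^M \dL(y_j, x_{\pi^{-1}(j)}),
\end{equation*}
so the map $\pi \mapsto \pi^{-1}$ is a bijection $\mathbb{P} \to \mathbb{P}$ that preserves the objective, whence the two minima coincide.

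The only nontrivial axiom is the triangle inequality, which I expect to be the main obstacle, though a mild one: the argument amounts to a well-chosen composition. Let $\pi_1 \in \mathbb{P}$ attain $\lap(\dL, \bm{x}, \bm{y})$ and $\pi_2 \in \mathbb{P}$ attain $\lap(\dL, \bm{y}, \bm{z})$. Define $\pi \coloneqq \pi_2 \circ \pi_1 \in \mathbb{P}$. Using the triangle inequality of $\dL$ on each coordinate and the substitution $j = \pi_1(i)$,
\begin{align*}
    \lap(\dL, \bm{x}, \bm{z})
    &\le \sum_{i=1}^M \dL\!\br{x_i, z_{\pi(i)}} \\
    &\le \sum_{i=1}^M \!\br{\dL\!\br{x_i, y_{\pi_1(i)}} + \dL\!\br{y_{\pi_1(i)}, z_{\pi_2(\pi_1(i))}}} \\
    &= \lap(\dL, \bm{x}, \bm{y}) + \sum_{j=1}^M \dL\!\br{y_j, z_{\pi_2(j)}} \\
    &= \lap(\dL, \bm{x}, \bm{y}) + \lap(\dL, \bm{y}, \bm{z}),
\end{align*}
where the reindexing in the penultimate line is justified by $\pi_1$ being a bijection. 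This completes all three axioms, so $\lap(\dL, \cdot, \cdot)$ is a metric on $\mathbb{M}(\dL)$.
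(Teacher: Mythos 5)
Your proof is correct and follows essentially the same route as the paper's: non-negativity and the identity axiom from the definition of the quotient, symmetry of $\dL$ for symmetry of $\lap$, and the triangle inequality via the composition $\pi_2 \circ \pi_1$ of the two optimal permutations followed by a termwise application of the triangle inequality of $\dL$. Your additional verification that $\equiv$ is an equivalence relation and that $\lap$ is well defined on the quotient, as well as the explicit reindexing in the symmetry step, are points the paper passes over silently, but they do not change the underlying argument.
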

\begin{proof}
    Let us check all the axioms of metric from Definition~\ref{def:metricSpace}.
    For the sake of shortness, we denote $\text{LAP}(\dL, \cdot, \cdot)$ as $\mathscr{D}$.
    Consider arbitrary $\bm{x}, \bm{y}, \bm{z} \in \mathbb{L}^M$.
    \begin{enumerate}
        \item $\mathscr{D}$ is a sum of non-negative values of $\dL$ computed between some elements of $\mathbb{L}$, so $\mathscr{D}(\bm{x}, \bm{y}) \ge 0$.
        And $\bm{x} \equiv \bm{y}$ if and only if $\mathscr{D}(\bm{x}, \bm{y}) = 0$ by Definition~\ref{def:quatient}.
        \item Consider $\pi^* \in \mathbb{P}$ such that $\mathscr{D}(\bm{x}, \bm{y}) = \sum\nolimits_{i = 1}^{k}\dL\!\br{x_{i}, y_{\pi^*(i)}}$.
        Due to symmetry of $\dL$, the latter equals to $\sum\nolimits_{i = 1}^{k}\dL\!\br{y_{\pi^*(i)}, x_{i}} = \mathscr{D}(\bm{y}, \bm{x})$, therefore $\mathscr{D}(\bm{x}, \bm{y}) = \mathscr{D}(\bm{y}, \bm{x})$.
        \item There exist bijections $\pi_1, \pi_2 \in \mathbb{P}$ such that:
        \begin{align*}
        & \mathscr{D}(\bm{x},\bm{y}) + \mathscr{D}(\bm{y},\bm{z}) \\
        & = \sum_{i=1}^k \dL(x_i, y_{\pi_1(i)}) + \sum_{i=1}^k \dL(y_i, z_{\pi_2(i)}) \\
        & = \sum_{i=1}^k \dL(x_i, y_{\pi_1(i)}) + \sum_{i=1}^k \dL(y_{\pi_1(i)}, z_{\pi_2 \circ \pi_1 (i)}) \\
        & \ge \sum_{i=1}^k \dL(x_i, z_{\pi_2 \circ \pi_1 (i)}) \\
        & \ge \min_{\pi \in \mathbb{P}} \set{\sum_{i=1}^k \dL(x_i, z_{\pi(i)})} = \mathscr{D}(\bm{x}, \bm{z}).
        \end{align*}
    \end{enumerate}
    We demonstrated that all the axioms of the metric are upheld, therefore $\textsc{LAP}(\dL, \cdot, \cdot)$ is metric on the space $\mathbb{M}(\dL)$ for any metric $\dL$.
\end{proof}

\end{document}